\pdfoutput=1
\documentclass[11pt]{article}

\PassOptionsToPackage{table}{xcolor}
\PassOptionsToPackage{hyperfootnotes=false}{hyperref}
\usepackage[preprint]{acl}

\usepackage{times}
\usepackage{latexsym}
\usepackage{fontawesome5}
\usepackage{etoolbox}
\usepackage[T1]{fontenc}
\usepackage[utf8]{inputenc}
\usepackage{microtype}
\usepackage{inconsolata}
\usepackage{graphicx}
\usepackage[skip=2pt,belowskip=0pt]{caption}
\usepackage[skip=1pt]{subcaption}
\usepackage{booktabs}
\usepackage{amsmath}
\usepackage{multirow}
\usepackage{xcolor}
\usepackage{amsfonts}
\usepackage{amsthm}
\usepackage{tikz}
\usetikzlibrary{arrows.meta}
\usepackage{stfloats}

\theoremstyle{plain}
\newtheorem{theorem}{Theorem}
\newtheorem{proposition}[theorem]{Proposition}
\newtheorem{lemma}[theorem]{Lemma}
\newtheorem{corollary}[theorem]{Corollary}
\theoremstyle{definition}
\newtheorem{definition}{Definition}

\theoremstyle{remark}
\newtheorem*{remark}{Remark}

\newcommand{\AblBandN}{28}
\newcommand{\AblBandPLo}{0.219}
\newcommand{\AblPrefPHi}{0.83}
\newcommand{\AblPrefPLo}{0.50}

\newcommand{\AchronyHumanTau}{0.363}
\newcommand{\AchronyModelTau}{0.171}
\newcommand{\AnchorsAfter}{0.7}
\newcommand{\AnchorsBefore}{6.3}

\newcommand{\BandMcP}{< 0.001}

\newcommand{\BestBandModel}{Gemini 3.1 Pro}

\newcommand{\BestClassModel}{Gemini 3.6 Flash}

\newcommand{\ChronAgree}{0.878}

\newcommand{\ChronExactId}{88.9}

\newcommand{\ChronTauHi}{0.97}
\newcommand{\ChronTauLo}{0.62}

\newcommand{\CorpusTau}{0.456}
\newcommand{\DedDelta}{-0.005}

\newcommand{\DedExactShare}{42.1}

\newcommand{\DedTauFloor}{0.849}

\newcommand{\DedTauShare}{96.1}
\newcommand{\DedTauTold}{0.884}

\newcommand{\EarlyGZeroN}{56}

\newcommand{\EarlyHumanN}{14}

\newcommand{\EffortMaxBand}{0.143}
\newcommand{\EffortMaxTauRec}{0.025}
\newcommand{\EffortNModels}{4}

\newcommand{\FloorCorr}{0.837}
\newcommand{\FreeSignP}{3.8 \times 10^{-6}}

\newcommand{\GOnePDS}{4.096}
\newcommand{\GOneVsGTwoL}{22}

\newcommand{\GOneVsGTwoW}{23}
\newcommand{\GOneVsGZeroL}{14}
\newcommand{\GOneVsGZeroP}{0.016}

\newcommand{\GOneVsGZeroT}{11}
\newcommand{\GOneVsGZeroW}{31}
\newcommand{\GTwoAgree}{0.998}

\newcommand{\GZeroPDS}{3.850}
\newcommand{\GraphBestDelta}{+0.250}
\newcommand{\GraphBestModel}{Claude Sonnet 5}
\newcommand{\GraphCommonModels}{2}
\newcommand{\GraphInertModels}{2}
\newcommand{\GraphLeakMax}{0.111}
\newcommand{\GraphNJudged}{252}
\newcommand{\GraphNModels}{4}
\newcommand{\GraphNStories}{476}
\newcommand{\GraphWorstDelta}{-0.035}
\newcommand{\GraphWorstModel}{Gemini 3.6 Flash}
\newcommand{\GridCost}{33}

\newcommand{\HumanHopRate}{0.502}
\newcommand{\InterJudgeDiff}{0.044}
\newcommand{\InterJudgeN}{93}
\newcommand{\InterJudgeR}{0.895}
\newcommand{\JudgedBandHi}{83}
\newcommand{\JudgedBandLo}{68}
\newcommand{\LeastChronoModel}{Llama 4 Maverick}
\newcommand{\LeastChronoTau}{0.480}

\newcommand{\MaxChars}{23}

\newcommand{\NAboveHuman}{19}
\newcommand{\NBeatFloor}{7}

\newcommand{\NFewerHops}{18}
\newcommand{\NModels}{19}

\newcommand{\NModelsOrdering}{19}

\newcommand{\NNonRefusers}{16}

\newcommand{\NRefusedShared}{17}

\newcommand{\NWorks}{83}

\newcommand{\PDSCompsN}{5}
\newcommand{\PDSCompsUp}{5}
\newcommand{\PinnedPct}{5.5}

\newcommand{\PoolBandHitPct}{47.9}

\newcommand{\PoolClassifyPct}{50.2}

\newcommand{\PoolContigPct}{97.6}

\newcommand{\PoolHopRateFree}{0.298}
\newcommand{\PoolLisFree}{0.796}

\newcommand{\PoolTauFree}{0.638}

\newcommand{\PoolTauRec}{0.743}

\newcommand{\ProseCuedPDS}{3.60}
\newcommand{\ProseCuedWins}{19}

\newcommand{\ProseFreePDSs}{3.74}
\newcommand{\ProseFreeTau}{0.989}
\newcommand{\ProseFreeWins}{68}
\newcommand{\ProseFullChronoModels}{3}
\newcommand{\ProseLoseAll}{8}
\newcommand{\ProseNModels}{8}
\newcommand{\ProseNStories}{14}
\newcommand{\ProsePairP}{1.2 \times 10^{-7}}
\newcommand{\ProsePairs}{111}
\newcommand{\ProseTies}{24}

\newcommand{\RcgCeiling}{0.12}

\newcommand{\RefusalHi}{13.0}
\newcommand{\RefusalLo}{12.7}
\newcommand{\RetroAgree}{0.764}

\newcommand{\RetroExactRev}{93.6}

\newcommand{\RetroRcg}{0.003}
\newcommand{\RetroRevealPos}{0.84}
\newcommand{\RetroRunLo}{2}
\newcommand{\RetroSceneExact}{5}
\newcommand{\RetroTauHi}{-0.43}
\newcommand{\RetroTauLo}{-0.92}
\newcommand{\RhoBandRec}{0.557}
\newcommand{\RhoClassBand}{0.717}
\newcommand{\RhoClassRec}{0.747}
\newcommand{\RhoCuedTauRec}{0.088}

\newcommand{\SinkHi}{46}
\newcommand{\SinkLo}{29}
\newcommand{\SinkN}{5}

\newcommand{\TauFreeHi}{0.808}
\newcommand{\TauFreeLo}{0.480}

\newcommand{\TauRecHi}{0.892}
\newcommand{\TauRecLo}{0.409}
\newcommand{\TellingWorth}{+0.035}
\newcommand{\TellingWorthMax}{0.151}

\newcommand{\ValidRate}{99.5}

\newcommand{\BandHitHiPct}{60.2}
\newcommand{\BandHitLoPct}{28.9}

\newcommand{\BestClassAccPct}{79.5}
\newcommand{\ChanceClassifyPct}{14.3}
\newcommand{\ChronBandPct}{5.3}
\newcommand{\ChronRecExactPct}{0}

\newcommand{\ConfAchronySelfPct}{7.1}

\newcommand{\DedExactFloorPct}{8.5}
\newcommand{\DedExactToldPct}{20.2}

\newcommand{\EarlyGZeroPct}{21.4}

\newcommand{\GOneBandPct}{51.8}
\newcommand{\GTwoBandPct}{98.2}
\newcommand{\GZeroBandPct}{41.1}
\newcommand{\GuessGOnePct}{67.4}
\newcommand{\GuessGTwoPct}{86.5}
\newcommand{\GuessGZeroPct}{77.3}

\newcommand{\HAblClaudeSonnetN}{147}

\newcommand{\HAblClaudeSonnetPct}{18}
\newcommand{\HAblClaudeSonnetPrel}{<0.0001}

\newcommand{\HAblClaudeSonnetW}{26}

\newcommand{\HAblEpistemicR}{0.271}

\newcommand{\HAblGeminiFlashPct}{47}

\newcommand{\HAblHetChi}{25.0}

\newcommand{\HAblHetPrel}{<0.0001}
\newcommand{\HAblLayerN}{51}
\newcommand{\HAblMDE}{0.19}

\newcommand{\HAblQuestionsR}{0.333}

\newcommand{\HAdjBetaBand}{-0.58}

\newcommand{\HAdjBetaTau}{+0.70}

\newcommand{\HAdjPBandrel}{<0.0001}

\newcommand{\HAlphaBetter}{0.415}
\newcommand{\HAlphaBetterCI}{[0.359, 0.471]}
\newcommand{\HAlphaBetterDec}{0.595}
\newcommand{\HAlphaBetterN}{311}
\newcommand{\HAlphaConf}{0.112}

\newcommand{\HAlphaEnd}{0.073}

\newcommand{\HAlphaKindHi}{0.55}
\newcommand{\HAlphaKindLo}{0.21}

\newcommand{\HBandDepthBeta}{+0.12}

\newcommand{\HBandDepthPrel}{=0.13}
\newcommand{\HBandSplitN}{381}

\newcommand{\HBandSplitPct}{34}
\newcommand{\HBandSplitPrel}{<0.0001}

\newcommand{\HCatRho}{0.18}

\newcommand{\HCatRhoPrel}{=0.70}
\newcommand{\HChronCI}{[0.497, 0.562]}

\newcommand{\HChronEasiest}{Chronicle}
\newcommand{\HChronEasiestTau}{0.707}
\newcommand{\HChronExact}{2.7}

\newcommand{\HChronHardest}{Zigzag and Achrony}
\newcommand{\HChronHardestTau}{0.424}
\newcommand{\HChronN}{294}
\newcommand{\HChronSCI}{[0.488, 0.582]}

\newcommand{\HChronTau}{0.537}
\newcommand{\HConfConsensusN}{227}

\newcommand{\HConfConsensusPrel}{<0.0001}
\newcommand{\HConfConsensusR}{+0.26}
\newcommand{\HConfMean}{4.21}
\newcommand{\HConfSd}{0.85}
\newcommand{\HConfTauDelta}{-0.171}
\newcommand{\HConfTauN}{918}

\newcommand{\HConfTauPrel}{<0.0001}
\newcommand{\HConfTauR}{-0.34}

\newcommand{\HCuedBetterCI}{[0.185, 0.304]}
\newcommand{\HCuedBetterInvPct}{76}
\newcommand{\HCuedBetterL}{200}

\newcommand{\HCuedBetterT}{30}
\newcommand{\HCuedBetterW}{64}

\newcommand{\HCuedConfPct}{81}

\newcommand{\HCuedLMPct}{30}

\newcommand{\HCuedLongerPct}{57}

\newcommand{\HCuedModelsN}{8}
\newcommand{\HCuedReadersAgainst}{49}

\newcommand{\HCuedReadersN}{68}

\newcommand{\HCuedWordsHyp}{747}
\newcommand{\HCuedWordsOther}{711}

\newcommand{\HDepthTauR}{-0.06}
\newcommand{\HExpoBetterBeta}{+0.040}

\newcommand{\HExpoBetterPrel}{=0.36}
\newcommand{\HExpoConfBeta}{-0.025}

\newcommand{\HExpoConfPrel}{=0.70}
\newcommand{\HExpoMax}{6}
\newcommand{\HFloorFourteen}{0.833}

\newcommand{\HGAblBetterInvPct}{69}

\newcommand{\HGAblBetterN}{259}

\newcommand{\HGAblBetterPrel}{<0.0001}

\newcommand{\HGAblConfPct}{62}
\newcommand{\HGAblConfPrel}{=0.0004}

\newcommand{\HGAblLMPct}{31}

\newcommand{\HGNoneAdjPct}{52}
\newcommand{\HGNoneAdjPrel}{=0.61}

\newcommand{\HGNoneBetterPct}{59}
\newcommand{\HGNoneBetterPrel}{=0.0028}

\newcommand{\HGNoneLMPct}{54}

\newcommand{\HGNoneLenBeta}{+0.67}

\newcommand{\HGNoneLenBetaPrel}{=0.0003}

\newcommand{\HGNoneLongerPct}{61}

\newcommand{\HGNoneWordsHyp}{649}
\newcommand{\HGNoneWordsOther}{603}

\newcommand{\HGOrderAdjPct}{71}

\newcommand{\HGOrderBetterCI}{[0.671, 0.771]}
\newcommand{\HGOrderBetterInvPct}{28}
\newcommand{\HGOrderBetterL}{70}

\newcommand{\HGOrderBetterPct}{72}
\newcommand{\HGOrderBetterPrel}{<0.0001}
\newcommand{\HGOrderBetterR}{0.721}

\newcommand{\HGOrderBetterW}{181}

\newcommand{\HGOrderConfPrel}{=0.69}

\newcommand{\HGOrderEndR}{0.571}

\newcommand{\HGOrderModelsFor}{3}
\newcommand{\HGOrderModelsN}{4}

\newcommand{\HGOrderReadersFor}{53}
\newcommand{\HGOrderReadersN}{68}
\newcommand{\HGOrderStoriesFor}{12}
\newcommand{\HGOrderStoriesN}{14}

\newcommand{\HGapCuedPts}{2}

\newcommand{\HJHCuedKappa}{+0.278}

\newcommand{\HJHGAblCI}{[-0.23, +0.18]}
\newcommand{\HJHGAblKappa}{-0.030}

\newcommand{\HJHGNoneCI}{[+0.28, +0.91]}
\newcommand{\HJHGNoneKappa}{+0.625}

\newcommand{\HJHGOrderKappa}{+0.242}

\newcommand{\HJHIndAgree}{0.679}
\newcommand{\HJHIndN}{779}
\newcommand{\HJHPoolCI}{[+0.17, +0.43]}
\newcommand{\HJHPoolKappa}{+0.306}

\newcommand{\HJHPoolRaw}{0.662}

\newcommand{\HJudgeGOrderN}{45}

\newcommand{\HJudgeGOrderR}{0.511}

\newcommand{\HJudgeGOrderTail}{0.0022}

\newcommand{\HLitChance}{42}
\newcommand{\HLitCorrect}{1{,}430}
\newcommand{\HLitEnd}{88}

\newcommand{\HLitJumps}{69}

\newcommand{\HLitN}{1{,}764}
\newcommand{\HLitOpen}{86}

\newcommand{\HLitOverall}{81.1}

\newcommand{\HMachineCuedPct}{55}

\newcommand{\HMachineGOnePct}{36}

\newcommand{\HMachineGZeroPct}{38}

\newcommand{\HMachinePrel}{=0.0033}

\newcommand{\HModelTauFourteen}{0.699}
\newcommand{\HModelsAboveHuman}{16}
\newcommand{\HModelsN}{19}

\newcommand{\HNCommentsCued}{186}
\newcommand{\HNCommentsCuedOrder}{50}

\newcommand{\HNJudgments}{3{,}528}

\newcommand{\HNReaders}{71}

\newcommand{\HNShapeStories}{83}
\newcommand{\HNSittings}{294}
\newcommand{\HNStories}{14}

\newcommand{\HPdsCorr}{0.43}

\newcommand{\HPdsCorrN}{193}

\newcommand{\HPdsCued}{3.31}

\newcommand{\HPdsGOne}{3.90}

\newcommand{\HPdsGOneVsCuedD}{0.87}

\newcommand{\HPdsGOneVsCuedPrel}{<0.0001}
\newcommand{\HPdsGOneVsGZeroD}{0.51}

\newcommand{\HPdsGZero}{3.57}

\newcommand{\HPdsGap}{+0.28}

\newcommand{\HPdsGapComplex}{+0.81}
\newcommand{\HPdsGapEngage}{+0.52}

\newcommand{\HReadsMonthly}{43}

\newcommand{\HRepeatBetterChancePct}{39}

\newcommand{\HRepeatBetterDecPct}{93}
\newcommand{\HRepeatBetterKappa}{0.601}

\newcommand{\HRepeatBetterPct}{76}

\newcommand{\HRepeatConfChancePct}{38}

\newcommand{\HRepeatConfPct}{62}

\newcommand{\HRepeatEndChancePct}{36}

\newcommand{\HRepeatEndPct}{57}
\newcommand{\HRepeatPairs}{442}
\newcommand{\HRepeatPairsN}{828}
\newcommand{\HRtCued}{217}
\newcommand{\HRtGOne}{188}
\newcommand{\HRtGZero}{179}
\newcommand{\HRtIqr}{154--246}
\newcommand{\HRtMedian}{199}
\newcommand{\HRtMinReader}{99}
\newcommand{\HSideA}{1{,}400}
\newcommand{\HSideB}{1{,}511}
\newcommand{\HSideN}{3{,}528}

\newcommand{\HSidePrel}{=0.0415}
\newcommand{\HSideTie}{617}

\newcommand{\HSpoilerCuedPct}{20}

\newcommand{\HSpoilerGZeroPct}{17}

\newcommand{\HTopTwentyNonChrono}{19}

\newcommand{\HViolCued}{0.99}

\newcommand{\HViolCuedVsGZeroPrel}{=0.0418}
\newcommand{\HViolGOne}{0.73}
\newcommand{\HViolGZero}{0.70}

\newcommand{\HWriteDegree}{10}

\newcommand{\HWriteSome}{30}
\newcommand{\MajorityClassifyPct}{26.5}

\newcommand{\ProseFreeBandPct}{1.8}
\newcommand{\RandomBandHitPct}{8.4}
\newcommand{\RetroBandPct}{1.1}

\newcommand{\ShapeTauChronicle}{+0.96}

\newcommand{\ShapeTauAnalepsis}{+0.68}

\newcommand{\ShapeTauProlepsis}{+0.75}

\newcommand{\ShapeTauZigzag}{+0.43}

\newcommand{\ShapeTauSyllepsis}{+0.62}

\newcommand{\ShapeTauRetrograde}{-0.82}

\newcommand{\ShapeTauAchrony}{+0.26}

\newcommand{\NNonlinearWorks}{73}
\newcommand{\CuedNValid}{1{,}569}

\newcommand{\RetroRevN}{176}   %
\newcommand{\RetroValidN}{188}
\newcommand{\ChronIdN}{169}    %
\newcommand{\ChronValidN}{190}
\newcommand{\OtherValidN}{1{,}191} %
\newcommand{\OtherRevN}{5}
\newcommand{\OtherIdN}{54}
\newcommand{\BandHitNoRulePct}{62.1}
\newcommand{\BandHitCIHalf}{0.11}
\newcommand{\BandHitCIOverlap}{17}
\newcommand{\LowoBandLo}{0.423}
\newcommand{\LowoBandHi}{0.479}
\newcommand{\LowoMovers}{7}      %
\newcommand{\HeldOutInBand}{64}  %
\newcommand{\HeldOutInBandPct}{77}
\newcommand{\GZeroAgree}{0.554}
\newcommand{\GOneAgree}{0.554}
\newcommand{\ZigzagAgree}{0.396}
\newcommand{\HJudgeGOrderTailLo}{0.019}
\newcommand{\EarlyPairedP}{0.0078}
\newcommand{\EarlyPairedDisc}{8}
\newcommand{\EarlyCleanGZeroPct}{8.3}
\newcommand{\EarlyCleanGOnePct}{8.5}
\newcommand{\EarlyCleanP}{0.125}
\newcommand{\EarlyGOneN}{55}         %
\newcommand{\EarlyGTwoN}{56}         %
\newcommand{\EarlyNoRevealN}{13}     %
\newcommand{\EarlyGZeroGOneDisc}{4}  %
\newcommand{\EarlyCleanGZeroK}{4}
\newcommand{\EarlyCleanGZeroN}{48}
\newcommand{\EarlyCleanGOneK}{4}
\newcommand{\EarlyCleanGOneN}{47}
\newcommand{\EarlyCleanGTwoN}{48}
\newcommand{\EarlyCleanHumanN}{12}
\newcommand{\EarlyCurveGZeroLo}{3.8}
\newcommand{\EarlyCurveGZeroHi}{34.6}
\newcommand{\EarlyCurveGTwoHi}{7.7}
\newcommand{\CuedRevN}{181}
\newcommand{\CuedIdN}{223}
\newcommand{\HByCatCuedChron}{0.32}
\newcommand{\HByCatCuedChronN}{37}
\newcommand{\HByCatCuedChronP}{0.047}
\newcommand{\HByCatCuedRetro}{0.48}
\newcommand{\HByCatOrderChron}{0.65}
\newcommand{\HByCatOrderChronN}{31}
\newcommand{\HSubsetMaxMovePts}{7}

\newcommand{\ChronBandTauHi}{0.974}
\newcommand{\RetroBandTauLo}{-0.923}
\newcommand{\BaseIdChronHit}{0}      %
\newcommand{\BaseRevRetroHit}{0}     %
\newcommand{\BaseSceneRetroHit}{6}   %
\newcommand{\BaseScenePooledPct}{7.2}  %
\newcommand{\BaseBlockPct}{9.6}        %
\newcommand{\RandomMCPct}{5.6}
\newcommand{\RandomDraws}{5{,}000}
\newcommand{\RandomMCHi}{0.269}
\newcommand{\RandomMCLo}{0.000}
\newcommand{\CollideN}{9}
\newcommand{\CollideCells}{136}
\newcommand{\CollideHops}{112}
\newcommand{\CollideLabels}{69}
\newcommand{\FloorPairAcc}{92.5}
\newcommand{\ToldPairAcc}{94.2}
\newcommand{\HNPresentations}{1{,}176}
\newcommand{\HNFirstPres}{882}

\newcommand{\SceneNWorks}{10}
\newcommand{\SceneNCued}{196}
\newcommand{\SceneHumanWithin}{0.0}
\newcommand{\SceneHumanBetween}{94.0}
\newcommand{\SceneCuedWithin}{98.0}
\newcommand{\SceneCuedBetween}{98.8}
\newcommand{\SceneCuedAllRev}{93.9}

\definecolor{Indigo}{HTML}{211cbf}
\definecolor{Cyan}{HTML}{74d2fa}
\definecolor{Violet}{HTML}{7A4DFF}
\definecolor{Magenta}{HTML}{ec8dff}
\definecolor{LavenderOne}{HTML}{A7A8F0}
\definecolor{LavenderTwo}{HTML}{9194E4}
\definecolor{LavenderThree}{HTML}{8180DE}
\definecolor{LavenderFour}{HTML}{6E65CE}
\definecolor{LavenderFive}{HTML}{5445B5}
\colorlet{bpink}{Magenta!38}      %
\colorlet{bcyan}{Cyan!40}         %
\colorlet{blav}{Violet!26}        %
\colorlet{blemon}{LavenderOne!40} %
\colorlet{inkmag}{Violet}
\colorlet{inkblue}{Indigo}
\colorlet{inkteal}{LavenderFive}
\colorlet{inkviol}{LavenderFour}
\newcommand{\keycell}[1]{\cellcolor{blav}\textbf{#1}}
\newcommand{\poscell}[1]{\cellcolor{bcyan}#1}
\newcommand{\negcell}[1]{\cellcolor{bpink}#1}
\newcommand{\warncell}[1]{\cellcolor{blemon}#1}
\newcommand{\keyrow}{\rowcolor{blav!50}}
\newcommand{\cswatch}[2]{%
	\colorbox{#1}{\scriptsize\,#2\,}}
\newcommand{\cink}[2]{\textcolor{#1}{\textbf{#2}}}
\newcommand{\cKey}{\cswatch{blav}{violet}}
\newcommand{\cPos}{\cswatch{bcyan}{cyan}}
\newcommand{\cNeg}{\cswatch{bpink}{magenta}}
\newcommand{\cWarn}{\cswatch{blemon}{lavender}}
\newcommand{\hd}[1]{\textbf{#1}}
\newcommand{\fitto}[2]{%
	\resizebox{\ifdim\width>#1 #1\else\width\fi}{!}{#2}}
\newcommand{\fitcol}[1]{\fitto{\columnwidth}{#1}}
\newcommand{\fitfull}[1]{\fitto{\textwidth}{#1}}

\newcommand{\fabula}{\textit{fabula}}
\newcommand{\syuzhet}{\textit{syuzhet}}
\DeclareRobustCommand{\Task}[1]{\ensuremath{\mathrm{T}_{#1}}}
\DeclareRobustCommand{\ProseTask}[1]{\ensuremath{\mathrm{T}_{5}^{\mathrm{#1}}}}
\DeclareRobustCommand{\GraphCondition}[1]{\ensuremath{\mathrm{G}_{#1}}}
\DeclareRobustCommand{\Ablation}[1]{\ensuremath{\mathrm{A}_{#1}}}
\newcommand{\micon}[1]{%
	\makebox[2.5ex][l]{\raisebox{-0.15\height}{\includegraphics[height=2.1ex]{logos/#1.pdf}}}%
	\hspace{0.4em}%
}
\newcommand{\noicon}{\makebox[2.2ex]{}\hspace{0.4em}}  %

\newcommand{\FigPlane}{%
	\begin{figure*}[t]
		\includegraphics[width=\textwidth]{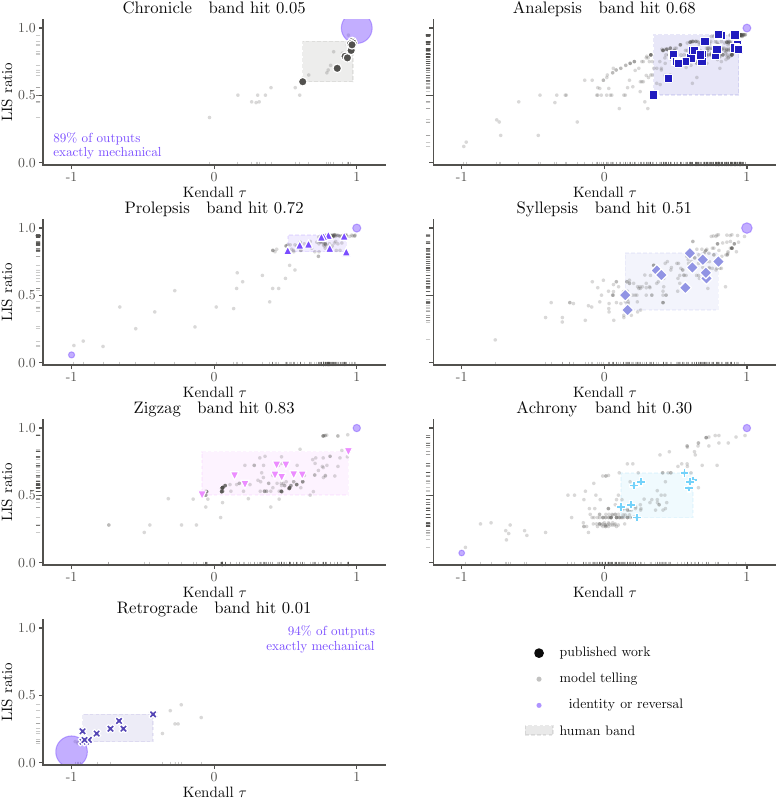}
        \caption{The structural plane, one panel per requested structure: the
        $(\tau, \mathrm{LIS})$ region the gold annotations occupy (dashed box), those
        works (large marks), and every valid cued telling (small dots). Outputs that
        are the exact identity or reversal stack on one coordinate and are drawn once,
        in \cink{Violet}{violet}, with area proportional to their share. The two
        failing categories are dense stacks at the ends of the scale, not near-misses.
        Ticks along the two axes give the marginal distribution of the valid
        tellings, which the overlapping cloud inside the panel obscures.}
        \label{fig:plane}
\end{figure*}}

\newcommand{\FigFloor}{%
	\begin{figure}[t]
		\centering
		\includegraphics[width=\columnwidth]{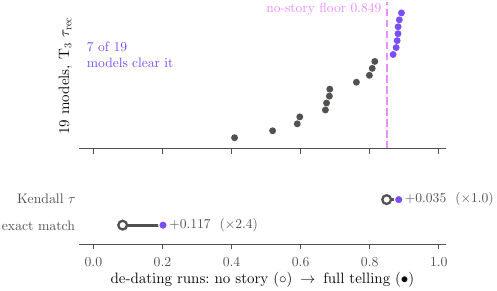}
		\caption{The no-story floor. Top: \Task{3} $\tau_{\mathrm{rec}}$ for each model, against
		a solver given the events with the telling deleted (dashed). The two come
		from separate runs, so the panel calibrates the scores and does not rank
		the models. Bottom: the same solvers with and without the telling, from
		the de-dating runs.}
		\label{fig:floor}
		\vspace{-4mm}
\end{figure}}

\newcommand{\FigGraphArm}{%
	\begin{figure}[t]
		\centering
		\includegraphics[width=\columnwidth]{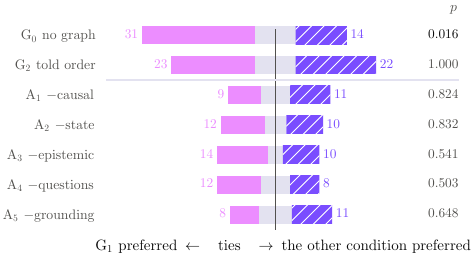}
		\caption{Judge verdicts for every scaffolding comparison: blind
			judgment counts with ties centered, and two-sided exact binomial $p$ over
			decisive pairs. \GraphCondition{1} beats \GraphCondition{0} and no other difference is
			detected. Readers revise two of these rows (\S\ref{sec:human}).}
		\label{fig:grapharm}
\end{figure}}

\newcommand{\FigHumanJudge}{%
	\begin{figure}[t]
		\centering
		\includegraphics[width=\columnwidth]{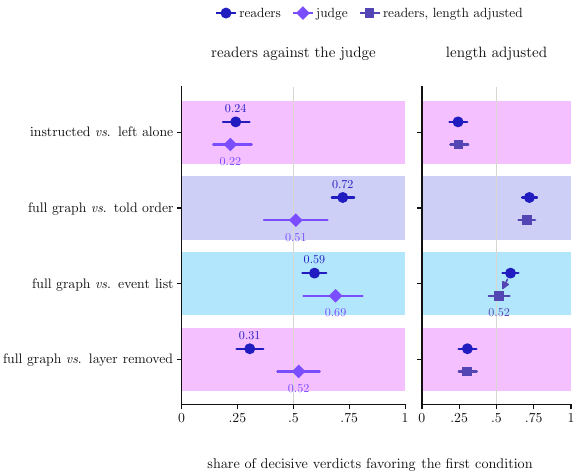}
		\caption{Share of decisive verdicts favoring the first-named condition, from
		\HNReaders{} readers and from the judge, with 95\% intervals
		(reader-clustered bootstrap for readers, exact binomial for the judge).
		The right panel repeats the reader rates after length adjustment. The
		judge's rate matches the readers' on instructed versus free and on graph
		versus event list. On the other two comparisons, graph versus told order
		and graph versus one layer removed, the judge returns ties, and these are
		where readers found the clearest differences.}
		\label{fig:humanjudge}
\end{figure}}

\newcommand{\FigReveal}{%
	\begin{figure}[t]
		\centering
		\includegraphics[width=0.92\columnwidth]{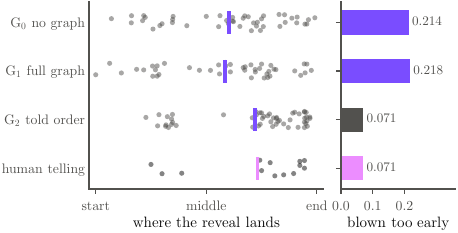}
		\caption{Premature disclosure (\S\ref{sec:reveal}) as the scorer counts it. Left:
		where each story narrates the reveal, with a tick at the condition mean.
		Right: share of stories with fewer than three events before the reveal.
		The bars include a work whose graph designates no reveal, which is scored
		early in every condition, and a chronicle whose explanation any
		chronological telling narrates third. \S\ref{sec:rcg} gives the
		rates without these two works.}
		\label{fig:reveal}
\end{figure}}

\title{Can LLMs Tell Stories Out of Order?\\
	A Corpus and Benchmark for Nonlinear Narrative Generation}

\makeatletter
\AtBeginDocument{%
    \patchcmd{\@maketitle}{\vskip 0.2in plus 1fil minus 0.1in}{\vskip 0.07in}{}%
        {\PackageError{preprint}{Could not adjust the title-author gap}{}}%
    \patchcmd{\@maketitle}{\vskip 0.3in plus 2fil minus 0.1in}{\vskip 0.1in plus 1fil}{}%
        {\PackageError{preprint}{Could not adjust the space below the authors}{}}%
}
\makeatother
\author{%
    Syed Rifat Raiyan$^{1*}$,
    Md.\ Tariquzzaman$^{1*}$,
    Md Farhan Ishmam$^{2\dagger}$,
    Tenghao Huang$^{3}$\\%
    {\normalsize\renewcommand{\arraystretch}{0.96}%
    \begin{tabular}[t]{@{}c@{}}
        $^1$Islamic University of Technology, Bangladesh\\
        $^2$University of Utah, USA\qquad $^3$University of Southern California, USA\\[1.5pt]
        \small\texttt{\{rifatraiyan, tariquzzaman\}@iut-dhaka.edu, farhan.ishmam@utah.edu, tenghaoh@usc.edu}\\[1pt]
        \small\faGithub\;\href{https://github.com/Starscream-11813/LLM-nonlinear-storytelling}{\texttt{Starscream-11813/LLM-nonlinear-storytelling}}\qquad
        \faGlobeAmericas\;\href{https://nonlinear-narrative.github.io/}{\texttt{nonlinear-narrative.github.io}}
    \end{tabular}}%
}
\hypersetup{
    pdftitle={Can LLMs Tell Stories Out of Order? A Corpus and Benchmark for Nonlinear Narrative Generation},
    pdfauthor={Syed Rifat Raiyan, Md. Tariquzzaman, Md Farhan Ishmam, Tenghao Huang}
}

\begin{document}
	\maketitle
    \def\thefootnote{*}\footnotetext{Equal contribution}
    \def\thefootnote{\textdagger}\footnotetext{Corresponding author}
    \def\thefootnote{\arabic{footnote}}
	\begin{abstract}
			Every story has two orders: the order in which its events happen and the order in which they are told. A story is nonlinear when the two differ. We introduce a benchmark of \NWorks{} published works that measures whether language models can tell nonlinear stories. For each work, we annotate one list of events in both orders, together with a story graph of causal dependencies, character knowledge, and the questions the telling raises and resolves.
            Given events in chronological order, all \NModelsOrdering{} models tell them, on average, more chronologically than the published works do; asked for a specific structure, they land within the human range about half the time. We then compare a model's stories of the same events under different conditions, using both an automated judge and \HNReaders{} human readers. Both prefer the chronological story a model writes by default to the nonlinear one it writes on request, which readers find harder to follow in \HCuedConfPct\% of decisive judgments. Readers also prefer stories written from the story graph to stories constrained to the published telling order (\HGOrderBetterPct\% of decisive judgments), a difference the judge does not detect. Our results suggest that following an instruction to reorder a story is not yet the same as telling it well out of order.

	\end{abstract}

    \begin{figure}
        \centering
        \begingroup

\definecolor{teaserPublished}{RGB}{33,28,191}
\definecolor{teaserBackward}{RGB}{122,77,255}
\definecolor{teaserUncued}{RGB}{24,136,178} %
\definecolor{teaserPublishedPlane}{RGB}{221,219,248}
\definecolor{teaserBackwardPlane}{RGB}{230,218,255}
\definecolor{teaserUncuedPlane}{RGB}{221,244,254}
\definecolor{teaserHighlight}{RGB}{247,214,228}
\definecolor{teaserGrid}{RGB}{216,214,208}
\definecolor{teaserAxis}{RGB}{17,17,17}
\definecolor{teaserTick}{RGB}{102,102,102}

\def\teaserPublishedOrder{11,12,9,10,8,7,6,5,3,4,1,2}
\def\teaserBackwardOrder{12,11,10,9,8,7,6,5,4,3,2,1}
\def\teaserUncuedOrder{1,2,3,4,6,5,7,8,9,10,11,12}

\font\teaserFormulaRoman=cmr10 at 6.8bp
\font\teaserFormulaItalic=cmmi10 at 6.8bp
\font\teaserFormulaSymbol=cmsy10 at 6.8bp
\font\teaserTellingItalic=cmmi10 at 6.5bp
\font\teaserChronologyItalic=cmmi10 at 6.2bp

\ExplSyntaxOn
\cs_set_protected:Npn \teaserUnkerned #1
  {
    \str_map_inline:nn {#1}
      {
        \str_if_eq:nnTF {##1} { ~ }
          { \hskip \fontdimen2\font }
          { \hbox{##1} }
      }
  }
\cs_set_protected:Npn \teaserValue #1
  {
    \str_map_inline:nn {#1}
      {
        \str_case:nnF {##1}
          {
            {.}{\hbox~to~1.881744bp{\teaserFormulaItalic\char"3A\hss}}
            {-}{\hbox~to~5.280182bp{\teaserFormulaSymbol\char"00\hss}}
            {+}{\hbox~to~5.280182bp{\teaserFormulaRoman+\hss}}
          }
          {\hbox~to~3.398438bp{\teaserFormulaRoman##1\hss}}
      }
  }
\ExplSyntaxOff

\newcommand{\teaserLabel}[6]{%
  \node[anchor=base west,inner sep=0pt,outer sep=0pt,text=#1,rotate=#5]
    at (#2,#3) {\font\teaserLabelFont=cmr10 at #4bp\relax
      \teaserLabelFont\teaserUnkerned{#6}};%
}
\newcommand{\teaserTau}[4]{%
  \node[anchor=base west,inner sep=0pt,outer sep=0pt,text=#1] at (#2,#3)
    {\hbox to 4.163746bp{\teaserFormulaItalic\char"1C\hss}%
     \hbox to 6.473622bp{\teaserFormulaRoman=\hss}%
     \teaserValue{#4}};%
}

\begin{tikzpicture}[x=1bp,y=-1bp,
  teaser plane/.style={line width=0.45bp,draw opacity=0.40,
    fill opacity=0.17,line cap=butt,line join=miter},
  teaser grid/.style={draw=teaserGrid,line width=0.30bp,
    line cap=round,line join=round},
  teaser trajectory/.style={line width=0.95bp,line cap=round,line join=round},
  teaser marker/.style={line width=0.45bp,line cap=butt,line join=miter},
  teaser leader/.style={line width=0.35bp,line cap=round,line join=round},
  teaser highlight/.style={line width=0.40bp,line cap=butt,line join=miter}]

\path[fill=white] (0,0) rectangle (218.16,162);
\begin{scope}
\clip (0,0) rectangle (218.16,162);

\newcommand{\teaserPlane}[3]{%
  \begin{scope}[shift={(0,#1)}]
    \path[teaser plane,draw=#2,fill=#3]
      (10.08,109.44) -- (105.608727,145.186569)
      -- (136.587361,123.226153) -- (41.058634,87.479584) -- cycle;
    \draw[teaser grid] (18.528718,103.450796) -- (114.057445,139.197365);
    \draw[teaser grid] (36.133289,119.189064) -- (67.111923,97.228649);
    \draw[teaser grid] (29.793676,95.465190) -- (125.322403,131.211759);
    \draw[teaser grid] (70.871008,132.187817) -- (101.849642,110.227401);
  \end{scope}%
}
\teaserPlane{0}{teaserPublished}{teaserPublishedPlane}
\teaserPlane{-41.76}{teaserBackward}{teaserBackwardPlane}
\teaserPlane{-83.52}{teaserUncued}{teaserUncuedPlane}

\newcommand{\teaserCoordinates}[3]{%
  \foreach \rank [count=\event] in #3 {%
    \coordinate (#1-\event) at
      ({10.08+(\event-1)*95.528727/11+(\rank-1)*30.978634/11},
       {109.44+#2+(\event-1)*35.746569/11-(\rank-1)*21.960416/11});
  }%
}
\teaserCoordinates{published}{0}{\teaserPublishedOrder}
\teaserCoordinates{backward}{-41.76}{\teaserBackwardOrder}
\teaserCoordinates{uncued}{-83.52}{\teaserUncuedOrder}

\path[teaser highlight,draw=teaserPublished,fill=teaserHighlight,fill opacity=0.65]
  (90.719223,134.081207)
  .. controls (92.936958,134.322876) and (95.087242,134.342926) .. (96.696487,134.136917)
  .. controls (98.305740,133.930908) and (99.242569,133.515656) .. (99.300659,132.982635)
  .. controls (99.358742,132.449615) and (98.533340,131.842331) .. (97.006241,131.294540)
  .. controls (95.479134,130.746750) and (93.375015,130.303162) .. (91.157272,130.061478)
  .. controls (88.939537,129.819809) and (86.789261,129.799774) .. (85.180008,130.005783)
  .. controls (83.570763,130.211792) and (82.633934,130.627029) .. (82.575844,131.160049)
  .. controls (82.517761,131.693069) and (83.343163,132.300354) .. (84.870262,132.848145)
  .. controls (86.397369,133.395950) and (88.501488,133.839523) .. (90.719223,134.081207)
  -- cycle;
\path[teaser highlight,draw=teaserUncued,fill=teaserUncuedPlane,fill opacity=0.40]
  (60.285915,33.290344)
  .. controls (61.613338,34.477051) and (63.049587,35.432961) .. (64.278343,35.947563)
  .. controls (65.507095,36.462158) and (66.428032,36.493431) .. (66.838326,36.034477)
  .. controls (67.248619,35.575531) and (67.114777,34.663834) .. (66.466263,33.500183)
  .. controls (65.817757,32.336533) and (64.707535,31.015930) .. (63.380108,29.829239)
  .. controls (62.052685,28.642532) and (60.616436,27.686615) .. (59.387684,27.172012)
  .. controls (58.158928,26.657425) and (57.237991,26.626144) .. (56.827698,27.085098)
  .. controls (56.417400,27.544052) and (56.551247,28.455750) .. (57.199757,29.619400)
  .. controls (57.848267,30.783051) and (58.958488,32.103653) .. (60.285915,33.290344)
  -- cycle;

\newcommand{\teaserCircle}[2]{%
  \begin{scope}[shift={(#1)}]
    \path[teaser marker,draw=#2,fill=#2]
      (0,1.15)
      .. controls (0.304985,1.15) and (0.597520,1.028829) .. (0.813173,0.813173)
      .. controls (1.028829,0.597520) and (1.15,0.304985) .. (1.15,0)
      .. controls (1.15,-0.304985) and (1.028829,-0.597520) .. (0.813173,-0.813173)
      .. controls (0.597520,-1.028829) and (0.304985,-1.15) .. (0,-1.15)
      .. controls (-0.304985,-1.15) and (-0.597520,-1.028829) .. (-0.813173,-0.813173)
      .. controls (-1.028829,-0.597520) and (-1.15,-0.304985) .. (-1.15,0)
      .. controls (-1.15,0.304985) and (-1.028829,0.597520) .. (-0.813173,0.813173)
      .. controls (-0.597520,1.028829) and (-0.304985,1.15) .. (0,1.15) -- cycle;
  \end{scope}%
}
\newcommand{\teaserSquare}[2]{%
  \begin{scope}[shift={(#1)}]
    \path[teaser marker,draw=#2,fill=#2] (-1.15,-1.15) rectangle (1.15,1.15);
  \end{scope}%
}
\newcommand{\teaserTriangle}[2]{%
  \begin{scope}[shift={(#1)}]
    \path[teaser marker,draw=#2,fill=#2]
      (0,-1.15) -- (-1.15,1.15) -- (1.15,1.15) -- cycle;
  \end{scope}%
}
\draw[teaser trajectory,draw=teaserPublished]
  (published-1) \foreach \event in {2,...,12} {-- (published-\event)};
\foreach \event in {1,...,12} {\teaserCircle{published-\event}{teaserPublished}}

\draw[teaser trajectory,draw=teaserBackward,line cap=butt,
  dash pattern=on 2.945bp off 1.425bp]
  (backward-1) \foreach \event in {2,...,12} {-- (backward-\event)};
\foreach \event in {1,...,12} {\teaserSquare{backward-\event}{teaserBackward}}

\draw[teaser trajectory,draw=teaserUncued]
  (uncued-1) \foreach \event in {2,...,12} {-- (uncued-\event)};
\foreach \event in {1,...,12} {\teaserTriangle{uncued-\event}{teaserUncued}}

\draw[teaser leader,draw=teaserUncued]
  (139.68,22.32) -- (134.64,22.32) -- (64.754105,31.189087);
\draw[teaser leader,draw=teaserBackward]
  (139.68,68.40) -- (134.64,68.40) -- (73.333679,74.573074);
\draw[teaser leader,draw=teaserPublished]
  (139.68,111.60) -- (134.64,111.60) -- (94.214119,130.536819);

\teaserLabel{teaserUncued}{143.28}{15.581253}{6.9}{0}{No backward cue}
\teaserLabel{teaserUncued}{143.28}{26.757507}{6.3}{0}{One backward step}
\teaserTau{teaserUncued}{143.28}{38.872498}{+0.97}

\teaserLabel{teaserBackward}{143.28}{60.941254}{6.9}{0}{Backward cue}
\teaserLabel{teaserBackward}{143.28}{72.055000}{6.3}{0}{Exact event reversal}
\teaserTau{teaserBackward}{143.28}{84.169998}{-1.00}

\teaserLabel{teaserPublished}{143.28}{104.861252}{6.9}{0}{Published telling}
\teaserLabel{teaserPublished}{143.28}{115.974998}{6.3}{0}{Forward scene run}
\teaserTau{teaserPublished}{143.28}{128.089996}{-0.88}

\teaserLabel{teaserAxis}{24.682627}{125.597466}{6.5}{-20.515779}{telling (}
\node[anchor=base west,inner sep=0pt,outer sep=0pt,text=teaserAxis,rotate=-20.515779]
  at (45.618168,133.431488) {\teaserTellingItalic\teaserUnkerned{syuzhet}};
\teaserLabel{teaserAxis}{66.295097}{141.168732}{6.5}{-20.515779}{)}

\teaserLabel{teaserAxis}{7.232974}{96.047554}{6.2}{35.330000}{chronology}
\teaserLabel{teaserAxis}{13.849171}{99.842064}{6.2}{35.330000}{(}
\node[anchor=base west,inner sep=0pt,outer sep=0pt,text=teaserAxis,rotate=35.330000]
  at (15.808655,98.453011) {\teaserChronologyItalic\teaserUnkerned{fabula}};
\teaserLabel{teaserAxis}{30.160950}{88.278824}{6.2}{35.330000}{)}

\teaserLabel{teaserTick}{3.946875}{111.245316}{5.8}{0}{1}
\teaserLabel{teaserTick}{96.222473}{155.271881}{5.8}{0}{12}
\teaserLabel{teaserTick}{35.992382}{84.244896}{5.8}{0}{12}

\end{scope}
\pgfresetboundingbox
\path[use as bounding box] (0bp,0bp) rectangle (218.16bp,-162bp);
\end{tikzpicture}%
\endgroup
        \caption{Without a cue, the model stays near chronology; a backward cue
        yields exact reversal, while the published telling preserves forward
        scene runs. Kendall's $\tau$ measures agreement with chronology:
        $+1$ denotes chronological order and $-1$ exact reversal.}
        \label{fig:teaser-a}
        \vspace{-10pt}
    \end{figure}

	\vspace{-2mm}
	\section{Introduction}
\label{sec:intro}

A story doesn't need to start at the inception of its chronology.
The sequence in which a teller presents events may depart from the order in
which they actually happened, producing a nonlinear narrative
\citep{genette1980narrative}.
\emph{Memento} interleaves a backward strand with a forward one;
\emph{Citizen Kane} opens at the end of its protagonist's life and
reconstructs that life through flashbacks \citep{singer2016citizenkane}.
In both, much of the effect lies in the order in which we are allowed to
learn the events: the telling decides what a reader knows at each point
and how each new scene is read. We call the structural pattern of a
telling order its \emph{narrative shape}.

Story generation research has built systems that plan stories, revise
long drafts and build suspense
\citep{fan-etal-2018-hierarchical,yao2019plan,yang-etal-2022-re3,xie-riedl-2024-creating},
and some add flashbacks or rewrite a chronological draft into a planned
telling order \citep{han-etal-2022-go,yu-etal-2025-multi}. Yet the
telling order is handled tacitly, planned or prompted for but seldom
recorded, and model-written stories remain less structurally diverse than
human ones \citep{tian-etal-2024-large}. Without that record, the two
properties that make a story nonlinear cannot even be stated: a revelation
counts as one only if the explanation is narrated after the event it
explains, and a question withholds something only if its answer arrives
after it is asked. We therefore record both orders for \NWorks{} published
works, with \emph{story graphs} whose typed edges encode these relations
and whose nodes record world state and what characters and readers know.

We ask five questions:
\begin{description}\itemsep0pt\parskip0pt
    \item[RQ1] \emph{Controllability.} Can a model set the order in which it
    narrates a fixed set of events, unprompted and when instructed?
    (\S\ref{sec:capability})
    \item[RQ2] \emph{Task validity.} Can it recover chronology and name the
    shape, and how much of that score survives deleting the telling?
    (\S\ref{sec:reading})
    \item[RQ3] \emph{Representation.} Does a story graph beat a bare event
    list or a supplied order? (\S\ref{sec:graph})
    \item[RQ4] \emph{Cost of control.} What does a structural instruction
    cost in quality and in the placement of withheld information?
    (\S\ref{sec:worth})
    \item[RQ5] \emph{Evaluation validity.} Do a judge's verdicts comport with
    readers', and do our metrics track reading experience?
    (\S\ref{sec:human})
\end{description}
Given a shape stipulated as a rule on events, models return its formulaic
solution, which no published telling in our corpus uses
(Figure~\ref{fig:teaser-a}). Readers prefer a model's default chronological
telling to its instructed one; they also prefer graph-guided stories to
stories written to a supplied order, a preference the judge misses. Our
contributions are threefold:
\begin{enumerate}\itemsep0pt\parskip0pt
    \item We annotate \NWorks{} published works across seven narrative
    shapes, \NNonlinearWorks{} nonlinear and ten predominantly
    chronological, each with chronological and telling orders over one
    event list and a hand-written story graph (\S\ref{sec:corpus}).
    \item We build a five-task benchmark for producing and reading telling
    orders, with model-free baselines and a no-story floor for
    reconstruction, and evaluate \NModelsOrdering{} models on it
    (\S\ref{sec:setup}).
    \item We test an automated judge and our own structural metrics
    against \HNJudgments{} blind judgments from \HNReaders{} readers
    (\S\ref{sec:human}).
\end{enumerate}

\begin{figure*}[t]
    \centering
    \includegraphics[width=\linewidth]{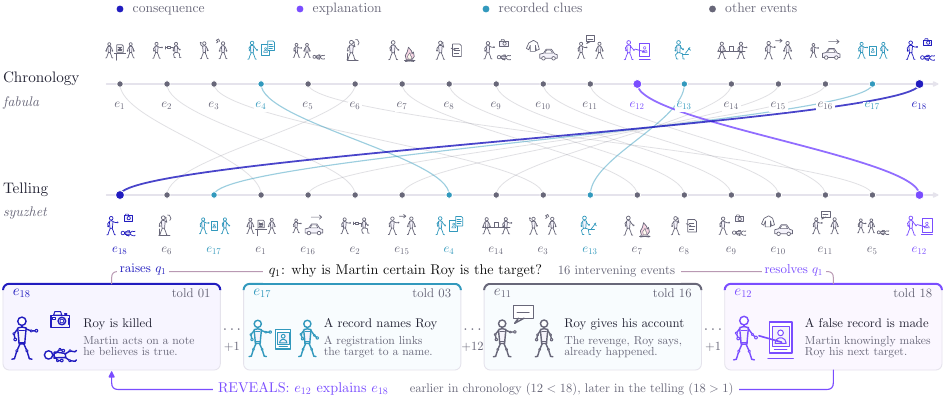}
    \caption{The same events, two orders: the explanation precedes the consequence in chronology but arrives last in the telling. Opening with Roy's death and withholding why he became the target across 16 intervening events turns a known outcome into an unresolved question.}
    \label{fig:overview}
    \vspace{-13pt}
\end{figure*}
	\vspace{-3mm}
	\section{Related work}
\label{sec:related}

Story generation with LLMs often separates planning from prose
realization: hierarchical generation and Plan-and-Write plan before
writing \citep{fan-etal-2018-hierarchical,yao2019plan}, and Re3 adds
recursive prompting and revision for longer stories
\citep{yang-etal-2022-re3}. Telling order has computational precedents:
\citet{montfort2007ordering} generates alternative tellings from a shared
event model using seven ordering categories. For neural generation,
\citet{han-etal-2022-go} show that temporal prompts produce more
interesting flashback stories while maintaining temporal coherence;
\citet{yu-etal-2025-multi} separate chronological character simulation
from rewriting into a planned telling order.
Our benchmark examines this control across seven structural categories
drawn from published narratives while holding the events fixed.

Temporal resources such as TimeBank and TempEval support annotation and
evaluation of temporal relations in text
\citep{pustejovsky2003timebank,uzzaman-etal-2013-semeval}, while DramaBank
represents agency, intentions, and plans \citep{elson-2012-dramabank}.
Story Curves directly aligns chronological and telling orders across ten
nonlinear films and identifies recurring narrative patterns
\citep{kim2018storycurves}. Our corpus extends this paired-order setting
with shared event descriptions and story graphs, making telling
order a target for controlled generation.

Assessing reading experience also requires human evaluation. The
Psychological Depth Scale measures empathy, engagement, emotional
provocation, and related dimensions \citep{harelcanada2024measuring}.
Prior work compares model story ratings with English teachers' ratings
\citep{chiang-lee-2023-large}; \citet{chhun-etal-2024-language} find stronger
agreement with human ratings when ranking generation systems than when
evaluating individual stories. LLM judges exhibit position, verbosity,
and self-enhancement biases \citep{zheng2023judging}, motivating
length-controlled evaluation \citep{dubois2024length}.

These lines leave open how reliably models control the shapes of
published narratives, whether reconstruction draws on the telling at all,
and whether structural and automated measures track readers.
\S\ref{sec:related-extra} discusses each line in detail.

	\vspace{-1mm}
	\section{Methodology}
\label{sec:methodology}
\label{sec:corpus}
\label{sec:setting}

Each dataset item represents one narrative work with 12--20 key story events. We write one self-contained sentence per event so that the same sentences
can be read in chronological order or in the work's telling order. Each entry records both orders, a structural label and a story graph.

For events $E=(e_1,\dots,e_n)$ in chronological order (the \fabula{}), the
telling (the \syuzhet{}) is a permutation $\sigma\in\mathcal{S}_n$ presenting
$T_\sigma=(e_{\sigma(1)},\dots,e_{\sigma(n)})$
\citep{genette1980narrative}. Thus $\sigma(i)$ gives the chronological rank
of the event at told position $i$. Strict chronological order is the identity,
and a telling is nonlinear when $\sigma\neq\mathrm{id}$.
Each work has an annotated reference (gold) telling $\sigma^\star$
and category $c^\star$. A scene is an annotated group of events that
remains contiguous in both orders. Structural labels describe scene
order, while $\sigma$ records event order.
A permutation narrates every selected event exactly once, so the
representation isolates the placement of plot events: repeated
narration, staged disclosure and omission fall outside it, and prose is
projected onto the event list at first mention (\S\ref{sec:alignvalidation}).

We selected \NWorks{} works by structural shape, beginning with films and
adding novels, plays, musicals and a short story until every category had
at least ten examples. This sampling provides coverage of the seven shapes
rather than estimates of their prevalence. Table~\ref{tab:corpus}
reports the corpus counts.

\begin{table}[t]
		\centering\small
		\fitcol{\begin{tabular}{lr}
\toprule
\textbf{Corpus}     &     \\
\midrule
works, each annotated in both orders & 83 \\
gold narrative graphs & 83 \\
event nodes & 1,433 \\
scenes & 1,221 \\
characters & 857 \\
mean $\tau$ of the told order & 0.456 \\
\midrule
\textbf{Typed edges} & 3,478 \\
\quad \texttt{CAUSES} & 1,330 \\
\quad \texttt{SHARES} & 1,397 \\
\quad \texttt{FORESHADOWS} & 444 \\
\quad \texttt{REVEALS} & 307 \\
\midrule
reader-question threads & 657 \\
pre-/post-conditions & 1,459 / 1,688 \\
character/reader knowledge claims & 2,504 \\
\bottomrule
\end{tabular}
}
		\caption{The corpus. Counts are recomputed from the annotation and
			independently re-derived by a separate check.}
		\label{tab:corpus}
		\vspace{-4mm}
	\end{table}

The seven structural labels describe scene order:
\textsc{chronicle} (predominantly chronological);
\textsc{retrograde} (scenes in reverse chronology, with events
within each scene proceeding forward);
\textsc{analepsis} (a dominant timeline interrupted by flashbacks,
followed by returns);
\textsc{prolepsis} (a dominant timeline interrupted by
flash-forwards, followed by returns);
\textsc{zigzag} (regular alternation between related scenes from
two periods, each progressing in a consistent temporal direction);
\textsc{syllepsis} (no dominant timeline, with scenes grouped by
place, character, theme or object); and
\textsc{achrony} (no consistent ordering rule).
\S\ref{sec:shapes} gives full definitions and illustrations,
and \S\ref{sec:prompts} gives the rules presented to models.
We include only works with a recoverable chronology, which provides
the reference order for evaluation. Our \textsc{achrony} label therefore 
denotes no consistent telling-order rule despite a recoverable chronology.

The story graph records causal dependencies, world-state constraints,
character knowledge, and reader questions that an event order alone
does not express. In our annotation, a \emph{reveal} is an event that
occurs before the event it explains in chronology but appears after
it in the telling. A \emph{reader question} is an uncertainty raised
by one narrated event and resolved by a later one.
A \texttt{REVEALS} edge points from the explanation to the event it
explains and is associated with a matching reader question.
Figure~\ref{fig:overview} illustrates a reveal and its reader question.

Node \texttt{ei} is event $i$, which links the graph to the event list.
Graphs supplied to models omit each event's chronological and told
positions, although \texttt{REVEALS} edges still constrain the telling
order. The graph
supplies constraints rather than a generation algorithm: causes support
consequences, world states stay consistent, characters act on what they
have learned, and questions the telling opens get answers.
\S\ref{sec:validation} specifies every field, and Figure~\ref{fig:graph}
draws one annotated work with its four reveals and the reader questions
they answer.

	\begin{figure*}[t]
		\centering
		\begingroup%
\resizebox{\textwidth}{!}{%
\begin{tikzpicture}[
		x=1.24cm, y=1cm,
		evt/.style={circle, draw=storyGraphNodeOutline, line width=0.45pt,
			fill=storyGraphNodeFill, text=black,
			minimum size=5.8mm, inner sep=0pt, font=\scriptsize},
		told/.style={font=\tiny, text=black!50, inner sep=1pt},
		rowlab/.style={font=\scriptsize\itshape, text=black!55, anchor=east,
			inner sep=2pt},
		causes/.style={draw=storyGraphIndigo, line width=0.5pt,
			-{Latex[length=1.3mm,width=1.0mm]}},
		fores/.style={draw=storyGraphCyan, line width=0.55pt, densely dashed,
			-{Latex[length=1.3mm,width=1.0mm]}},
		reveals/.style={draw=storyGraphViolet, line width=0.95pt,
			-{Latex[length=1.7mm,width=1.3mm]}},
		qlab/.style={font=\tiny, text=storyGraphViolet, inner sep=1.1pt, fill=white,
			fill opacity=0.9, text opacity=1}]
\definecolor{storyGraphIndigo}{HTML}{211CBF}
\definecolor{storyGraphCyan}{HTML}{1888B2}
\definecolor{storyGraphViolet}{HTML}{7A4DFF}
\definecolor{storyGraphNodeFill}{HTML}{F5F5F5}
\definecolor{storyGraphNodeOutline}{HTML}{777777}
\definecolor{storyGraphBaseline}{HTML}{D9D9D9}

\draw[storyGraphBaseline, line width=0.4pt] (-0.5,0) -- (13.5,0);
		\foreach \i/\t in {0/2, 1/12, 2/4, 3/10, 4/6, 5/7, 6/8, 7/9, 8/5, 9/11,
			10/3, 11/13, 12/14, 13/1} {
			\pgfmathtruncatemacro{\lab}{\i+1}
			\node[evt] (e\lab) at (\i,0) {\lab};
			\node[told] at (\i,-0.42) {\t};
		}
		\node[rowlab] at (-0.62,0)     {chronology};
		\node[rowlab] at (-0.62,-0.42) {telling};
\foreach \a/\b in {1/2, 5/6, 6/7, 8/9, 11/12, 12/13, 13/14}
		\draw[causes] (e\a) to[out=68,in=112] (e\b);
		\foreach \a/\b in {6/8, 7/9, 8/10, 9/11, 10/12}
		\draw[causes] (e\a) to[out=52,in=128] (e\b);
		\draw[causes] (e3) to[out=36,in=144] (e8);
		
\draw[fores] (e4) to[out=40,in=140] (e9);
		\draw[fores] (e5) to[out=27,in=153] (e13);
		\draw[fores] (e6) to[out=25,in=155] (e14);
		
\draw[reveals] (e4)  to[out=-36,in=-144] node[qlab,below=0pt] {$q_1$} (e9);
		\draw[reveals] (e8)  to[out=-82,in=-98]  node[qlab,below=1pt] {$q_2$} (e9);
		\draw[reveals] (e9)  to[out=-56,in=-124] node[qlab,below=0pt] {$q_3$} (e11);
		\draw[reveals] (e13) to[out=-82,in=-98]  node[qlab,below=1pt] {$q_4$} (e14);
		
\begin{scope}[shift={(3.05,-1.95)}]
			\draw[causes] (0,0) -- (0.5,0);
			\node[font=\scriptsize, anchor=west] at (0.58,0) {\textsc{causes}};
			\draw[fores] (2.1,0) -- (2.6,0);
			\node[font=\scriptsize, anchor=west] at (2.68,0) {\textsc{foreshadows}};
			\draw[reveals] (5.0,0) -- (5.5,0);
			\node[font=\scriptsize, anchor=west] at (5.58,0) {\textsc{reveals}};
		\end{scope}
\end{tikzpicture}%
}%
\endgroup%
			\caption{Story graph of work S60, a musical labeled \textsc{zigzag}, with
			events in chronological order and told positions beneath. \textsc{causes}
			and \textsc{foreshadows} arcs run above the line; each \textsc{reveals} arc
			below points from a later-told explanation to the earlier-told event it
			explains, answering the reader question $q_i$ it carries (\textsc{shares}
			edges omitted).
			{\footnotesize\textbf{Events:} 1~a novelist meets an actress; 2~she tells
			him of the man she nearly settled for; 3~his first book is accepted;
			4~a casting panel ignores her; 5~he promises never to leave; 6~they marry;
			7~she takes a provincial engagement; 8~his book succeeds; 9~she confronts
			his absence; 10~a guest propositions him; 11~she resents his success;
			12~he is unfaithful; 13~he leaves a letter; 14~she finds the flat empty.
			\textbf{Questions:} $q_1$~what professional hurt lies beneath her accusation;
			$q_2$~how his success became distance; $q_3$~why she can no longer be
			glad for him; $q_4$~why the flat is empty.}}
		\label{fig:graph}
		\vspace{-4mm}
\end{figure*}

The authors annotated each work once, writing its event sentences, its
label with a rationale, and its graph. Machine checks reject any work that
violates a constraint; they catch inconsistency rather than misreading, so
every label carries its stated rationale. Models are evaluated on
anonymized versions that replace identifying details but preserve event
counts and permutations, and validation rejects source proper nouns and
verbatim source sentences (\S\ref{sec:validation}).

\vspace{-2mm}
\section{Experimental Setup}
\label{sec:setup}
\vspace{-1mm}
\subsection{Experimental Design}
\label{sec:tasks}
\label{sec:graphprotocol}

\paragraph{Tasks.}
Given events in chronological order, \Task{1} asks for a telling order that
realizes a named shape and \Task{2} for one without a shape instruction. Given
the same events in their gold telling order, \Task{3} asks for the underlying
chronology and \Task{4} for one of the seven structural labels. \ProseTask{a} and
\ProseTask{b} ask for a 550--750-word story from the chronological event list,
with a named shape in \ProseTask{a} and no shape instruction in \ProseTask{b}. Tables also label
the cued and free conditions \emph{instructed} and \emph{left alone},
respectively. The prose and graph studies use
\ProseNStories{} works, two per shape (models in Table~\ref{tab:roster}).

\paragraph{Inputs and decoding.}
Base prompts list one self-contained event sentence per line and mark
neither scene boundaries nor simultaneous events. Ordering uses temperature 0 and low
reasoning effort; prose uses 0.7. \S\ref{sec:prompts}
has the verbatim prompts; \S\ref{sec:grid-notes} has the
decoding and scoring conventions.

\paragraph{Reconstruction controls.}
\Task{3} receives the gold telling order. To test how much of its score
comes from that order rather than from the event descriptions, 
three strong models also reconstruct chronology from the same
events in random order; we call their score the \emph{no-story floor}. We
then repeat both runs after a model rewrites the events to remove explicit
time cues such as dates and ``years earlier'' (Table~\ref{tab:dedate};
\S\ref{sec:setup-extra}).
In both reconstruction conditions, indices identify event lines
rather than gold chronological ranks (\S\ref{sec:grid-notes}).

\paragraph{Graph conditions.}
All graph conditions retain the \ProseTask{a} target-shape cue and its
definition; only the added input differs. \GraphCondition{0} adds nothing. \GraphCondition{1} adds the full graph without
its chronological and telling position fields. \GraphCondition{2} adds no graph
but supplies the gold telling order and requires the story to follow it.
The graph still constrains order: its \texttt{REVEALS} edges place each
explanation after the event it explains, fixing \PinnedPct\% of
telling-order pairs. Five ablations each remove one annotation group from
\GraphCondition{1} and its prompt description, such as character knowledge or
world-state conditions (Table~\ref{tab:grapharm}; \S\ref{sec:setup-extra}).

\subsection{Evaluation}
\label{sec:evaluation}
\label{sec:autoeval}
\label{sec:humanproto}

\paragraph{Order metrics.}
For \Task{1} and \Task{2}, forwardness is Kendall's $\tau$ between telling and
chronology \citep{kendall1938new}, from $+1$ for chronological order to
$-1$ for exact reversal. A telling is effectively chronological at
$\tau\geq0.95$; gold tellings average $\tau=\CorpusTau$. The normalized
longest increasing subsequence, $\mathrm{LIS}$, is the largest fraction of
events told in chronological order, allowing gaps. For permutations and for
aligned prose, a \emph{band hit} means that both $\tau$ and $\mathrm{LIS}$
fall within the minimum--maximum range of the gold tellings for the target
shape: the requested shape for cued outputs (\Task{1}, \ProseTask{a} and \GraphCondition{0}--\GraphCondition{2}), and the
source work's gold category for free outputs (\Task{2} and \ProseTask{b}). Band hit
measures resemblance to corpus examples, not the shape itself: different
patterns can share both coordinates, so the band sees displacement from
chronology but not alternation or grouping (\S\ref{sec:setup-extra}).
Agreement with a work's gold telling is
Kendall's $\tau$ between the two orders. \Task{3} is scored by
$\tau_{\mathrm{rec}}$ against gold chronology and by exact match, and \Task{4} by
accuracy (Appendices~\ref{sec:formal} and~\ref{sec:grid-notes}).

\paragraph{Aligning prose.}
\label{sec:alignvalidation}
Gemini~3.1~Pro aligns each generated story back to the event list by first
mention, so the structural scores above apply to prose as well as permutations. An
aligner from another model family agrees on \InterJudgeN{} shared stories
at $r=\InterJudgeR$ (mean $|\Delta\tau|=\InterJudgeDiff$), and when given
\GraphCondition{2}, the primary aligner recovers the gold order at
$\tau=\GTwoAgree$. These checks validate alignment, not preference.

\paragraph{Automated story judgments.}
Gemini~3.1~Pro also serves as the primary preference judge. It compares two
stories generated by the same model for the same work under different
conditions and answers the question used in the reader study: ``Which is
the better story overall: more engaging, better told, more satisfying to
read?'' Each pair is judged twice with positions swapped, and disagreement 
is recorded as a tie. The judge also rates each story on the five 1--5 scales of the
Psychological Depth Scale (PDS; \citealp{harelcanada2024measuring}), which
readers use in close readings. \S\ref{sec:prompts} gives all judge prompts.

\paragraph{Reader protocol and inference.}
\HNReaders{} readers gave \HNJudgments{} responses over \HNSittings{}
sittings. Each sitting contains four blind comparisons and one close
reading. A comparison shows the same kind of pair the judge sees, with
sides randomized and conditions hidden. Readers answer the judge's overall
question plus ``Which was harder to follow?'' and ``Whose ending tied
things together better?'' (Story A, Story B or about the same). One of the 
four repeats an earlier pair with sides swapped, without telling the 
reader (\S\ref{sec:instrument}). Preference rates exclude ties. We report
exact binomial tests with reader-clustered bootstrap intervals, rates
adjusted for word-count difference with generalized estimating equations,
and rates on length-matched pairs within 40 words (\S\ref{sec:setup-extra}).

	\vspace{-2mm}
	\section{Results}
	\subsection{RQ1. Producing a telling}
	\label{sec:capability}

	\begin{table*}[t]
		\centering\scriptsize
		\setlength{\tabcolsep}{4pt}
		\renewcommand{\arraystretch}{0.92}
		\fitfull{\begin{tabular}{@{}l cccccc @{\hskip 9pt} l cccccc@{}}
\toprule
& \multicolumn{2}{c}{{\boldmath\Task{1}} cued} & \multicolumn{2}{c}{{\boldmath\Task{3}} reconstruct} & {\boldmath\Task{4}} & {\boldmath\Task{2}} & & \multicolumn{2}{c}{{\boldmath\Task{1}} cued} & \multicolumn{2}{c}{{\boldmath\Task{3}} reconstruct} & {\boldmath\Task{4}} & {\boldmath\Task{2}} \\
\cmidrule(lr){2-3}\cmidrule(lr){4-5}\cmidrule(lr){6-6}\cmidrule(lr){7-7}\cmidrule(lr){9-10}\cmidrule(lr){11-12}\cmidrule(lr){13-13}\cmidrule(l){14-14}
\textbf{Model} & \textbf{band hit} & $\tau$ & $\tau_{\mathrm{rec}}$ & \textbf{exact} & \textbf{acc.\ } & $\tau$ \textbf{free} & \textbf{Model} & \textbf{band hit} & $\tau$ & $\tau_{\mathrm{rec}}$ & \textbf{exact} & \textbf{acc.\ } & $\tau$ \textbf{free} \\
\midrule
\micon{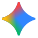}Gemini 3.1 Pro & \cellcolor{bcyan!45}{\textbf{0.602}} & 0.409 & 0.882 & \cellcolor{bcyan!45}\textbf{0.253} & 0.747 & 0.785 & \micon{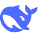}\textit{DeepSeek V4 Pro} & 0.470 & 0.442 & 0.519 & 0.060 & 0.470 & 0.678 \\
\micon{gemini}Gemini 3.6 Flash & 0.578 & \cellcolor{bcyan!45}\textbf{0.456} & 0.880 & 0.217 & \cellcolor{bcyan!45}\textbf{0.795} & 0.808 & \micon{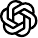}GPT-5.6 Terra & 0.438 & 0.410 & 0.882 & 0.171 & 0.648 & 0.531 \\
\micon{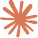}Claude Opus 5 & 0.566 & 0.416 & 0.886 & 0.193 & 0.614 & 0.615 & \micon{claude}Claude Haiku 4.5 & 0.434 & 0.376 & 0.799 & 0.108 & 0.470 & 0.681 \\
\micon{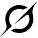}Grok 4.5 & 0.566 & 0.398 & 0.877 & 0.205 & 0.699 & 0.485 & \micon{gpt}\textit{gpt-oss-120b} & 0.434 & 0.218 & 0.675 & 0.024 & 0.277 & 0.568 \\
\micon{gpt}GPT-5.6 Luna & 0.550 & 0.413 & 0.815 & 0.129 & 0.486 & 0.664 & \micon{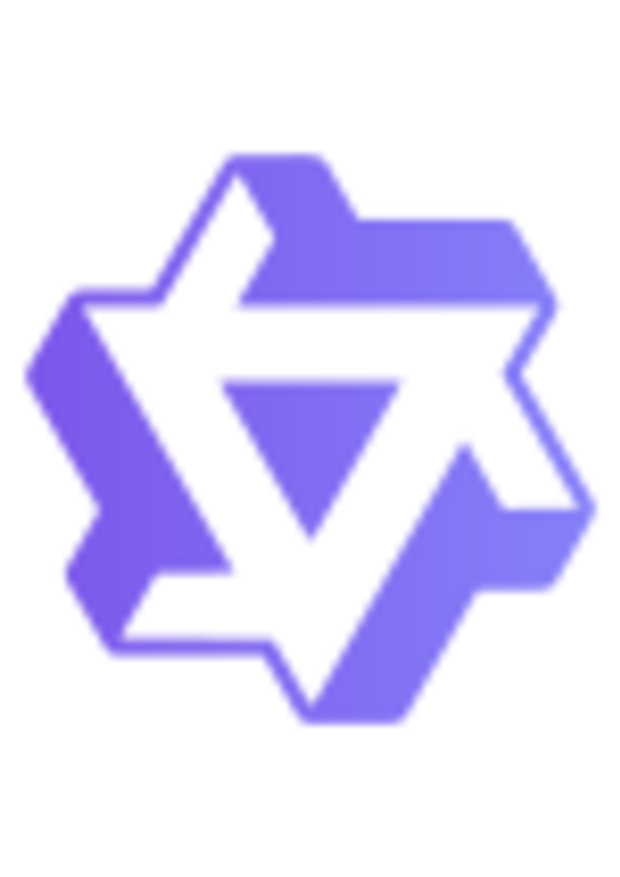}\textit{Qwen3-235B-A22B} & 0.410 & 0.314 & 0.683 & 0.012 & 0.410 & 0.532 \\
\micon{claude}Claude Sonnet 5 & 0.542 & 0.458 & 0.807 & 0.072 & 0.422 & 0.793 & \micon{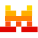}\textit{Mistral Small 3.2} & 0.398 & 0.311 & 0.598 & 0.024 & 0.313 & 0.610 \\
\micon{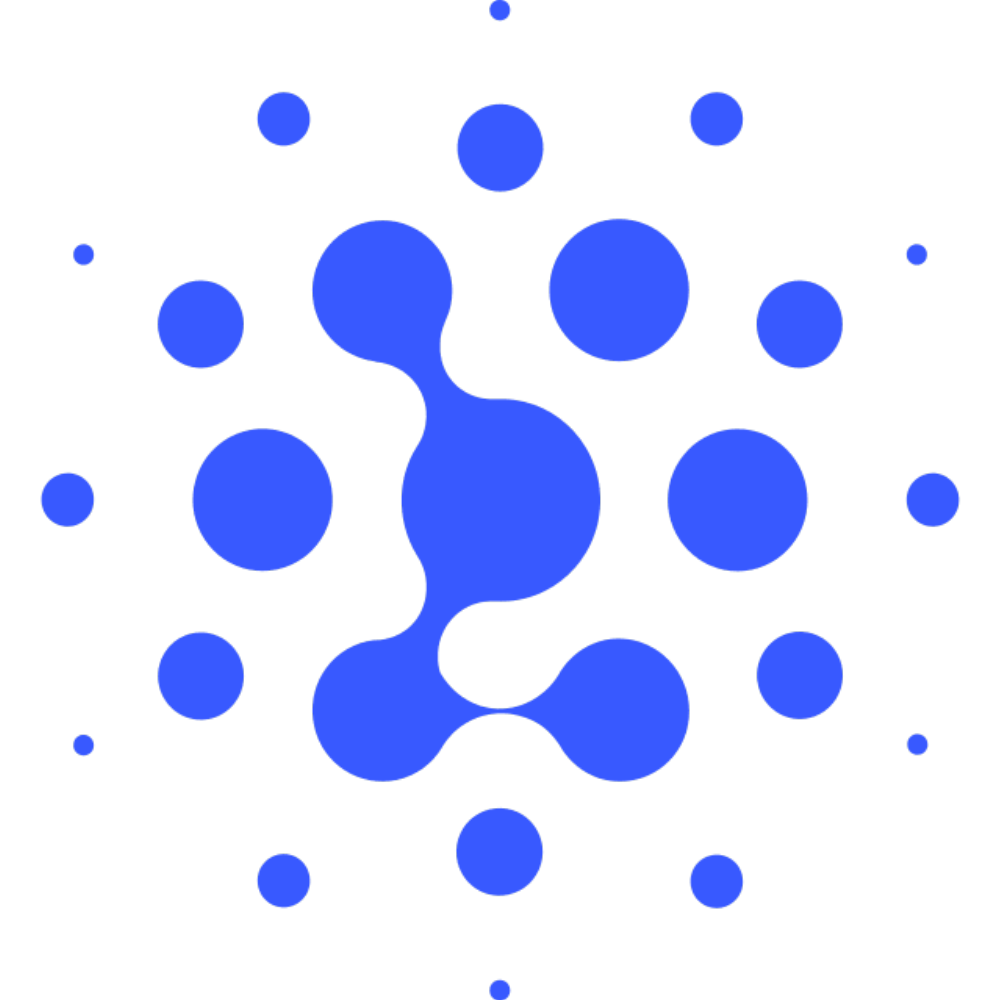}\textit{GLM-5.2} & 0.542 & 0.472 & 0.868 & 0.193 & 0.602 & 0.708 & \micon{deepseek}\textit{DeepSeek V4 Flash} & 0.386 & 0.447 & 0.684 & 0.108 & 0.410 & 0.604 \\
\micon{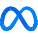}\textit{Llama 4 Maverick} & 0.530 & 0.408 & 0.590 & 0.000 & 0.301 & \cellcolor{bcyan!45}\textbf{0.480} & \micon{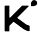}\textit{Kimi K2 0905} & 0.361 & 0.421 & 0.761 & 0.060 & 0.422 & 0.760 \\
\micon{gpt}GPT-5.6 Sol & 0.506 & 0.383 & \cellcolor{bcyan!45}\textbf{0.892} & 0.243 & 0.671 & 0.626 & \micon{meta}\textit{Llama 3.3 70B} & 0.289 & 0.108 & 0.672 & 0.012 & 0.313 & 0.614 \\
\micon{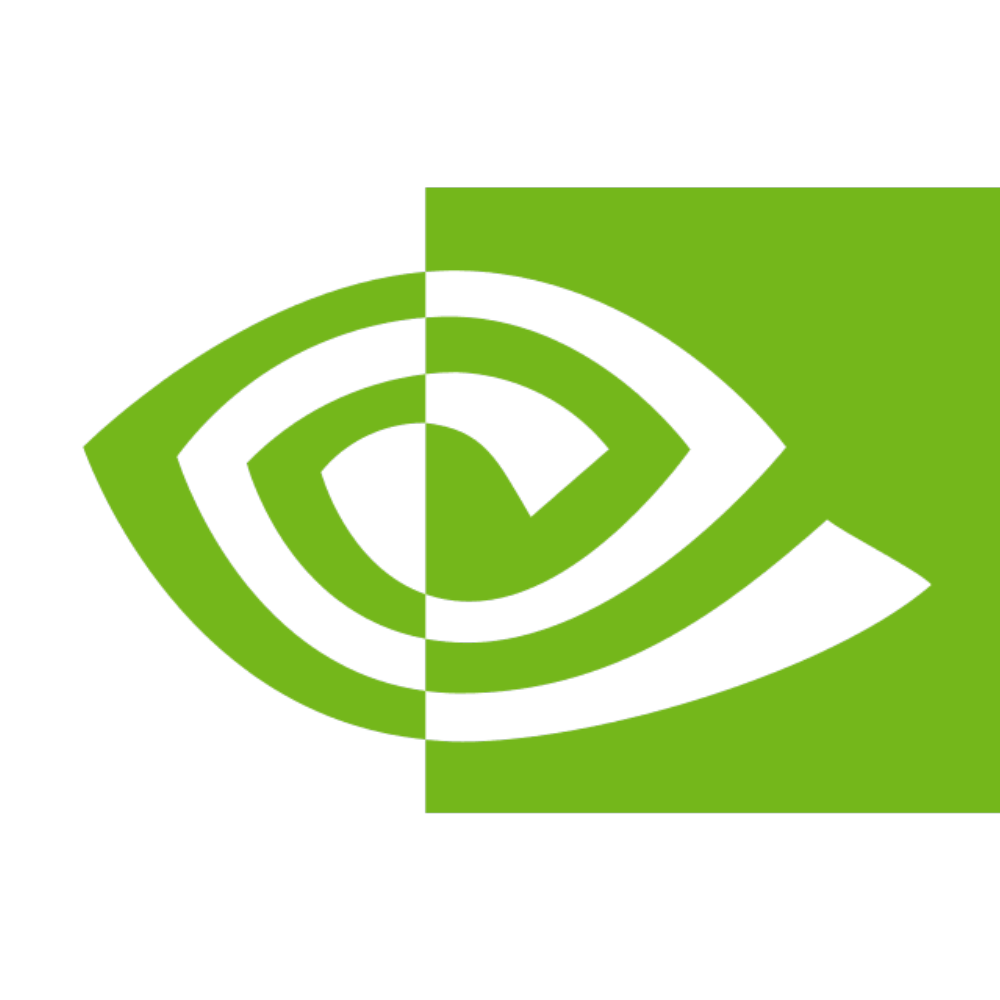}\textit{Nemotron 3 Ultra} & 0.494 & 0.484 & 0.409 & 0.060 & 0.506 & 0.562 &  &  &  &  &  &  &  \\
\midrule
\noicon\textit{pooled} & \textit{0.479} & \textit{0.386} & \textit{0.743} & \textit{0.111} & \textit{0.502} & \textit{0.638} &  &  &  &  &  &  &  \\
\noicon\textit{[reference] human} & \textit{1.000}$^{\dagger}$ & \textit{0.456} & --- & --- & --- & \textit{0.456} &  &  &  &  &  &  &  \\
\noicon\textit{[baseline] random} & \textit{0.084} & \textit{0.015} & --- & --- & \textit{0.143} & \textit{0.015} &  &  &  &  &  &  &  \\
\bottomrule
\end{tabular}
}
		\caption{Benchmark grid, sorted by \Task{1} band hit (left block, then
			right). \Task{1}: structural compliance. \Task{2}: spontaneous ordering. \Task{3}:
			reconstruction. \Task{4}: classification. \textit{Italic}: open-weights.
			\hd{Bold}: best per column (in the $\tau$ columns, closest to the human
			\CorpusTau, descriptive only). Human band hit ($\dagger$) is 1 by
			definition. Band-hit intervals are about $\pm\BandHitCIHalf$, so adjacent
			ranks are not distinguishable (\S\ref{sec:grid-notes}). Read \Task{3}
			$\tau_{\mathrm{rec}}$ against the no-story floor
			(\S\ref{sec:floor}).}
		\label{tab:main}
		\vspace{-4mm}
	\end{table*}

	\subsubsection{Left alone, they tell it straight}
	\label{sec:spontaneous}

	Given the events and no structural instruction (\Task{2}, every model in
	Table~\ref{tab:main}), the pooled told order sits at $\tau = \PoolTauFree$
	and $\mathrm{LIS} = \PoolLisFree$, against a human corpus mean of
	\CorpusTau{}. Models vary widely (\TauFreeLo--\TauFreeHi), yet all
	\NAboveHuman{} are more chronological than the human telling of the same
	works. Even the least chronological (\LeastChronoModel{},
	\LeastChronoTau{}) sits above the corpus mean (two-sided sign test, $p =
	\FreeSignP$). Prose is more chronological still ($\tau = \ProseFreeTau$),
	and \ProseFullChronoModels{} of \ProseNModels{} models write every story
	in chronological order. With no shape requested, \ProseFreeBandPct\% land
	in the band of the source work's own label, which measures resemblance,
	not compliance. Nonlinearity has to be requested, with two caveats: the
	reference works were chosen for their nonlinearity, and both free tasks
	list the events chronologically, so a preference for chronology cannot be
	told apart from a preference for the input order.

	\vspace{-2mm}
	\subsubsection{Asked for a shape, they comply literally}
	\label{sec:cued}

	Asked for a named shape (\Task{1}), models land in the human band
	\PoolBandHitPct\% of the time (\BandHitLoPct--\BandHitHiPct\% by model,
	led by \BestBandModel{}), against \RandomBandHitPct\% for a random
	permutation (\S\ref{sec:grid-notes}). Format is not the bottleneck
	(\ValidRate\% valid permutations, \PoolContigPct\% scene contiguity). What
	fails is structural choice, unevenly across shapes
	(Table~\ref{tab:category}, \S\ref{sec:tables}).

	Two shapes miss the band almost entirely (Figure~\ref{fig:plane},
	\S\ref{sec:tables}).
	\textsc{retrograde} scores \RetroBandPct\% and \textsc{chronicle}
	\ChronBandPct\%, because \RetroExactRev\% of \textsc{retrograde} outputs
	are the exact reversal ($\tau = -1$) and \ChronExactId\% of
	\textsc{chronicle} outputs the exact identity. The ten published works of
	each shape stop short of those ends ($\tau \in [\RetroTauLo, \RetroTauHi]$
	and $[\ChronTauLo, \ChronTauHi]$). Chronicles contain simultaneous events,
	and reverse chronology still plays forward inside each scene: published
	\textsc{retrograde} tellings reverse \SceneHumanBetween\% of the event pairs
	that span two scenes and \SceneHumanWithin\% of the pairs within one, where
	cued models reverse \SceneCuedBetween\% and \SceneCuedWithin\%
	(\S\ref{sec:scenes}). The prompts mark neither scenes nor simultaneity and
	state both shapes as rules over event ranks (\S\ref{sec:prompts}), each
	with a unique solution for a given event set (Prop.~\ref{prop:rule}). We
	call the result \emph{over-compliance} and read it as a specification
	effect. Models return the literal solution of the rule they were given,
	and whether they find the scene-level shape when scenes are marked is
	untested. Without the two shapes, pooled band hit is \BandHitNoRulePct\%.
	Two baselines that use no model support this reading. Reversing the
	scenes and playing each forward, the practice the band encodes, hits the
	\textsc{retrograde} band for \BaseSceneRetroHit{} of ten works against the
	models' \RetroBandPct\%, while the identity, which the \textsc{chronicle}
	prompt explicitly accepts, hits the \textsc{chronicle} band in
	\BaseIdChronHit{} of ten (\S\ref{sec:grid-notes}). On these two shapes,
	band hit measures the gap between the rule as stated and the practice
	recorded.

	Judgment-dependent shapes (\textit{i.e.}, \textsc{zigzag}, \textsc{prolepsis}, and \textsc{analepsis}) reach the human band \JudgedBandLo--\JudgedBandHi\% of the time. Because the rectangle is blind to alternation, \textsc{zigzag} yields the highest band hits but the lowest gold-telling agreement (\ZigzagAgree{}, \S\ref{sec:grid-notes}). By contrast, \textsc{achrony} fails through excessive scrambling ($\tau = \AchronyModelTau$ \textit{vs.} human $\AchronyHumanTau$).

	\vspace{-2mm}
	\subsection{RQ2. Reading a telling}
	\label{sec:reading}

	\subsubsection{Reconstruction without the story}
	\label{sec:floor}

	\Task{3} yields a pooled $\tau_{\mathrm{rec}} = \PoolTauRec$, with per-model
	scores ranging from \TauRecLo{} to \TauRecHi. The reconstruction controls
	(\S\ref{sec:tasks}) show how much of this score persists without the telling.
	\FigFloor
	\paragraph{Deleting the story.} Without the telling, three strong solvers
	reach $\tau = \DedTauFloor$,
	against \DedTauTold{} with the full telling (Figure~\ref{fig:floor}),
	so the telling is worth $\TellingWorth$
	$\tau$. Only \NBeatFloor{} of \NModels{} models clear this strong-solver
	floor (we ran no per-model floor), and per work the two conditions
	correlate at $r = \FloorCorr$. \paragraph{Removing every date.} The rewrite
	cut anchors such as ``Years earlier'' from \AnchorsBefore\% of events to
	\AnchorsAfter\% but moved the
	floor by only \DedDelta{} $\tau$ (Table~\ref{tab:dedate}), so the floor
	does not come from dates. Events can imply their own order (\textit{e.g.}, a man is shot
	before he bleeds to death), but these works are also recognizable, and
	de-dating leaves prior knowledge intact, so entailment and memory are not
	yet separated. After reading a generated story in full, readers ordered
	its events at only $\tau = \HChronTau$, where the floor on the same
	\HNStories{} works is \HFloorFourteen{} (\S\ref{sec:humanfloor}); the inputs
	differ, but entailment that a careful reader could also use does not
	explain that gap.

	The floor accounts for \DedTauShare\% of the $\tau$ score but only
	\DedExactShare\% of exact reconstructions. In pairwise accuracy it orders
	\FloorPairAcc\% of pairs correctly against \ToldPairAcc\% with the telling,
	so $\tau_{\mathrm{rec}}$ should not be read alone. Exact match is more
	revealing: the telling lifts it from \DedExactFloorPct\% to
	\DedExactToldPct\%, although a single misplaced event counts as total
	failure. No model reconstructs a \textsc{chronicle} exactly
	(\ChronRecExactPct\%), because near-chronological tellings deviate mostly
	through simultaneous events, which a total order scores as error.

	\vspace{-2mm}
	\subsubsection{Naming the shape}
	\label{sec:negatives}

	Classification (\Task{4}) reaches pooled accuracy \PoolClassifyPct\% and best
	\BestClassAccPct\% (\BestClassModel{}), against a seven-way random
	baseline of \ChanceClassifyPct\% and a majority-class baseline of
	\MajorityClassifyPct\%. Accuracy, band hit and $\tau$ do not share a
	difficulty scale, so we do not rank the tasks. The errors are systematic. Of the six other classes, \SinkN{} send
	\SinkLo--\SinkHi\% of their items to \textsc{analepsis}, so ``a dominant
	timeline with flashbacks'' is the models' fallback. \textsc{achrony} is
	recognized \ConfAchronySelfPct\% of the time, partly because the
	definition the models saw describes unrecoverable chronology, which the
	corpus excludes (\S\ref{sec:confusion} and \S\ref{sec:prompts}).
	Across models, naming a shape predicts both other abilities
    ($\rho = \RhoClassBand$ with \Task{1} band hit, $\rho = \RhoClassRec$ with \Task{3}
    $\tau_{\mathrm{rec}}$) while those abilities barely correlate
    ($\rho = \RhoCuedTauRec$): writing and reading are separate skills.

	\S\ref{sec:grid-notes} qualifies the grid. A small effort ablation
	cannot rule out an effect of reasoning effort, and the three GPT-5.6
	models refuse \RefusalLo--\RefusalHi\% of attempts at the dark-literary
	end of the corpus.

	\vspace{-2mm}
	\subsection{RQ3. Does a story graph help?}
	\label{sec:graph}

\begin{table}[t]
		\centering\footnotesize
		\setlength{\tabcolsep}{3pt}
		\fitcol{\begin{tabular}{lcccc}
\toprule
\textbf{Condition}     &     $\tau$     &     \textbf{band hit}     &     \textbf{agree}     &     \textbf{PDS} \\
\midrule
\GraphCondition{0} no graph & 0.362 & 0.411 & 0.554 & 3.85 \\
\GraphCondition{1} \textbf{full graph} & 0.415 & 0.518 & 0.554 & 4.10 \\
\GraphCondition{2} told order \textit{(oracle)} & 0.333 & 0.982 & 0.998 & 3.94 \\
\midrule
\quad $-$\,\texttt{CAUSES}/\texttt{FORESHADOWS} & 0.417 & 0.429 & 0.557 & 4.18 \\
\quad $-$\,\texttt{pre}/\texttt{post} & 0.440 & 0.464 & 0.557 & 4.14 \\
\quad $-$\,\texttt{learns} & 0.399 & 0.464 & 0.561 & 4.05 \\
\quad $-$\,questions/\texttt{REVEALS} & 0.383 & 0.464 & 0.502 & 3.99 \\
\quad $-$\,participants/location & 0.409 & 0.500 & 0.526 & 4.04 \\
\bottomrule
\end{tabular}
}
		\caption{The scaffolding study. Ablations below the rule each remove one
			graph layer from \GraphCondition{1}; agreement with the gold telling
			confirms that \GraphCondition{2} follows the order it is given. Rows use
			the \GraphCommonModels{} models that ran every condition, the only ones
			that allow a paired comparison; pooling all \GraphNModels{} models over
			the three conditions they share gives the same ordering at smaller
			margins.}
		\label{tab:grapharm}
		\vspace{-4mm}
	\end{table}

	\vspace{-1mm}
	\paragraph{The graph beats no graph, under the judge.}
	\GraphCondition{1} beats \GraphCondition{0} $\GOneVsGZeroW$--$\GOneVsGZeroL$ with
	$\GOneVsGZeroT$ ties ($p = \GOneVsGZeroP$, Figure~\ref{fig:grapharm} in
	\S\ref{sec:tables}), PDS
	rises on \PDSCompsUp{} of \PDSCompsN{} components (\GZeroPDS{} to
	\GOnePDS{}), and band hit rises from \GZeroBandPct\% to \GOneBandPct\%
	(Table~\ref{tab:grapharm}).
	The structural gain is thin. It comes almost entirely from one model
	(below), and agreement with the gold telling does not move (\GZeroAgree{}
	to \GOneAgree{}), so models do not use the order the \texttt{REVEALS}
	edges fix. The preference gain disappears after length adjustment
	(\S\ref{sec:humanlength}). \GraphCondition{1} texts are longer; readers pick the
	longer text \HGNoneLongerPct\% of the time, and the judge saw the same
	unequal texts.

	\vspace{-1mm}
	\paragraph{Against the told order, the judge sees no difference and
	readers do.}
	The judge splits \GraphCondition{1} \textit{vs.}\ \GraphCondition{2} at
	$\GOneVsGTwoW$--$\GOneVsGTwoL$, although \GraphCondition{2}, handed the
	answer, leads on structure (\GTwoBandPct\% band hit, McNemar
	$p \BandMcP$), while readers prefer \GraphCondition{1} in
	\HGOrderBetterPct\% of decisive verdicts (\S\ref{sec:humanpref}). The
	split is not a matter of sample size: at the reader rate, an outcome this
	even across $\HJudgeGOrderN$ judgments has probability
	$\HJudgeGOrderTail$ (\HJudgeGOrderTailLo{} at the floor of the readers'
	interval), taking that rate as known; a formal test would pair item-level
	decisions. The prompts differ beyond the graph (\S\ref{sec:humanpref}),
	but here the judge does not track readers.

	\vspace{-1mm}
	\paragraph{No single layer, and almost entirely one model.}
	Under the judge, all five leave-one-layer-out ablations match the full
	graph on preference ($p$ from \AblPrefPLo{} to \AblPrefPHi{}), and band
	hit falls for every removal without approaching significance (smallest
	paired McNemar $p = \AblBandPLo$, $n = \AblBandN$). Readers prefer every
	ablation to the full graph (\S\ref{sec:humanabl}). A graph that loses to
	each of its one-layer reductions yet ties no graph at equal length is consistent
	with excess prompt length or distraction, but these comparisons do not
	isolate either explanation. \GraphBestModel{} supplies almost all of the structural gain
	(\GraphBestDelta{} band hit, \GraphWorstModel{} \GraphWorstDelta{}, the
	other \GraphInertModels{} none) and most of the readers' ablation
	preference. We claim only that the graph raises structural compliance
	for one generator of four.

	\vspace{-2mm}
	\subsection{RQ4. The cost of control}
	\label{sec:worth}

	\vspace{-1mm}
	\paragraph{Cued tellings lose to the model's own free telling.}
	\label{sec:prosepref}
	In the paired cued-versus-free comparison (\S\ref{sec:autoeval}), free
	tellings win \ProseFreeWins{} to \ProseCuedWins{}
	with \ProseTies{} ties ($n = \ProsePairs$, $p \approx \ProsePairP$), all
	\ProseLoseAll{} models lose, and PDS agrees more mildly (\ProseFreePDSs{}
	\textit{vs.} \ProseCuedPDS{}). The judge (Gemini~3.1~Pro) shares a family with
	one generator, a documented source of bias \citep{zheng2023judging}, but
	readers confirm the result at a closely matched rate
	(Figure~\ref{fig:teaser-b}; \S\ref{sec:humanpref}). The cost is not only nonlinearity. The cued story
	also loses when the requested shape is \textsc{chronicle}
	(\HByCatCuedChron{}, $n = \HByCatCuedChronN$, uncorrected $p =
	\HByCatCuedChronP$), which asks for little reordering, and draws level on
	\textsc{retrograde} (\HByCatCuedRetro{}), the most nonlinear request
	(Table~\ref{tab:humanbycat}), so part of the penalty is the instruction
	itself.

	\vspace{-1mm}
	\paragraph{The evidence for premature disclosure is suggestive.}
	\label{sec:reveal}
	For each work we take the \texttt{REVEALS} edge whose explanation the
	gold telling places latest, and call its disclosure early if fewer than
	three events precede the explanation's first mention in a story. Two works
	inflate the early counts: one designates no reveal yet was scored early,
	and a chronicle's selected explanation is its third chronological event.
	Without them, disclosure is early in \EarlyCleanGZeroPct\% of stories
	without a graph, \EarlyCleanGOnePct\% with one and none when the model is
	handed the told order ($p = \EarlyCleanP$). That repair is partly by
	construction, since the reveal is defined by the order
	\GraphCondition{2} is handed; what remains is that \GraphCondition{1},
	which marks what is withheld, does no better than \GraphCondition{0}
	(\S\ref{sec:rcg}).

	\vspace{-2mm}
	\subsection{RQ5. What readers say}
	\label{sec:human}

	\begin{table*}[t]
		\centering\footnotesize
		\setlength{\tabcolsep}{4.5pt}
		\renewcommand{\arraystretch}{0.95}
		\fitfull{\begin{tabular}{@{}l rcc rcc rcc c}
\toprule
 &  \multicolumn{3}{c}{\textbf{better story}}
 &  \multicolumn{3}{c}{\textbf{harder to follow}}
 &  \multicolumn{3}{c}{\textbf{better ending}}
 &  \textbf{adj.} \\
\cmidrule(lr){2-4}\cmidrule(lr){5-7}\cmidrule(lr){8-10}\cmidrule(l){11-11}
\textbf{Comparison}  &  \textbf{split}  &  \textbf{rate}  &  $p$  &  \textbf{split}  &  \textbf{rate}  &  $p$  &  \textbf{split}  &  \textbf{rate}  &  $p$  &  \textbf{rate} \\
\midrule
instructed \textit{vs.}\ left alone & 64--200--30 & \negcell{0.242} & $<0.0001$ & 197--47--50 & \negcell{0.807} & $<0.0001$ & 114--134--46 & 0.460 & $0.23$ & \poscell{0.246} \\
full graph \textit{vs.}\ told order & 181--70--43 & \keycell{0.721} & $<0.0001$ & 115--108--71 & 0.516 & $0.69$ & 141--106--47 & 0.571 & $0.0303$ & \poscell{0.707} \\
full graph \textit{vs.}\ event list & 154--105--35 & \poscell{0.595} & $0.0028$ & 123--102--69 & 0.547 & $0.18$ & 133--103--58 & 0.564 & $0.0588$ & \warncell{0.518} \\
full graph \textit{vs.}\ one layer removed & 79--180--35 & \negcell{0.305} & $<0.0001$ & 140--86--68 & \negcell{0.619} & $0.0004$ & 110--119--65 & 0.480 & $0.60$ & \negcell{0.304} \\
\bottomrule
\end{tabular}
}
		\caption{Reader verdicts. Each pair is one model writing one story
			twice. \hd{split}: wins--losses--ties for the first-named condition.
			\hd{rate}: share of decisive judgments. \hd{adj.}: rate net of word count
			\citep{dubois2024length}. \cKey{} headline, \cPos{} survives every check,
			\cNeg{} against hypothesis, \cWarn{} fails length adjustment.}
		\label{tab:humanpref}
		\vspace{-4mm}
	\end{table*}

	The reader study tests the automated preference verdicts with the protocol
	of \S\ref{sec:humanproto} (Table~\ref{tab:humanpref}), and
	Figure~\ref{fig:teaser-b} sets the unadjusted reader rates beside the
	judge's. All three analyses agree on the headline results, and every
	\emph{better story} claim below clears the Bonferroni threshold of
	$0.0042$. Readers agree with each other on that question but not on the
	other two (Krippendorff's $\alpha = \HAlphaBetter{}$ against \HAlphaConf{}
	and \HAlphaEnd{}), and on the unannounced repeats they reproduce their own
	decisive verdict \HRepeatBetterDecPct\% of the time (\HRepeatBetterPct\%
	over all repeats, \S\ref{sec:human-extra}).

	\begin{figure}[t]
		\centering
		\input{figures_tariq/teaser_b_compact}
		\caption{Unadjusted preference rates. Readers favor graph-guided stories
		over those given a telling order or an event list, and the judge's ties
		on two comparisons miss clear reader preferences. ``Layer removed''
		pools the five one-layer ablations.}
        \vspace{-4mm}
		\label{fig:teaser-b}
	\end{figure}

	\vspace{-1mm}
	\subsubsection{The three preference results}
	\label{sec:humanpref}

	\vspace{-1mm}
	\paragraph{Instructed tellings lose, and length does not explain it.}
	Against the same model's free telling of the same events, readers chose
	the free version \HCuedBetterInvPct\% of the time
	(\HCuedBetterW--\HCuedBetterL{}, \HCuedBetterT{} ties, reader-clustered \HCuedBetterCI{}), called the
	instructed version harder to follow \HCuedConfPct\% of the time, and split evenly on endings. \HCuedReadersAgainst{} of
	\HCuedReadersN{} readers and all \HCuedModelsN{} generators favor the free
	telling. Length favors the instructed texts, which are longer
	(\HCuedWordsHyp{} against \HCuedWordsOther{} words), and the longer text
	won \HCuedLongerPct\% of the time here. Among length-matched pairs,
	the instructed telling wins \HCuedLMPct\%.
	Of \HNCommentsCued{} written comments, \HNCommentsCuedOrder{} name order
	or difficulty following.

	\vspace{-1mm}
	\paragraph{The graph beats a supplied told order, which the judge
	missed.}
	This is the largest gap between readers and the judge. Readers chose the
	full graph's story over the told order's \HGOrderBetterPct\% of the time
	(\HGOrderBetterW--\HGOrderBetterL{}, $p \HGOrderBetterPrel$,
	reader-clustered \HGOrderBetterCI{}), with no cost in how hard it was to
	follow ($p \HGOrderConfPrel$) and a smaller advantage on endings
	(\HGOrderEndR{}). \HGOrderReadersFor{} of \HGOrderReadersN{} readers,
	\HGOrderStoriesFor{} of \HGOrderStoriesN{} stories and \HGOrderModelsFor{}
	of \HGOrderModelsN{} generators agree (the fourth is even), the
	length-adjusted rate is \HGOrderAdjPct\%, and every reader subset
	reproduces it (Table~\ref{tab:humansubsets}). But the two conditions
	differ in more than the graph, above all in ordering freedom. \GraphCondition{1}
	fixes \PinnedPct\% of told-order pairs and \GraphCondition{2} all of them, so
	they realize different orders (agreement with the gold telling
	\GOneAgree{} against \GTwoAgree{}). Instruction, content and prompt length
	differ too, and the graph also wins on \textsc{chronicle}
	(\HByCatOrderChron{}, $n = \HByCatOrderChronN$), where both are nearly
	chronological. We read this as a preference for graph-scaffolded over
	order-constrained writing, not as readers rewarding the graph's reasons
	for an order.

	\vspace{-1mm}
	\paragraph{The graph over a bare event list does not survive length.}
	\label{sec:humanlength}
    \textit{Prima facie} the graph wins here too (\HGNoneBetterPct\%, $p
	\HGNoneBetterPrel$), but this is the one comparison where length and
	condition are tangled. The graph's texts average \HGNoneWordsHyp{} words
	\textit{vs.} \HGNoneWordsOther{}, readers chose the longer text
	\HGNoneLongerPct\% of the time, and the word-count coefficient is the
	largest in the study ($\HGNoneLenBeta$ per hundred words, $p
	\HGNoneLenBetaPrel$). Adjusted for length, the rate is \HGNoneAdjPct\% ($p
	\HGNoneAdjPrel$), and the length-matched subset agrees (\HGNoneLMPct\%).
	It is the only row that moves (Figure~\ref{fig:humanjudge},
	\S\ref{sec:tables}). Since the graph itself lengthens the texts,
	length here is an outcome, not a prior covariate. The raw
	\HGNoneBetterPct\% records observed preference and the adjusted
	\HGNoneAdjPct\% estimates preference at equal word count; they answer
	different questions, and the second does not correct the first. The
	graph makes models write more and readers prefer more, and a run that did
	not hold length fixed cannot say whether it also makes them write better.
	The close-reading PDS gap (\HPdsGOne{} \textit{vs.} \HPdsGZero{}, $d =
	\HPdsGOneVsGZeroD$) carries the same caveat.

	\vspace{-2mm}
	\subsubsection{Readers prefer the graph, bar a layer}
	\label{sec:humanabl}

	No single layer stands out under either instrument, but pooled over
	layers, readers preferred the reduced graph's story \HGAblBetterInvPct\% of the time
	($p \HGAblBetterPrel$) and called the full graph's story harder to follow
	\HGAblConfPct\% of the time ($p \HGAblConfPrel$). Length does not explain
	this, since the conditions differ by a handful of words and the full graph
	still wins only \HGAblLMPct\% when length is matched. Every layer points the
	same way, the full graph winning between \HAblEpistemicR{} and
	\HAblQuestionsR{} of decisive judgments against each reduction
	(Table~\ref{tab:humanabl}). The main caveat is that this is almost entirely one generator.
	\GraphBestModel{}'s full graph wins \HAblClaudeSonnetW{} of
	\HAblClaudeSonnetN{} decisive judgments against its own reduced-graph
	output (\HAblClaudeSonnetPct\%, $p \HAblClaudeSonnetPrel$), losing about
	four in five, while \GraphWorstModel{} is at chance
	(\HAblGeminiFlashPct\%). The per-layer tests are also underpowered,
	so no layer is singled out and none is shown to be inert.

	\vspace{-2mm}
	\subsubsection{Does the judge agree with readers?}
	\label{sec:humanjudge}

	Pooled over the four comparisons, the judge and the readers' majority
	agree on \HJHPoolRaw{} of items where both are decisive, $\kappa =
	\HJHPoolKappa$ ($\HJHPoolCI$), and the judge agrees with individual
	readers on \HJHIndAgree{} of \HJHIndN{} judgments. Comparison by
	comparison (Figure~\ref{fig:teaser-b}; Table~\ref{tab:judgehuman}), the judge's rate lands within
	\HGapCuedPts{} points of the readers' on instructed versus free, though
	item-level agreement there is weak ($\kappa = \HJHCuedKappa$). Item-level
	agreement is highest on graph versus event list ($\kappa =
	\HJHGNoneKappa$, $\HJHGNoneCI$), the comparison with the smallest reader
	effect, where both instruments may be answering the same length gap
	(\HGNoneWordsHyp{} \textit{vs.} \HGNoneWordsOther{} words). On graph versus
	told order the judge sits at \HJudgeGOrderR{} and readers at
	\HGOrderBetterR{} ($\kappa = \HJHGOrderKappa$), and on the layer removals
	agreement is indistinguishable from none, $\kappa = \HJHGAblKappa$
	($\HJHGAblCI$). Item-level agreement therefore does not track effect size,
	and four comparisons cannot separate effect size from comparison type.
	What holds is narrower: the judge matched readers on the one large effect,
	and readers overturned both of its ties, so its ties are not evidence of
	no difference. This is one judge and one cohort, and the second-model
	check of \S\ref{sec:alignvalidation} tests alignment, not preference. The
	two sides also abstain differently: a reader tie means ``about the same'',
	a judge tie that its two passes disagreed.

	The depth scale fares better. Judge and reader PDS on the same
	\HPdsCorrN{} texts correlate at $r = \HPdsCorr$, but the judge rates
	uniformly high ($\HPdsGap$ above readers, \S\ref{sec:humanclose}),
	so PDS differences can be reported but a PDS \emph{level} from one judge
	cannot.

	\vspace{-2mm}
	\subsubsection{Our metrics against readers}
	\label{sec:bandvalid}

	\paragraph{$\tau$ predicts what readers find hard.} Across \HConfTauN{}
	decisive judgments, the text a reader called harder to follow is the less
	chronological of the pair by $\Delta\tau = \HConfTauDelta$ ($p \HConfTauPrel$,
	item-level $r = \HConfTauR$), so $\tau$ tracks a palpable reading cost readers report,
	though agreement on that question is low ($\alpha = \HAlphaConf$) and the
	association runs partly through condition (below).

	\vspace{-1mm}
	\paragraph{Band hit does not predict what readers prefer.} In the
	\HBandSplitN{} decisive judgments where exactly one text falls inside the
	human band, readers chose that text \HBandSplitPct\% of the time ($p
	\HBandSplitPrel$). The association stays negative in a model that includes
	condition-pair fixed effects and the word-count and $\tau$ differences
	(band hit $\beta = \HAdjBetaBand$, $p \HAdjPBandrel$, while $\tau$ enters
	positively at $\beta = \HAdjBetaTau$), and the close readings agree
	(\S\ref{sec:humanclose}). Both associations partly restate the
	condition effect, since the in-band text is almost always the instructed
	one. What band hit rewards is a simulacrum of published practice: the coordinates of a real telling without whatever made a reader want it. Depth and
	nonlinearity are in any case unrelated across these outputs
	(\S\ref{sec:humanclose}).

	\vspace{-2mm}
	\section{Discussion}
\label{sec:discussion}

Compliance with an ordering rule, resemblance to published tellings and
reader preference are distinct desiderata, and our results pull them
apart: over-compliance satisfies the rule and misses the band, band hit
tracks resemblance yet runs against preference, and readers prefer the
telling that leaves chronology alone. An evaluation that reports only one
of the three can certify a failure as a success. Part of the gap lies in
the specification: published reverse chronology is a rule applied to
scenes, our prompts stated it on events, and whether models find the
scene-level shape once scenes are marked is the most germane open
question. A scaffold that encodes discourse relations likewise encodes
part of the answer, so its value can be read only against a condition that
supplies the order directly, at matched output length.
\S\ref{sec:recommendations} distills these lessons into six practices for a
more stringent protocol, none of which needs new annotation.

\section{Conclusion}
	\label{sec:conclusion}

	Narrative order is a variable that story-generation research has largely
	left unmeasured. Recording it for \NWorks{} published works turns
	nonlinearity into a permutation that can be specified and scored
	(Table~\ref{tab:rqsummary}, \S\ref{sec:tables}).
	Left to choose, all \NAboveHuman{} models narrate more chronologically
	than published tellings chosen for their nonlinearity. Given a shape
	stated as a rule on events, they apply it literally and miss the band,
	whereas a scene-level reversal computed from the corpus alone lands inside
	it.

	Our own instruments needed as much scrutiny as the models. A task that
	resembles reading comprehension keeps \DedTauShare\% of its score with
	the telling removed, resemblance to published tellings does not predict
	what readers prefer, and the judge we tested called a tie where readers
	split \HGOrderBetterPct{} to \HGOrderBetterInvPct; a few hundred reader
	judgments on the same items are the cheapest litmus test of such a tie.
	Until our models apprehend the subtler art by which human tellers
	make time serve meaning, they may remain meticulous accountants of
	chronology, yet strangers to the spirit of story.

	\section*{Limitations}

	\vspace{-1mm}
	\paragraph{One judge still does all the extraction.}
	The human study replaces the judge for preference, not for measurement. Every
	structural score on generated prose, and the premature-disclosure rate, still
	depends on one model (Gemini~3.1~Pro) aligning prose back to the event list. The
	gates in \S\ref{sec:alignvalidation} bound that alignment error, and the reader data do
	not touch it. The judge also still shares a model family with one generator in
	each prose comparison, which we flag rather than eliminate, since no single
	judge can
	be cross-family to every model on the roster. Ordering, reconstruction and
	classification are scored deterministically from model outputs and need no
	judge at all.

	\vspace{-1mm}
	\paragraph{The reader study cannot reach past fourteen stories.}
	Every preference and close-reading result comes from the \HNStories{} works for
	which generated prose exists, because every prose-generating run used that
	subset to control cost. The shape-comprehension items reach all
	\HNShapeStories{} works, but they involve no generated text. Widening the
	preference evidence means a new generation run, not more readers, and the same
	ceiling is what forces a reader to meet the same story again in a later sitting
	(\S\ref{sec:humanchecks} shows that this does not move the results, but it is a
	constraint of the design rather than a feature of it). The generation
	side is narrower still: \GraphNModels{} generators for the scaffolding study,
	and \GraphCommonModels{} generators with one replicate for the ablations.

	\vspace{-1mm}
	\paragraph{The length confound is diagnosed, not removed.}
	We can say that the graph-versus-event-list preference does not survive
	adjusting for word count, and we cannot say what that comparison would look like
	with length held fixed at generation time. A regression adjustment answers a
	counterfactual question about the texts we have
	\citep{dubois2024length}; a length-matched generation run would answer the
	question directly, and it is the single cheapest experiment left in this paper.
	The same caveat attaches to the close-reading PDS gap, since the graph's texts
	were also read for longer. Our own Recommendation~5 asks for length matching, and here we only
	adjusted for length.

	\vspace{-1mm}
	\paragraph{The reader cohort is not a probability sample.}
	\HNReaders{} readers were recruited from university and secondary-school
	settings plus a group of general readers, so the cohort is more literate and
	more interested in fiction than a general population.
	\HReadsMonthly{} finish at least a book a month and \HWriteSome{} plus
	\HWriteDegree{} have some writing training. Table~\ref{tab:humansubsets} shows
	the results hold within every subset we can cut, including the least
	practiced
	readers, but a cohort drawn differently could differ, and we make no claim about
	readers in general. Agreement on \emph{better ending} is low enough
	($\alpha = \HAlphaEnd$) that the endings rows should be read as suggestive.

	\vspace{-1mm}
	\paragraph{Input order is a confound for the free tasks.}
	\Task{2} and \ProseTask{b} present the events in chronological order, and say so, so a model
	that merely keeps its input order scores as chronological. The ordering outputs
	show the default is not pure copying, since the pooled $\tau = \PoolTauFree$
	sits well short of the identity. In prose the two readings coincide, and
	separating a preference for chronology from a preference for the given order
	needs a shuffled-input control we have not run. The reference is also a selected sample. Of the \NWorks{} works,
	\NNonlinearWorks{} were chosen because they are widely recognized as
	nonlinear, while most published narrative is chronological
	(\S\ref{sec:shapes}), so against this corpus ``more chronological
	than the published telling'' is close to guaranteed. The informative
	comparison is a human storyteller given the same event list and prompt.

	\vspace{-1mm}
	\paragraph{Band hit measures resemblance, and resemblance is not quality.}
	It asks whether a telling looks like real works of its class. \S\ref{sec:bandvalid}
	measures it against reader preference and finds the association negative, so the
	metric should be read as a structural diagnostic and never as a quality proxy.
	That association also partly restates the condition effect, since the
	in-band text in an instructed-against-free pair is almost always the
	instructed one. And because the rectangle sees position in the
	$(\tau,\mathrm{LIS})$ plane rather than alternation or grouping, a telling
	can land in it without having the shape (\S\ref{sec:grid-notes}).
	What we cannot separate is how much of that negative sign is about band hit and
	how much is about these particular outputs. Readers prefer chronology in
	model-written prose, and hitting the band requires leaving it; a corpus of
	generated tellings that actually earned their disorder might reverse the sign.
	Testing that needs generators better at this task than the ones we have.

	\vspace{-1mm}
	\paragraph{The layer-removal result rests on two generators that disagree.}
	The ablations run on \AblBandN{} paired band-hit observations,
	\GraphCommonModels{} models and one replicate, and the reader study adds
	only
	\HGAblBetterN{} decisive judgments across five layers. At \HAblLayerN{} judgments
	per layer, 80\% power reaches only
	$|\text{rate} - 0.5| \geq \HAblMDE$, and the two generators that ran them go
	opposite ways ($\chi^2 = \HAblHetChi$, $p \HAblHetPrel$). Readers and the judge
	both fail to find a per-layer effect, which is now two instruments agreeing on a
	no-difference verdict rather than one. Neither amounts to an equivalence test, which would need a margin fixed in
	advance and enough precision to rule out everything beyond it. With
	\HAblLayerN{} judgments per layer, the smallest detectable departure is
	already \HAblMDE{}. That verdict, from a
	set of runs this size and this
	one-model-dominated, is not evidence the layers are inert. The prompt-length
	confound is also unresolved: \GraphCondition{1} is the longest prompt, the ablations
	are shorter, readers find \GraphCondition{1}'s output the harder one to follow, and the
	length-matched \emph{prompt} control that would separate ``the layer was
	useless'' from ``the prompt was diluted'' has not been run. We also lack an
	equal-compute agentic baseline. \GraphCondition{0} is a single call, so part of the
	\GraphCondition{1} structural gain may be ``more structure in the prompt'' rather than
	``this structure.'' Taken together, the reader data do not show that the graph helps. Its edge
	over no graph is a length effect, and its edge over the told order is
	confounded (\S\ref{sec:humanpref}). It also loses to every one of its
	one-layer reductions while tying no graph at equal length, and a prompt
	that is too long or distracting explains that non-monotone pattern more
	simply than any account of what the layers do.

	\vspace{-1mm}
	\paragraph{Scale and coverage.}
	The ordering and classification tasks run on all \NWorks{} works. The prose and
	scaffolding studies run on \ProseNStories{}, two per structural category. All
	decoding used low reasoning effort, checked only by a small ablation
	(\EffortNModels{} models on the same \ProseNStories{} works, \Task{1} and \Task{3}) that
	cannot rule out an effect of test-time compute
	(\S\ref{sec:grid-notes}).

	\vspace{-1mm}
	\paragraph{Contamination is reduced, not eliminated.}
	Anonymization removes names, places and quotations but cannot hide a premise, and
	the works added to fill the thin structural classes are the leakier ones,
	{precisely because a class was often thin for want of works whose structure
	does not by itself identify them.} The works are recognizable, and the example in Figure~\ref{fig:overview}
	is a well-known film even after anonymization. This matters most for the
	no-story floor of \S\ref{sec:floor}, which we read as entailment but which
	prior knowledge of a work would also produce, since de-dating removes
	explicit time anchors but not memory. Readers who had read the story
	reached only $\tau = \HChronTau$ on stories where the model floor is
	\HFloorFourteen{} (\S\ref{sec:humanfloor}), which a pure entailment
	account does not predict.

	\vspace{-1mm}
	\paragraph{English only.}
	Every event sentence, every authored graph string and the anonymized twin are
	written in English. The works themselves are not all Anglophone, and the corpus
	includes works first told in Korean, Japanese, Russian, German, French and
	Spanish. Each is annotated from an English-language telling or translation,
	however, and whether the taxonomy, the bands or the results transfer to other
	languages' narrative traditions is untested.

	\vspace{-1mm}
	\paragraph{No commissioned human writing.}
	Our human reference is the annotation of a published work, which is the right
	ceiling for the ordering, reconstruction and classification tasks but is not a
	human writer given the same brief as the models. We therefore make no
	human-level claim, and the paper measures model capability against a corpus
	reference. The reader study inherits this limit in its sharpest form: every text
	a reader judged was model-generated, so nothing in \S\ref{sec:human} speaks to
	the gap between these tellings and published fiction. When readers prefer a
	model's chronological telling to its instructed nonlinear one, that is a
	statement about two model outputs and not about nonlinearity.

	\vspace{-1mm}
	\paragraph{Annotation is interpretive, and unreplicated.}
	No work was annotated independently by a second person, so we report no
	inter-annotator agreement and treat measuring it as required future work.
	The band construction makes this worse. Each band is the rectangle spanned
	by the extreme $\tau$ and LIS of 10--22 single-annotator tellings, so a
	single segmentation decision can move a band edge. Leaving each work out
	of its own band moves pooled band hit between \LowoBandLo{} and
	\LowoBandHi{} (\LowoMovers{} works move it by more than 0.01), and a
	held-out work falls inside its peers' band \HeldOutInBandPct\% of the time
	(\S\ref{sec:grid-notes}). Some labels are also a stretch. Our ten
	chronicles span $\tau$ from \ChronTauLo{} to \ChronTauHi{}, and no work in
	the corpus is told exactly in order.
	Three of the seven structural labels turn on what scenes are about, and works
	with genuinely ambiguous chronology are annotated with the most defensible
	reading and the ambiguity recorded, so reasonable annotators would differ on
	some labels. We release the reasons alongside the labels so disagreements are
	locatable, and the machine checks of \S\ref{sec:validation} catch
	inconsistency but not misreading.

	\vspace{-1mm}
	\paragraph{Over-compliance may be a property of the prompt.}
	\Task{1} lists events one per line, marks no scene boundaries or simultaneous
	sets, and defines \textsc{retrograde} and \textsc{chronicle} as rules on
	event ranks, so the exact reversal and the identity are the literal
	solutions (Prop.~\ref{prop:rule}). The model never receives the
	scene-level practice that defines the human band. The \textsc{achrony}
	definition has a related problem. It describes unrecoverable chronology,
	which the corpus excludes, and reads as an annotation procedure rather
	than a description of a shape (\S\ref{sec:prompts}). Until the
	scene-marked rerun, these results measure the specification as much as the
	model.

	\vspace{-1mm}
	\paragraph{A permutation is not a discourse.}
	Our object is a bijection between a fixed event list and a told order, so
	every telling narrates every event exactly once. Published narrative does
	not always do this. It returns to an event, releases it in pieces, drops
	it, or tells it twice in versions that cannot both be true. None of these
	has a distinct place in a permutation, and the twelve to twenty beats we
	record summarize a work's discourse rather than reproduce it. Generated
	prose reaches the object through an aligner that must emit one position
	per event (\S\ref{sec:prompts}), so omissions and repetitions are
	projected onto a first mention, and the resulting error is unmeasured.
	What we study, and what the benchmark scores, is control over the order of
	selected plot beats.

	\vspace{-1mm}
	\paragraph{Simultaneity is scored as error.}
	The gold chronology is a total order, so ties between simultaneous events
	are broken arbitrarily. This affects the \textsc{chronicle} gold tellings,
	\Task{3} exact match (\ChronRecExactPct\% on \textsc{chronicle}) and the $\tau$
	of the chronicles themselves. We have not yet encoded the chronology as a
	weak order and scored it with Kendall's $\tau_b$ or equivalence-class
	exact match, as our own Recommendation~2 advises. The same choice affects
	the targets. The \textsc{chronicle} prompt accepts simultaneous events in
	either order, so exact recovery is scored against one arbitrary
	linearization of a partial order, which is why \ChronRecExactPct\% should
	not be read as a model result.

	\vspace{-1mm}
	\paragraph{The cost of control is not isolated.}
	Instructed tellings lose to free ones even when the requested shape is
	\textsc{chronicle} (Table~\ref{tab:humanbycat}), so the reader result
	mixes the cost of nonlinearity with the cost of the instruction itself
	(the definition, the added constraint, a longer prompt). A
	chronology-instructed control on all \ProseNStories{} works would separate
	the two, but we have not run it.

	\vspace{-1mm}
	\paragraph{The reader statistics do not model the crossed design.}
	Headline $p$-values are exact binomials over judgments, and the bootstrap
	intervals cluster on one factor at a time. The design is crossed (readers
	by items, nested in stories and generators), and with \HNStories{} stories
	and \GraphNModels{} generators the story- and generator-level variance
	matters most. We have not yet fit a mixed-effects model with crossed
	random intercepts. Three further problems follow. The \HNSittings{} hidden
	repeats are included in the rates we report, so \HNFirstPres{} of the
	\HNPresentations{} presentations are first presentations and some
	reader--item pairs are counted twice. The first-presentation analysis
	should be the primary one. The Bonferroni threshold corrects for twelve
	tests but not for correlated observations, which is the larger of the two
	problems. Finally, a \GraphCondition{1} text appears in more than one contrast,
	so the comparisons are not independent of each other either.

	\vspace{-1mm}
	\paragraph{Premature disclosure rests on few stories.}
	Once the work with no designated reveal is set aside, the contrast in
	\S\ref{sec:reveal} rests on 52 stories per condition, and half of its
	early disclosures come from one chronicle. The scorer also counted the
	missing reveal as early. We therefore report the result as suggestive
	(\S\ref{sec:rcg}).

	\section*{Ethics statement}

	The corpus is derived from published narrative works. We release hand-authored
	event summaries, structural annotations and graphs rather than source text, and
	the anonymized twin that models are evaluated on replaces all identifying
	surface detail. Anonymization addresses contamination, not rights: what we
	release is our own writing about published works, and nothing in it
	reconstitutes a copyrighted work or substitutes for reading one. The full
	release, comprising both corpora, the story graphs, all prompts, the stored
	model replies, the reader responses and the analysis code, is distributed
	under the MIT License.

	The refusal asymmetry in \S\ref{sec:negatives} deserves note as a fairness
	observation rather than a criticism of any safety policy. The three
	GPT-5.6 models, all from one lab,
	declined a consistent set of works at the dark-literary end of the corpus,
	including on tasks as innocuous as sorting events into chronological order. Those
	are disproportionately the works whose structure is most interesting, so a
	literary benchmark built on a single vendor will quietly under-sample them. We
	report the affected subset and repeat every comparison on the works all models
	answered.

	Annotation was carried out by the authors. The reader study of
	\S\ref{sec:human} involved \HNReaders{} adult participants recruited from
	university and secondary-school settings and from a group of general readers.
	Each sitting opened with a consent step stating what the study measured, how
	long it would take and that participation could be stopped at any point; no
	sitting proceeded without it. Only two background items were collected, reading
	habit and writing experience, both as coarse categories and both kept solely as
	covariates. No name, contact detail, demographic attribute or free-text
	identifier is stored. Participants are identified in the study data by an
	integer, and the free-text comments were read for content about the stories
	only. The material readers were shown is model-generated prose built from
	anonymized event summaries, and the works at the dark-literary end of the corpus
	carry the same content as their published sources; the consent step said so.

		\bibliography{custom}

\begin{thebibliography}{41}
\providecommand{\natexlab}[1]{#1}

\bibitem[{Artstein and Poesio(2008)}]{artstein-poesio-2008-survey}
Ron Artstein and Massimo Poesio. 2008.
\newblock \href {https://doi.org/10.1162/coli.07-034-R2} {Inter-coder agreement
  for computational linguistics}.
\newblock \emph{Computational Linguistics}, 34(4):555--596.

\bibitem[{Bae and Young(2008)}]{bae2008flashback}
Byung-Chull Bae and R.~Michael Young. 2008.
\newblock \href {https://doi.org/10.1007/978-3-540-89454-4_22} {A use of
  flashback and foreshadowing for surprise arousal in narrative using a
  plan-based approach}.
\newblock In \emph{Interactive Storytelling: First Joint International
  Conference on Interactive Digital Storytelling ({ICIDS} 2008)}, volume 5334
  of \emph{Lecture Notes in Computer Science}, pages 156--167. Springer.

\bibitem[{Chakrabarty et~al.(2024)Chakrabarty, Laban, Agarwal, Muresan, and
  Wu}]{chakrabarty2024art}
Tuhin Chakrabarty, Philippe Laban, Divyansh Agarwal, Smaranda Muresan, and
  Chien-Sheng Wu. 2024.
\newblock \href {https://doi.org/10.1145/3613904.3642731} {Art or artifice?
  {L}arge language models and the false promise of creativity}.
\newblock In \emph{Proceedings of the 2024 {CHI} Conference on Human Factors in
  Computing Systems}. Association for Computing Machinery.

\bibitem[{Chambers and Jurafsky(2008)}]{chambers-jurafsky-2008-unsupervised}
Nathanael Chambers and Dan Jurafsky. 2008.
\newblock Unsupervised learning of narrative event chains.
\newblock In \emph{Proceedings of ACL-08: HLT}, pages 789--797, Columbus, Ohio,
  USA. Association for Computational Linguistics.

\bibitem[{Cheong and Young(2008)}]{cheong2008suspense}
Yun-Gyung Cheong and R.~Michael Young. 2008.
\newblock \href {https://doi.org/10.1007/978-3-540-89454-4_21} {Narrative
  generation for suspense: Modeling and evaluation}.
\newblock In \emph{Interactive Storytelling: First Joint International
  Conference on Interactive Digital Storytelling ({ICIDS} 2008)}, volume 5334
  of \emph{Lecture Notes in Computer Science}, pages 144--155. Springer.

\bibitem[{Chhun et~al.(2022)Chhun, Colombo, Suchanek, and
  Clavel}]{chhun-etal-2022-human}
Cyril Chhun, Pierre Colombo, Fabian~M. Suchanek, and Chlo{\'e} Clavel. 2022.
\newblock Of human criteria and automatic metrics: A benchmark of the
  evaluation of story generation.
\newblock In \emph{Proceedings of the 29th International Conference on
  Computational Linguistics}, pages 5794--5836, Gyeongju, Republic of Korea.
  International Committee on Computational Linguistics.

\bibitem[{Chhun et~al.(2024)Chhun, Suchanek, and
  Clavel}]{chhun-etal-2024-language}
Cyril Chhun, Fabian~M. Suchanek, and Chlo{\'e} Clavel. 2024.
\newblock \href {https://doi.org/10.1162/tacl_a_00689} {Do language models
  enjoy their own stories? {P}rompting large language models for automatic
  story evaluation}.
\newblock \emph{Transactions of the Association for Computational Linguistics},
  12:1122--1142.

\bibitem[{Chiang and Lee(2023)}]{chiang-lee-2023-large}
Cheng-Han Chiang and Hung-yi Lee. 2023.
\newblock \href {https://doi.org/10.18653/v1/2023.acl-long.870} {Can large
  language models be an alternative to human evaluations?}
\newblock In \emph{Proceedings of the 61st Annual Meeting of the Association
  for Computational Linguistics (Volume 1: Long Papers)}, pages 15607--15631,
  Toronto, Canada. Association for Computational Linguistics.

\bibitem[{Chiang(1998)}]{chiang1998story}
Ted Chiang. 1998.
\newblock Story of your life.
\newblock In Patrick Nielsen~Hayden, editor, \emph{Starlight 2}, pages
  257--314. Tor Books, New York.

\bibitem[{Clark et~al.(2021)Clark, August, Serrano, Haduong, Gururangan, and
  Smith}]{clark-etal-2021-thats}
Elizabeth Clark, Tal August, Sofia Serrano, Nikita Haduong, Suchin Gururangan,
  and Noah~A. Smith. 2021.
\newblock \href {https://doi.org/10.18653/v1/2021.acl-long.565} {All that's
  `human' is not gold: Evaluating human evaluation of generated text}.
\newblock In \emph{Proceedings of the 59th Annual Meeting of the Association
  for Computational Linguistics and the 11th International Joint Conference on
  Natural Language Processing (Volume 1: Long Papers)}, pages 7282--7296.

\bibitem[{Dror et~al.(2018)Dror, Baumer, Shlomov, and
  Reichart}]{dror-etal-2018-hitchhikers}
Rotem Dror, Gili Baumer, Segev Shlomov, and Roi Reichart. 2018.
\newblock \href {https://doi.org/10.18653/v1/P18-1128} {The hitchhiker's guide
  to testing statistical significance in natural language processing}.
\newblock In \emph{Proceedings of the 56th Annual Meeting of the Association
  for Computational Linguistics (Volume 1: Long Papers)}, pages 1383--1392.

\bibitem[{Dubois et~al.(2024)Dubois, Galambosi, Liang, and
  Hashimoto}]{dubois2024length}
Yann Dubois, Bal{\'a}zs Galambosi, Percy Liang, and Tatsunori~B. Hashimoto.
  2024.
\newblock \href {https://arxiv.org/abs/2404.04475} {Length-controlled
  {AlpacaEval}: A simple way to debias automatic evaluators}.
\newblock In \emph{Proceedings of the First Conference on Language Modeling
  (COLM)}.

\bibitem[{Elson(2012)}]{elson-2012-dramabank}
David~K. Elson. 2012.
\newblock {DramaBank}: Annotating agency in narrative discourse.
\newblock In \emph{Proceedings of the Eighth International Conference on
  Language Resources and Evaluation ({LREC}'12)}, pages 2813--2819, Istanbul,
  Turkey. European Language Resources Association (ELRA).

\bibitem[{Fan et~al.(2018)Fan, Lewis, and Dauphin}]{fan-etal-2018-hierarchical}
Angela Fan, Mike Lewis, and Yann Dauphin. 2018.
\newblock \href {https://doi.org/10.18653/v1/P18-1082} {Hierarchical neural
  story generation}.
\newblock In \emph{Proceedings of the 56th Annual Meeting of the Association
  for Computational Linguistics (Volume 1: Long Papers)}, pages 889--898,
  Melbourne, Australia. Association for Computational Linguistics.

\bibitem[{Genette(1980)}]{genette1980narrative}
G{\'e}rard Genette. 1980.
\newblock \emph{Narrative Discourse: An Essay in Method}.
\newblock Cornell University Press, Ithaca, NY.
\newblock Translated by Jane E. Lewin.

\bibitem[{Han et~al.(2022)Han, Chen, Tian, and Peng}]{han-etal-2022-go}
Rujun Han, Hong Chen, Yufei Tian, and Nanyun Peng. 2022.
\newblock \href {https://doi.org/10.18653/v1/2022.naacl-main.104} {Go back in
  time: Generating flashbacks in stories with event temporal prompts}.
\newblock In \emph{Proceedings of the 2022 Conference of the North American
  Chapter of the Association for Computational Linguistics: Human Language
  Technologies}, pages 1450--1470, Seattle, United States. Association for
  Computational Linguistics.

\bibitem[{Harel-Canada et~al.(2024)Harel-Canada, Zhou, Muppalla, Yildiz, Kim,
  Sahai, and Peng}]{harelcanada2024measuring}
Fabrice~Y Harel-Canada, Hanyu Zhou, Sreya Muppalla, Zeynep~Senahan Yildiz,
  Miryung Kim, Amit Sahai, and Nanyun Peng. 2024.
\newblock \href {https://doi.org/10.18653/v1/2024.emnlp-main.953} {Measuring
  psychological depth in language models}.
\newblock In \emph{Proceedings of the 2024 Conference on Empirical Methods in
  Natural Language Processing}, pages 17162--17196, Miami, Florida, USA.
  Association for Computational Linguistics.

\bibitem[{Jung(1969)}]{jung1969archetypesessay}
Carl~Gustav Jung. 1969.
\newblock Archetypes of the collective unconscious.
\newblock In \emph{The Archetypes and the Collective Unconscious}, second
  edition, volume 9, Part 1 of \emph{The Collected Works of C. G. Jung}, pages
  3--41. Princeton University Press, Princeton, NJ.
\newblock Bollingen Series XX. Translated by R. F. C. Hull.

\bibitem[{Karpinska et~al.(2021)Karpinska, Akoury, and
  Iyyer}]{karpinska-etal-2021-perils}
Marzena Karpinska, Nader Akoury, and Mohit Iyyer. 2021.
\newblock \href {https://doi.org/10.18653/v1/2021.emnlp-main.97} {The perils of
  using {M}echanical {T}urk to evaluate open-ended text generation}.
\newblock In \emph{Proceedings of the 2021 Conference on Empirical Methods in
  Natural Language Processing}, pages 1265--1285.

\bibitem[{Kendall(1938)}]{kendall1938new}
Maurice~G. Kendall. 1938.
\newblock \href {https://doi.org/10.1093/biomet/30.1-2.81} {A new measure of
  rank correlation}.
\newblock \emph{Biometrika}, 30(1/2):81--93.

\bibitem[{Kim et~al.(2018)Kim, Bach, Im, Schriber, Gross, and
  Pfister}]{kim2018storycurves}
Nam~Wook Kim, Benjamin Bach, Hyejin Im, Sasha Schriber, Markus Gross, and
  Hanspeter Pfister. 2018.
\newblock \href {https://doi.org/10.1109/TVCG.2017.2744118} {Visualizing
  nonlinear narratives with story curves}.
\newblock \emph{IEEE Transactions on Visualization and Computer Graphics},
  24(1):595--604.

\bibitem[{Krippendorff(2018)}]{krippendorff2018content}
Klaus Krippendorff. 2018.
\newblock \emph{Content Analysis: An Introduction to Its Methodology}, fourth
  edition.
\newblock Sage.

\bibitem[{Li et~al.(2025)Li, Chen, Liu, Tan, Zhang, Li, Luo, Chen, Luo, Argha,
  Alinejad-Rokny, Zhou, and Yang}]{li-etal-2025-storyteller}
Jiaming Li, Yukun Chen, Ziqiang Liu, Minghuan Tan, Lei Zhang, Yunshui Li, Run
  Luo, Longze Chen, Jing Luo, Ahmadreza Argha, Hamid Alinejad-Rokny, Wei Zhou,
  and Min Yang. 2025.
\newblock \href {https://doi.org/10.18653/v1/2025.findings-acl.1071}
  {{STORYTELLER}: An enhanced plot-planning framework for coherent and cohesive
  story generation}.
\newblock In \emph{Findings of the Association for Computational Linguistics:
  ACL 2025}, pages 20818--20846. Association for Computational Linguistics.

\bibitem[{Mani(2012)}]{mani2012computational}
Inderjeet Mani. 2012.
\newblock \href {https://doi.org/10.2200/S00459ED1V01Y201212HLT018}
  {\emph{Computational Modeling of Narrative}}.
\newblock Number~18 in Synthesis Lectures on Human Language Technologies.
  Morgan \& Claypool.

\bibitem[{Montfort(2007)}]{montfort2007ordering}
Nick Montfort. 2007.
\newblock Ordering events in interactive fiction narratives.
\newblock In \emph{Intelligent Narrative Technologies: Papers from the 2007
  {AAAI} Fall Symposium (Technical Report FS-07-05)}. AAAI Press.

\bibitem[{Mostafazadeh et~al.(2016)Mostafazadeh, Chambers, He, Parikh, Batra,
  Vanderwende, Kohli, and Allen}]{mostafazadeh-etal-2016-corpus}
Nasrin Mostafazadeh, Nathanael Chambers, Xiaodong He, Devi Parikh, Dhruv Batra,
  Lucy Vanderwende, Pushmeet Kohli, and James Allen. 2016.
\newblock \href {https://doi.org/10.18653/v1/N16-1098} {A corpus and cloze
  evaluation for deeper understanding of commonsense stories}.
\newblock In \emph{Proceedings of the 2016 Conference of the North {A}merican
  Chapter of the Association for Computational Linguistics: Human Language
  Technologies}, pages 839--849, San Diego, California. Association for
  Computational Linguistics.

\bibitem[{Ning et~al.(2018)Ning, Wu, and Roth}]{ning-etal-2018-multi}
Qiang Ning, Hao Wu, and Dan Roth. 2018.
\newblock A multi-axis annotation scheme for event temporal relations.
\newblock In \emph{Proceedings of the 56th Annual Meeting of the Association
  for Computational Linguistics (Volume 1: Long Papers)}, pages 1318--1328,
  Melbourne, Australia. Association for Computational Linguistics.

\bibitem[{Piper and Bagga(2024)}]{piper-bagga-2024-using}
Andrew Piper and Sunyam Bagga. 2024.
\newblock Using large language models for understanding narrative discourse.
\newblock In \emph{Proceedings of the 6th Workshop on Narrative Understanding},
  pages 37--46. Association for Computational Linguistics.

\bibitem[{Pustejovsky et~al.(2003)Pustejovsky, Hanks, Saur{\'\i}, See,
  Gaizauskas, Setzer, Radev, Sundheim, Day, Ferro, and
  Lazo}]{pustejovsky2003timebank}
James Pustejovsky, Patrick Hanks, Roser Saur{\'\i}, Andrew See, Robert
  Gaizauskas, Andrea Setzer, Dragomir Radev, Beth Sundheim, David Day, Lisa
  Ferro, and Marcia Lazo. 2003.
\newblock The {TIMEBANK} corpus.
\newblock In \emph{Proceedings of Corpus Linguistics}, pages 647--656.

\bibitem[{Robbe-Grillet(1959)}]{robbegrillet1959jealousy}
Alain Robbe-Grillet. 1959.
\newblock \emph{Jealousy}.
\newblock Grove Press, New York.
\newblock Translated from the French by Richard Howard.

\bibitem[{Singer(2016)}]{singer2016citizenkane}
Leigh Singer. 2016.
\newblock \href
  {https://www.bfi.org.uk/features/20-inspired-visual-moments-citizen-kane} {20
  inspired visual moments in {Citizen Kane}}.
\newblock British Film Institute.

\bibitem[{Thucydides(2009)}]{thucydides2009peloponnesian}
Thucydides. 2009.
\newblock \emph{The Peloponnesian War}.
\newblock Oxford World's Classics. Oxford University Press, Oxford.
\newblock Translated by Martin Hammond; introduction and notes by P. J. Rhodes.

\bibitem[{Tian et~al.(2024)Tian, Huang, Liu, Jiang, Spangher, Chen, May, and
  Peng}]{tian-etal-2024-large}
Yufei Tian, Tenghao Huang, Miri Liu, Derek Jiang, Alexander Spangher, Muhao
  Chen, Jonathan May, and Nanyun Peng. 2024.
\newblock \href {https://doi.org/10.18653/v1/2024.emnlp-main.978} {Are large
  language models capable of generating human-level narratives?}
\newblock In \emph{Proceedings of the 2024 Conference on Empirical Methods in
  Natural Language Processing}, pages 17659--17681, Miami, Florida, USA.
  Association for Computational Linguistics.

\bibitem[{UzZaman et~al.(2013)UzZaman, Llorens, Derczynski, Allen, Verhagen,
  and Pustejovsky}]{uzzaman-etal-2013-semeval}
Naushad UzZaman, Hector Llorens, Leon Derczynski, James Allen, Marc Verhagen,
  and James Pustejovsky. 2013.
\newblock {SemEval}-2013 task 1: {TempEval}-3: Evaluating time expressions,
  events, and temporal relations.
\newblock In \emph{Second Joint Conference on Lexical and Computational
  Semantics (*SEM), Volume 2: Proceedings of the Seventh International Workshop
  on Semantic Evaluation (SemEval 2013)}, pages 1--9, Atlanta, Georgia, USA.
  Association for Computational Linguistics.

\bibitem[{Vogler(2007)}]{vogler2007writers}
Christopher Vogler. 2007.
\newblock \emph{The Writer's Journey: Mythic Structure for Writers}, 3rd
  edition.
\newblock Michael Wiese Productions, Studio City, CA.

\bibitem[{Wagner et~al.(2025)Wagner, Keydar, and Abend}]{wagner2025fair}
Eitan Wagner, Renana Keydar, and Omri Abend. 2025.
\newblock \href {https://arxiv.org/abs/2507.13841} {The challenge and reward of
  fair play in narrative: A computational approach}.
\newblock \emph{Computing Research Repository}, arXiv:2507.13841.

\bibitem[{Xie and Riedl(2024)}]{xie-riedl-2024-creating}
Kaige Xie and Mark Riedl. 2024.
\newblock \href {https://doi.org/10.18653/v1/2024.eacl-long.147} {Creating
  suspenseful stories: Iterative planning with large language models}.
\newblock In \emph{Proceedings of the 18th Conference of the European Chapter
  of the Association for Computational Linguistics (Volume 1: Long Papers)},
  pages 2391--2407, St. Julian's, Malta. Association for Computational
  Linguistics.

\bibitem[{Yang et~al.(2022)Yang, Tian, Peng, and Klein}]{yang-etal-2022-re3}
Kevin Yang, Yuandong Tian, Nanyun Peng, and Dan Klein. 2022.
\newblock \href {https://doi.org/10.18653/v1/2022.emnlp-main.296} {{Re3}:
  Generating longer stories with recursive reprompting and revision}.
\newblock In \emph{Proceedings of the 2022 Conference on Empirical Methods in
  Natural Language Processing}, pages 4393--4479, Abu Dhabi, United Arab
  Emirates. Association for Computational Linguistics.

\bibitem[{Yao et~al.(2019)Yao, Peng, Weischedel, Knight, Zhao, and
  Yan}]{yao2019plan}
Lili Yao, Nanyun Peng, Ralph Weischedel, Kevin Knight, Dongyan Zhao, and Rui
  Yan. 2019.
\newblock \href {https://doi.org/10.1609/aaai.v33i01.33017378} {Plan-and-write:
  Towards better automatic storytelling}.
\newblock In \emph{Proceedings of the AAAI Conference on Artificial
  Intelligence}, volume~33, pages 7378--7385. AAAI Press.

\bibitem[{Yu et~al.(2025)Yu, Shi, Zhao, and Penn}]{yu-etal-2025-multi}
Tian Yu, Ken Shi, Zixin Zhao, and Gerald Penn. 2025.
\newblock \href {https://doi.org/10.18653/v1/2025.in2writing-1.9} {Multi-agent
  based character simulation for story writing}.
\newblock In \emph{Proceedings of the Fourth Workshop on Intelligent and
  Interactive Writing Assistants (In2Writing 2025)}, pages 87--108. Association
  for Computational Linguistics.

\bibitem[{Zheng et~al.(2023)Zheng, Chiang, Sheng, Zhuang, Wu, Zhuang, Lin, Li,
  Li, Xing, Zhang, Gonzalez, and Stoica}]{zheng2023judging}
Lianmin Zheng, Wei-Lin Chiang, Ying Sheng, Siyuan Zhuang, Zhanghao Wu, Yonghao
  Zhuang, Zi~Lin, Zhuohan Li, Dacheng Li, Eric~P. Xing, Hao Zhang, Joseph~E.
  Gonzalez, and Ion Stoica. 2023.
\newblock \href {https://arxiv.org/abs/2306.05685} {Judging {LLM}-as-a-judge
  with {MT-Bench} and {Chatbot Arena}}.
\newblock In \emph{Advances in Neural Information Processing Systems 36: Annual
  Conference on Neural Information Processing Systems 2023, Datasets and
  Benchmarks Track}.

\end{thebibliography}
		\appendix

	\section*{Contents of the appendix}
	\vspace{-2pt}
	{\small
		\begin{tabbing}
			\hspace{2.1em}\=\hspace{20.5em}\=\kill
			\ref{sec:shapes} \> Preliminaries: the seven shapes
			\> p.~\pageref{sec:shapes} \\
			\ref{sec:setup-extra} \> Experimental setup in detail
			\> p.~\pageref{sec:setup-extra} \\
			\ref{sec:related-extra} \> Related work, in detail
			\> p.~\pageref{sec:related-extra} \\
			\ref{sec:formal} \> Formal analysis, with proofs
			\> p.~\pageref{sec:formal} \\
			\ref{sec:tables} \> Supporting tables and figures
			\> p.~\pageref{sec:tables} \\
			\ref{sec:human-extra} \> The reader study in full
			\> p.~\pageref{sec:human-extra} \\
			\ref{sec:recommendations} \> Reporting recommendations
			\> p.~\pageref{sec:recommendations} \\
			\ref{sec:validation} \> What the corpus build checks
			\> p.~\pageref{sec:validation} \\
			\ref{sec:grid-notes} \> Notes on the benchmark grid
			\> p.~\pageref{sec:grid-notes} \\
			\ref{sec:rcg} \> Retrospective coherence gain
			\> p.~\pageref{sec:rcg} \\
			\ref{sec:confusion} \> Structural classification
			\> p.~\pageref{sec:confusion} \\
			\ref{sec:prompts} \> Task and judge prompts
			\> p.~\pageref{sec:prompts} \\
			\ref{sec:instrument} \> The reader instrument
			\> p.~\pageref{sec:instrument} \\
			\ref{sec:appendix} \> Reproduction
			\> p.~\pageref{sec:appendix} \\
			\ref{sec:proofs} \> Proofs
			\> p.~\pageref{sec:proofs} \\
	\end{tabbing}}
	\vspace{-6pt}

	\section{Preliminaries: the seven shapes}
	\label{sec:shapes}

	\begin{figure*}[t]
		\centering
		\begin{subfigure}{0.235\textwidth}\centering
			\includegraphics[width=\linewidth]{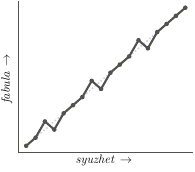}
			\caption{\textsc{chronicle}, $\tau=\ShapeTauChronicle$}\label{fig:shape-chronicle}
		\end{subfigure}\hfill
		\begin{subfigure}{0.235\textwidth}\centering
			\includegraphics[width=\linewidth]{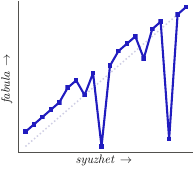}
			\caption{\textsc{analepsis}, $\tau=\ShapeTauAnalepsis$}\label{fig:shape-analepsis}
		\end{subfigure}\hfill
		\begin{subfigure}{0.235\textwidth}\centering
			\includegraphics[width=\linewidth]{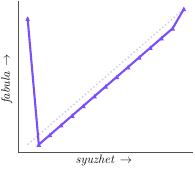}
			\caption{\textsc{prolepsis}, $\tau=\ShapeTauProlepsis$}\label{fig:shape-prolepsis}
		\end{subfigure}\hfill
		\begin{subfigure}{0.235\textwidth}\centering
			\includegraphics[width=\linewidth]{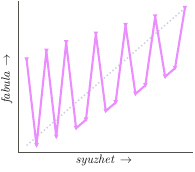}
			\caption{\textsc{zigzag}, $\tau=\ShapeTauZigzag$}\label{fig:shape-zigzag}
		\end{subfigure}

		\vspace{1.5mm}

		\begin{subfigure}{0.235\textwidth}\centering
			\includegraphics[width=\linewidth]{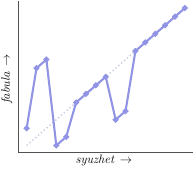}
			\caption{\textsc{syllepsis}, $\tau=\ShapeTauSyllepsis$}\label{fig:shape-syllepsis}
		\end{subfigure}\hfill
		\begin{subfigure}{0.235\textwidth}\centering
			\includegraphics[width=\linewidth]{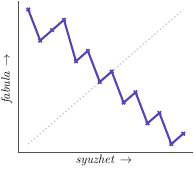}
			\caption{\textsc{retrograde}, $\tau=\ShapeTauRetrograde$}\label{fig:shape-retrograde}
		\end{subfigure}\hfill
		\begin{subfigure}{0.235\textwidth}\centering
			\includegraphics[width=\linewidth]{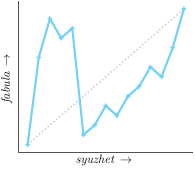}
			\caption{\textsc{achrony}, $\tau=\ShapeTauAchrony$}\label{fig:shape-achrony}
		\end{subfigure}\hfill

		\caption{The seven shapes, each drawn from the corpus work nearest its
			category's median in $(\tau,\mathrm{LIS})$. Told position runs across and
			chronological position up, so chronological order is the dotted diagonal.
			Published \textsc{retrograde} descends in steps with short forward runs,
			not in the straight corner-to-corner line models return
			(\S\ref{sec:cued}). Colors match Figure~\ref{fig:plane}.}
		\label{fig:shapes}
	\end{figure*}

	This section restates \S\ref{sec:setting} for readers new to the
	vocabulary, with a picture of each shape.

	A narrative has two orders. The \fabula{} is the order in which things
	happened, and the \syuzhet{} is the order in which the telling presents them
	\citep{genette1980narrative}. Number the events $1,\dots,n$ by when they
	happened, then read the telling from its first moment to its last and write
	down the number of whatever is narrated at each step. That list of numbers is
	the whole object this paper studies: it is a permutation, it can be compared
	to another permutation by arithmetic, and it is what the corpus records
	alongside the events themselves.

	Figure~\ref{fig:shapes} draws that permutation for one work of each shape,
	told position running across and chronological position running up. A story
	told in the order things happened would trace the dotted diagonal exactly.
	Everything interesting in this paper is a departure from that diagonal:
	Kendall's $\tau$ measures how much of the diagonal's forward direction
	survives, and the normalized longest increasing subsequence measures the largest share
	of events that are already in chronological order, whether or not they are
	adjacent in the telling. Together they
	distinguish a telling that moves one block out of place from one that
	interleaves two timelines throughout, although the two can have the same $\tau$.

The seven shapes shown in Figure~\ref{fig:shapes} describe the corpus labels at the scene level.
A scene may contain several events, so its position in the telling
must be distinguished from the order of events within it.
\S\ref{sec:prompts} reproduces the definitions used in the
experiments and identifies their differences from these corpus definitions.

    \paragraph{\textsc{chronicle}.}
    The telling follows chronology. The line sits on the diagonal, broken at most
    where two things happen at once and the telling has to pick an order for them.
    \textcolor{black}{For example, Montfort identifies chronological stretches
    of Thucydides' \textit{The Peloponnesian War} as chronicles
    \citep{thucydides2009peloponnesian}. The corpus includes this shape as the case in
    which the correct ordering leaves chronology intact.}

    \paragraph{\textsc{analepsis}.}
    One timeline dominates , and the telling interrupts it with material from earlier,
    then returns to the dominant timeline. This is the flashback, and it is by a wide margin the most common nonlinear shape in the corpus.
    \textcolor{black}{In \textit{Lost}, for example, flashbacks revisit the
    characters' lives before they were stranded on the island
    \citep{kim2018storycurves}. In the curve, a flashback appears as a backward
    excursion from the surrounding chronological progression, followed by a
    return.}

    \paragraph{\textsc{prolepsis}.}
    A mirror of \textsc{analepsis} where one timeline dominates and the telling  interrupts it with material from later, then
    returns to the dominant timeline.
    The interruption must be a real scene the reader can place afterwards, not a
    vague foreboding, which is what separates a flash-forward from atmosphere.
    \textcolor{black}{\textit{Arrival}, for example, presents episodes from a
    character's future \citep{chiang1998story}. In the curve, these anticipations
    appear as forward excursions followed by a return to the dominant timeline.}

    \paragraph{\textsc{zigzag}.}
    \textcolor{black}{Two periods alternate on a regular beat, each
    progressing in a consistent direction, with paired scenes related by echo,
    explanation or contrast.} The line makes a regular sawtooth.
    \textcolor{black}{In \textit{Memento}, the black-and-white scenes move
    forward while the color scenes move backward; their interleaving illustrates
    how a zigzag can combine strands with different temporal directions
    \citep{kim2018storycurves}.}

    \paragraph{\textsc{syllepsis}.}
    No timeline dominates; the scenes are grouped by something other than time,
    such as place, character, theme or object. The test the corpus applies is
    whether the grouping principle can be stated in a sentence. The line is
    blocky, orderly inside each block and unordered between them.
    \textcolor{black}{For instance, the multiple
    plotlines of \textit{Pulp Fiction} and \textit{Love Actually} can be used to illustrate this
    grouping principle \citep{kim2018storycurves}.}

    \paragraph{\textsc{retrograde}.}
    The telling runs backwards. The panel shows what published reverse chronology
    actually looks like and why it matters to this paper: a descending staircase,
    not a descending line. Scenes are told in reverse order while each scene plays
    forward inside itself, so the line falls step by step with a short forward run
    on every step.
    \textcolor{black}{The color strand of \textit{Memento} offers a familiar
    example: successive scenes lead back to earlier events
    \citep{kim2018storycurves}.}
    The pure reversal, a single line from top left to bottom right,
    is the shape no work in the corpus has and the one models produce when
    instructed (\S\ref{sec:cued}).

    \paragraph{\textsc{achrony}.}
    \textcolor{black}{In the broader narratological sense, achrony describes a
    telling whose temporal relations cannot readily be established, \textit{e.g.}, \citet{robbegrillet1959jealousy}'s \textit{La Jalousie}. The corpus uses a narrower operational label: the
    chronology must remain reconstructable, while the telling follows no statable
    ordering pattern and is not merely awaiting a late reveal
    (\S\ref{sec:corpus}).} The line wanders. The definition the models saw (\S\ref{sec:prompts})	describes a telling whose chronological ranks cannot be assigned, which is
	the case the corpus excludes. This partly explains the low \textsc{achrony} recall reported in \S\ref{sec:negatives}.

\section{Experimental setup in detail}
\label{sec:setup-extra}

This appendix gives the setup details that \S\ref{sec:setup} summarizes.

\paragraph{What the models see.}
Each base prompt for \Task{1}--\Task{4} and \ProseTask{a}/\ProseTask{b} lists one self-contained event
sentence per line. \Task{1}, \Task{2}, \ProseTask{a} and \ProseTask{b} receive the events in chronological
order; \Task{3} and \Task{4} receive them in the gold telling order. A model never sees
the story id, either order column, scene annotations, metrics, annotator
notes, graphs, or the de-anonymization key. \S\ref{sec:prompts} gives the
verbatim prompts and \S\ref{sec:grid-notes} the decoding and scoring
conventions.

\paragraph{Reconstruction controls.}
\Task{3} normally receives the gold telling order. To test how much of its score
comes from that order rather than from the event descriptions, three
selected models also reconstruct chronology from a shuffled event list.
We call this shuffled-input score the reconstruction baseline or
\emph{no-story floor}; it is measured for these three models rather than
for every evaluated model. The full-telling and shuffled-input conditions
are also rerun on descriptions rewritten by a model to remove explicit
time cues, such as dates and ``years earlier'' (Table~\ref{tab:dedate}).
In both reconstruction conditions, indices identify event lines
rather than gold chronological ranks (\S\ref{sec:grid-notes}).

\paragraph{Graph conditions.}
\GraphCondition{0}, \GraphCondition{1} and \GraphCondition{2} all retain the \ProseTask{a} target-shape
cue; only the graph or order input differs, and none uses \ProseTask{b}. Each
ablation removes one annotation group from \GraphCondition{1} together with its
description in the prompt. The main graph conditions use \GraphNModels{}
models; the ablations use the \GraphCommonModels{} generators that ran all
conditions, with one replicate. The graph study comprises \GraphNStories{}
stories and \GraphNJudged{} blindly judged pairs (Table~\ref{tab:grapharm}).

\paragraph{Band hit and pooled scores.}
A band hit measures resemblance to corpus examples; it does not fully test
a narrative shape, because different structures can share both scores and
a telling outside the band may still follow the shape. Pooled scores
average valid rows of the \NModelsOrdering{} evaluated models;
higher-effort runs, permutation baselines and the human reference are
reported separately.

\paragraph{Psychological depth.}
The judge rates authenticity, empathy, engagement, emotion provocation and
narrative complexity from 1 to 5 using the Psychological Depth Scale
\citep{harelcanada2024measuring}; readers use the same scales during close
readings.

\paragraph{Premature disclosure.}
For each work with a designated reveal, we take the \texttt{REVEALS} edge
whose explanation appears latest in the gold telling. Disclosure is early
if fewer than three events precede that explanation's aligned first mention
(\S\ref{sec:rcg}).

\paragraph{Reader statistics.}
Reader preference rates exclude ties. We report exact binomial tests and
95\% cluster-bootstrap intervals, resampling separately over readers and
stories \citep{dror-etal-2018-hitchhikers}; the Bonferroni threshold is
$0.0042$ for twelve tests. Length-adjusted rates use generalized estimating
equations with word-count difference and reader clusters; length-matched
subsets contain pairs whose lengths differ by at most 40 words
(\S\ref{sec:appendix}).
	\section{Related work, in detail}
	\label{sec:related-extra}

	\S\ref{sec:related} states where this work sits. This section discusses each
	line of work it draws on.

	\paragraph{Narratology.}
	The \fabula{}~/~\syuzhet{} distinction and the vocabulary our taxonomy uses,
	from analepsis and prolepsis to syllepsis and achrony, come from Genette
	\citeyearpar{genette1980narrative}. The seven structural shapes of
	\S\ref{sec:setting} partition that vocabulary into labels a scene permutation
	can be read against, which is the step that turns a critical vocabulary into
	something an experiment can score.

	\paragraph{Story generation with language models.}
	Hierarchical generation separates a premise from its realization
	\citep{fan-etal-2018-hierarchical}, Plan-and-Write constructs a storyline
	before prose \citep{yao2019plan}, and Re3 combines planning, repeated
	prompting and revision for longer stories \citep{yang-etal-2022-re3}.
	\citet{tian-etal-2024-large} show that model-written narratives are
	structurally homogeneous relative to human ones; \S\ref{sec:spontaneous}
	reproduces that result on a controlled corpus and sharpens it, because holding
	the events fixed removes the possibility that the difference is in what
	happens rather than in how it is told. \citet{xie-riedl-2024-creating} plan
	iteratively for suspense and \citet{li-etal-2025-storyteller} for plot
	coherence. Suspense planning is a close case for us, since withholding is a
	discourse decision and therefore a statement about the told order.

	\paragraph{Controlling the order of telling.}
	Two lines target the variable we study. \citet{yu-etal-2025-multi} simulate
	characters with multiple agents and then rewrite the chronological simulation
	into narrative order, which makes the two orders explicit inside a system
	rather than inside a corpus. \citet{han-etal-2022-go} guide flashback
	generation through temporal prompts between neighboring events, which is local
	control of the same variable. Our benchmark complements both by representing
	the telling order as a permutation that can be specified, scored and compared
	against published practice across seven structural categories while the events
	stay fixed, which is what makes the over-compliance of \S\ref{sec:cued} visible at
	all. A check on direction alone would score it as compliance.

	\paragraph{Discourse order before language models.}
	Separating the order of telling from chronology has computational
	precedent. \citet{montfort2007ordering} formalizes Genette's categories of
	order in an interactive-fiction system that narrates one simulated event
	model in different, non-chronological orders. Plan-based generators place
	flashback and foreshadowing to arouse surprise \citep{bae2008flashback} and
	select and order story content to evoke suspense \citep{cheong2008suspense}.
	Story Intention Graphs annotate a text against a timeline layer
	\citep{elson-2012-dramabank}, and \citet{mani2012computational} surveys how
	narrative time has been modeled computationally. Story Curves aligns the two
	scene orders for a set of films \citep{kim2018storycurves}. What we add is
	scale across seven shapes and \NWorks{} works, a typed graph over the same
	events, and a benchmark that scores language models against published
	practice.

	\paragraph{Temporal ordering.}
	Recovering chronology from text is a long-standing task in its own right, from
	TimeBank \citep{pustejovsky2003timebank} and TempEval
	\citep{uzzaman-etal-2013-semeval} to the multi-axis scheme of
	\citet{ning-etal-2018-multi}. \citet{chambers-jurafsky-2008-unsupervised} is
	the closest ancestor of our framing, because it treats event order as
	narrative structure rather than as pairwise relation labeling. Those resources
	annotate relations inside a text as given; we annotate the permutation between
	two orders of the same events, which makes the telling order a target one can
	specify rather than only a label one can recover. Our \Task{3} is a document-level
	instance of their task, and \S\ref{sec:floor} is why the connection matters:
	any ordering evaluation over self-contained event sentences is exposed to the
	entailment floor we measure there. Sentence-ordering benchmarks built on ROCStories
	\citep{mostafazadeh-etal-2016-corpus} are the nearest analogue of \Task{3}. They
	also ask for the order of self-contained sentences, so the same floor
	applies to them.

	\paragraph{Evaluation instruments.}
	We adopt the Psychological Depth Scale of \citet{harelcanada2024measuring}
	unchanged for reader-experience ratings, and follow
	\citet{piper-bagga-2024-using} in treating narrative discourse properties as
	things a language model can be probed for. Closest to our
	retrospective-coherence measure is \citet{wagner2025fair}, who pair surprise
	with coherence without a reference by simulating readers who guess a culprit
	in detective fiction. Because our corpus carries a gold chronology and gold
	reveal annotations, we score against a known answer rather than a proxy, and
	across all seven structural categories rather than whodunits alone.

	\paragraph{Judges and readers.}
	Automated pairwise preference is standard and its biases are catalogued:
	position, verbosity and self-enhancement \citep{zheng2023judging}, with the
	verbosity effect large enough to warrant the regression adjustment of
	\citet{dubois2024length} that removes one of our own results
	(\S\ref{sec:humanlength}). Readers are themselves barely above chance at
	spotting machine text \citep{clark-etal-2021-thats} and such studies rarely
	report enough to interpret \citep{karpinska-etal-2021-perils}, so we hold the
	texts, the question and the items fixed and ask where a judge and a reader
	cohort part company. Agreement is reported with chance-corrected coefficients
	throughout \citep{artstein-poesio-2008-survey,krippendorff2018content} and
	every test is stated against the dependence structure the design has
	\citep{dror-etal-2018-hitchhikers}.
	The closest work to our RQ5 is \citet{chiang-lee-2023-large}, who compare
	model ratings of stories with those of English teachers. HANNA and its
	successor test automatic story evaluation against human criteria
	\citep{chhun-etal-2022-human,chhun-etal-2024-language}, and
	\citet{chakrabarty2024art} find model judges misaligned with expert
	assessments of creative writing. \citet{zheng2023judging} report that
	judge--human agreement rises with the performance gap between the systems
	compared. Our four comparisons show no such ordering at the item level,
	and our judge returned ties on two comparisons that readers split about 70
	to 30. We therefore state the lesson as ``a judge's tie is not evidence of
	no difference'' and not as a threshold on effect size.

	\paragraph{Visualization.}
	Story Curves already aligns narrative and chronological scene orders in
	nonlinear films \citep{kim2018storycurves}. Figure~\ref{fig:curve} adapts its
	story curve; our corpus adds shared event descriptions, typed narrative graphs
	and model tasks to that paired-order setting. Color encodes jump direction
	rather than focal character, because our stories reach \MaxChars{} characters,
	and direction is encoded twice, by hue and by slope, which addresses the axis
	confusion the original study reports in its own participants.

	\section{Formal analysis}
	\label{sec:formal}

	This appendix collects the formal statements \S\ref{sec:setting} and
	\S\ref{sec:evaluation} summarize, and the three short results \S\ref{sec:capability}
	points at. Nothing here is assumed by the experiments; it says what the scores
	can and cannot see. \S\ref{sec:proofs} proves everything stated.

	\subsection{Definitions used throughout}
	\label{sec:formal-defs}

	Let $E=(e_1,\ldots,e_n)$ be the events of a work in chronological order and
	$\mathcal{S}_n$ the permutations of $\{1,\ldots,n\}$. A telling is
	\begin{equation}
		T_\sigma=(e_{\sigma(1)},\ldots,e_{\sigma(n)}),
		\qquad \sigma\in\mathcal{S}_n ,
		\label{eq:telling}
	\end{equation}
	so $\sigma(i)$ is the chronological rank at told position $i$, chronology is
	the identity, and the telling is nonlinear when $\sigma\neq\mathrm{id}$. Each
	work supplies a gold telling $\sigma^\star$ and a category
	$c^\star\in\mathcal{C}$ with $|\mathcal{C}|=7$.

	Writing $D(\sigma)$ for the number of told position pairs $i<j$ with
	$\sigma(i)>\sigma(j)$, forwardness is Kendall's $\tau$
	\citep{kendall1938new},
	\begin{equation}
		\tau(\sigma)=1-\frac{2D(\sigma)}{\binom{n}{2}},
		\label{eq:tau}
	\end{equation}
	and $\mathrm{LIS}(\sigma)=\ell(\sigma)/n$ is the normalized longest increasing
	subsequence, whose $\ell(\sigma)$ events need not be adjacent in the telling.
	For category $c$, let $\tau_c^-,\tau_c^+$ and $L_c^-,L_c^+$ be the minima and
	maxima of $\tau$ and LIS over the gold tellings of that category. Band hit is
	\begin{align}
		B_c&=[\tau_c^-,\tau_c^+]\times[L_c^-,L_c^+], \nonumber\\
		\mathrm{hit}(\sigma,c)
		&=\mathbf{1}\{(\tau(\sigma),\mathrm{LIS}(\sigma))\in B_c\}.
		\label{eq:band}
	\end{align}
	Agreement with the gold telling of a model order $\hat{\sigma}$ is
	$\tau((\sigma^\star)^{-1}\circ\hat{\sigma})$.

	\paragraph{What the pair $(\tau,\mathrm{LIS})$ cannot separate.}
	Band hit reads two coordinates, and distinct structures share them. With
	$n = 6$, the tellings $(1,2,5,6,3,4)$ and $(1,3,5,2,6,4)$ both have
	$\tau = 7/15$ and $\mathrm{LIS} = 4/6$, so no band can tell them apart,
	yet the first has two hops and the second five. Enumerating every telling
	at $n = \CollideN$ gives \CollideCells{} distinct $(\tau,\mathrm{LIS})$
	cells, \CollideHops{} of which hold more than one hop count and
	\CollideLabels{} more than one label under the operational tests of
	\S\ref{sec:prompts}, up to three of the seven in a single cell. Two
	summary statistics therefore cannot identify seven semantically defined
	shapes, which is why band hit is reported as resemblance to published
	practice and never as a shape test, and why
	Tables~\ref{tab:category} and~\ref{tab:grapharm} print agreement with the
	gold telling beside it.

	The five tasks of \S\ref{sec:tasks} map as
	\begin{align}
		\Task{1}:&\quad (E,c^\star)\longmapsto\hat{\sigma}\in\mathcal{S}_n,
		\nonumber\\
		\Task{2}:&\quad E\longmapsto\hat{\sigma}\in\mathcal{S}_n,
		\nonumber\\
		\Task{3}:&\quad T_{\sigma^\star}\longmapsto\hat{\pi}\in\mathcal{S}_n,
		\nonumber\\
		\Task{4}:&\quad T_{\sigma^\star}\longmapsto\hat{c}\in\mathcal{C},
		\nonumber\\
		\ProseTask{a}/\ProseTask{b}:&\quad (E,c^\star)\ \text{or}\ E\longmapsto Y,
		\label{eq:tasks}
	\end{align}
	with $Y$ story prose. \Task{3} returns indices into the presented telling arranged
	chronologically, so its target is $\pi^\star=(\sigma^\star)^{-1}$ and its two
	scores are
	\begin{equation}
		\tau_{\mathrm{rec}}=\tau(\sigma^\star\circ\hat{\pi}),
		\qquad
		\mathrm{exact}=\mathbf{1}\{\hat{\pi}=\pi^\star\}.
		\label{eq:reconstruction}
	\end{equation}

	\subsection{Hops}
	\label{sec:formal-results}

	The rest of this appendix states three results and what each one predicts.
	Write a telling of $n$ events as a permutation $\sigma$, where $\sigma(i)$ is
	the chronological rank of the event told at position $i$, so that chronology
	is the identity.
	{

		\begin{definition}[Hop, run]\label{def:hop}
			Position $i<n$ is a \emph{hop} of $\sigma$ if $\sigma(i+1)\neq\sigma(i)+1$,
			that is, if the telling does not simply continue with the next thing that
			happened. $h(\sigma)$ is the number of hops. The maximal hop-free stretches of
			$\sigma$ are its \emph{runs}, with lengths $b_1,\dots,b_{h+1}$ summing to $n$.
		\end{definition}

		A hop is the unit of nonlinearity, and a nonlinear telling is a multi-hop
		plan. At every hop the writer must decide where in chronology to go next, and
		the reader must work out where they have landed. A hop also has a price in the
		scores of \S\ref{sec:setting}, because both are arithmetic on the permutation.

		\begin{proposition}[What hops cost]\label{prop:hops}
			For every telling $\sigma$ of $n$ events with runs $b_1,\dots,b_{h+1}$,
			\begin{enumerate}\itemsep1pt\parskip0pt
				\item[(i)] $h(\sigma)=0$ if and only if $\sigma$ is chronology;
				\item[(ii)] $\mathrm{LIS}(\sigma)\ge\max_j b_j/n\ge 1/(h+1)$;
				\item[(iii)] $\tau(\sigma)\ge-1+2\sum_j\binom{b_j}{2}\big/\binom{n}{2}$,
				with equality exactly when the runs are told in decreasing chronological
				order, in which case $\mathrm{LIS}(\sigma)=\max_j b_j/n$;
				\item[(iv)] at most $(h+1)(n-1)^h$ tellings have exactly $h$ hops.
			\end{enumerate}
		\end{proposition}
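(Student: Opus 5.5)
Each part can be read straight off the run decomposition of Definition~\ref{def:hop}. A run is a maximal stretch of told positions on which $\sigma(i+1)=\sigma(i)+1$. So every run is a block of consecutive chronological ranks told forward, and $\sigma$ is determined by two things: the ordered list of run lengths $b_1,\dots,b_{h+1}$, and the order in which those chronological blocks are told.

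\emph{Parts (i) and (ii).} For (i), $h(\sigma)=0$ means $\sigma(i+1)=\sigma(i)+1$ for every $i$. Since $\sigma$ is a bijection onto $\{1,\dots,n\}$, this forces $\sigma(1)=1$ and hence $\sigma=\mathrm{id}$. The converse is immediate. For (ii), any single run is itself an increasing subsequence, so $\ell(\sigma)\ge\max_j b_j$. The second inequality is pigeonhole: $h+1$ lengths summing to $n$ have maximum at least $n/(h+1)$.

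\emph{Part (iii).} I would count inversions by pair type.
\begin{itemize}
\item A pair of positions inside the same run is never an inversion, since runs increase.
\item A pair in two different runs consists of two disjoint rank intervals, so all $b_jb_k$ cross pairs between runs $j$ and $k$ are inversions or none are.
\end{itemize}
Hence
\[
D(\sigma)\le\sum_{j<k}b_jb_k=\binom{n}{2}-\sum_j\binom{b_j}{2},
\]
and substituting into \eqref{eq:tau} gives the bound. Equality holds iff every pair of runs is inverted, that is, iff the blocks are told in decreasing chronological order. In that case the told sequence is a descending staircase of increasing blocks. An increasing subsequence cannot use two different blocks, since a later-told block lies entirely below an earlier one. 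So $\ell=\max_j b_j$.

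\emph{The subtle point in (iii).} When the blocks are told in decreasing order, one must check that the run decomposition is still the given one. Two consecutive told blocks $[a,b]$ followed by $[c,d]$ with $d<a$ can never satisfy $c=b+1$, so no hop disappears. I expect this bookkeeping, together with phrasing the equality condition so it refers to the runs of $\sigma$ itself, to be the only real care point.

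\emph{Part (iv).} I would use an encoding argument. A telling with exactly $h$ hops is determined by:
\begin{itemize}
\item its first value $\sigma(1)$;
\item the $h$ hop positions;
\item the value $\sigma(i+1)$ at each hop, which must differ from $\sigma(i)+1$.
\end{itemize}
Off the hops, each value is forced as the predecessor plus one. To reach the stated bound, I would charge $n-1$ choices to each hop target: it is a new value not equal to $\sigma(i)+1$, and it is also distinct from $\sigma(i)$. The remaining factor $h+1$ accounts for the starting value together with the hop positions.

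The cleanest way to make this bookkeeping rigorous is through the block structure. A telling is a composition of $n$ into $h+1$ parts (at most $\binom{n-1}{h}$ choices), together with an ordering of the resulting blocks in which no block is immediately followed by its chronological successor (at most $(h+1)!$ choices). This gives at most $\binom{n-1}{h}(h+1)!\le(h+1)\,(n-1)^h$, using $\binom{n-1}{h}h!\le(n-1)^h$. I would present this version, which avoids the loose charging argument.
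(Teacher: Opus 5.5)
Your proposal is correct and follows the paper's proof part by part: you use the same argument for (i) ($\sigma(i)=\sigma(1)+i-1$ with $\sigma(n)\le n$), treat a run as an increasing subsequence plus pigeonhole for (ii), use within-run concordance and all-or-nothing cross-run inversions for (iii), and use the same block encoding $\binom{n-1}{h}(h+1)!\le(h+1)(n-1)^h$ for (iv). You are right to drop the preliminary charging sketch in (iv), since a free starting value and free hop positions cannot be absorbed into the factor $h+1$; the block version you present is the paper's argument.
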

		\emph{Takeaway. Nonlinearity is paid for hop by hop. A telling with few hops
			must keep a long forward run, but it need not have a positive $\tau$. With $n = 6$, the telling
			$(4,5,6,1,2,3)$ has one hop and $\tau = -0.2$, the value (iii) gives for
			two runs of three, so (ii) and (iii) constrain run length and still allow
			a negative $\tau$. The number of tellings to
			choose among grows at most like $n^h$, and the only telling that costs no hop is
			chronology.}

		\begin{remark}
			On \Task{2} the human tellings hop at \HumanHopRate{} of their positions and the
			pooled model output at \PoolHopRateFree{}, and \NFewerHops{} of the
			\NModelsOrdering{} models hop less often than the human telling of the same
			works. When the model writes prose, where every hop must also be written,
			hops all but
			vanish ($\tau=\ProseFreeTau$). The proposition does not say why a generator
			spends hops sparingly, only that sparing hops and telling it straight are the
			same thing, which is RQ1.
		\end{remark}
	}

	{
		\begin{proposition}[The rule and the practice]\label{prop:rule}
			Let the $n$ events be partitioned into $k$ scenes of sizes $s_1,\dots,s_k$,
			each contiguous in chronology.
			\begin{enumerate}\itemsep1pt\parskip0pt
				\item[(i)] The reversal $n,\dots,1$ is the only telling with $\tau=-1$. It
				has $n-1$ hops and $\mathrm{LIS}=1/n$.
				\item[(ii)] The scene-level retrograde, which tells the scenes in reverse
				chronological order and the events of each scene forward, has $k-1$ hops,
				$\mathrm{LIS}=\max_j s_j/n$, and
				\[
				\tau\;=\;-1+2\sum_{j=1}^{k}\binom{s_j}{2}\Big/\binom{n}{2},
				\]
				which exceeds $-1$ as soon as one scene holds two events.
				\item[(iii)] Symmetrically, the identity is the only telling with $\tau=+1$,
				so a chronicle that reorders even one pair of simultaneous events leaves it.
			\end{enumerate}
		\end{proposition}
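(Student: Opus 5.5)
The plan is to reduce every claim to the inversion count $D(\sigma)$ in \eqref{eq:tau}, together with the run structure of Definition~\ref{def:hop}, and to lean on Proposition~\ref{prop:hops} wherever it already does the work.

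\emph{Part (i).} Since $\tau(\sigma)=1-2D(\sigma)/\binom{n}{2}$, we have $\tau=-1$ exactly when $D(\sigma)=\binom{n}{2}$, that is, when every pair $i<j$ is inverted. Then $\sigma$ is strictly decreasing, which forces $\sigma=(n,\dots,1)$. Every consecutive step drops by one rather than rising by one, so each of the $n-1$ positions is a hop. Any increasing subsequence of a decreasing sequence has length one, so $\mathrm{LIS}=1/n$.

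\emph{Part (iii).} This is the mirror statement. The identity $\tau=+1$ holds iff $D(\sigma)=0$, iff $\sigma$ is increasing, iff $\sigma=\mathrm{id}$. A chronicle that swaps any pair, simultaneous or not, has $D\ge1$ and so has $\tau<1$.

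\emph{Part (ii).} Let $S_1,\dots,S_k$ be the scenes in chronological order, with $S_j$ occupying ranks $a_j+1,\dots,a_j+s_j$. The scene-level retrograde tells $S_k,S_{k-1},\dots,S_1$, each scene forward.

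Hops are counted as follows. Inside a scene, consecutive ranks differ by one, so there is no hop. At a boundary, the telling goes from the last event of $S_{j+1}$ (rank $a_{j+1}+s_{j+1}$) to the first of $S_j$ (rank $a_j+1\le a_{j+1}$), which is a decrease and hence a hop. This gives exactly $k-1$ hops. The runs are then exactly the scenes, and they are told in decreasing chronological order.

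That is the equality case of Proposition~\ref{prop:hops}(iii), which yields both the $\tau$ formula and $\mathrm{LIS}=\max_j s_j/n$. To be self-contained rather than cite the equality case, I would also check this directly. Every pair of events from different scenes is inverted, and every pair within a scene is not, so
\[
D=\binom{n}{2}-\sum_j\binom{s_j}{2}.
\]
Substituting into \eqref{eq:tau} gives the formula. For LIS, an increasing subsequence cannot take events from two different scenes, because a later-told scene is chronologically earlier. So it lies inside one scene, and its length is at most $\max_j s_j$, which is attained.

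Finally, $\sum_j\binom{s_j}{2}>0$ as soon as some $s_j\ge2$, so $\tau>-1$ in that case.

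The main obstacle is mild bookkeeping in the hop count at scene boundaries. One must confirm that the drop from $S_{j+1}$ to $S_j$ is never a $+1$ step; it is not, since it is a strict decrease. One must also read ``contiguous in chronology'' as making scenes intervals of ranks, which the cross-scene inversion count needs.
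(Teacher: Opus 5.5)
Your proof is correct and follows the paper's argument essentially step for step. You handle (i) and (iii) by noting that $D=\binom{n}{2}$ forces the reversal and $D=0$ forces the identity, and (ii) by observing that the runs of the scene-level retrograde are exactly the scenes told in decreasing chronological order, i.e.\ the equality case of Proposition~\ref{prop:hops}(iii). Your explicit check that each scene boundary is a hop and your direct inversion count are somewhat more careful than the paper's one-line appeal, but the route is the same.
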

		\emph{Takeaway. The \textnormal{\textsc{retrograde}} rule of \S\ref{sec:prompts} is
			stated on events, and on events it has exactly one solution, the mechanical
			extreme. The human practice is the same rule applied one level up, on scenes,
			and reaching it takes a step the rule never states, deciding where the scenes
			are.}

		\begin{remark}
			\RetroExactRev\% of \textsc{retrograde} outputs are the reversal of (i), the
			point $(-1,1/n)$ where the violet stack sits in
			Figure~\ref{fig:plane}. No human
			work is there. Every retrograde work in the corpus keeps a forward run of at
			least \RetroRunLo{} events, and \RetroSceneExact{} of the ten are exactly the
			scene-level retrograde of (ii), whose formula returns their $\tau$ to three
			decimals. The rest sit above that value, never below it. The proposition says
			the missing step is a segmentation, not that it is hard. Our prompts
			never mark the scenes (\S\ref{sec:tasks}), so the rule the models were given
			is the event-level one, and the small effort ablation of
			\S\ref{sec:grid-notes} cannot say whether more test-time compute would
			supply the step.
		\end{remark}
	}

	{
		\subsection{Pinned pairs}
		The second half of the paper turns on scores that give partial credit, and
		one piece of arithmetic covers the no-story floor, the RCG ceiling and the
		scaffold leak.

		\begin{definition}[Pinned pair]\label{def:pinned}
			Fix a target order on $n$ events and a solver that returns an order. A pair of
			events is \emph{pinned} if the solver places it as the target does with
			probability one. Write $\lambda$ for the share of the $\binom{n}{2}$ pairs that
			are pinned and $\alpha$ for the solver's mean accuracy on the rest.
		\end{definition}

		\begin{lemma}[Partial credit]\label{lem:pinned}
			The solver's expected Kendall $\tau$ against the target is
			\[
			\mathbb{E}[\tau]\;=\;\lambda+(1-\lambda)(2\alpha-1).
			\]
			If the solver is exchangeable over some set of $w$ events, no two of which are
			pinned to each other, then $\Pr[\text{exact match}]\le 1/w!$.
		\end{lemma}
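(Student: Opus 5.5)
The plan is to write Kendall's $\tau$ as an average over unordered pairs and then use linearity of expectation. For each of the $\binom{n}{2}$ event pairs $p$, let $X_p\in\{+1,-1\}$ record whether the solver orders the pair as the target does ($+1$) or not ($-1$). By \eqref{eq:tau}, applied to the composed permutation that compares the solver's order with the target's, we have $\tau=1-2D/\binom{n}{2}=\binom{n}{2}^{-1}\sum_p X_p$, since each discordant pair removes $2$ from the concordant count. Taking expectations gives $\mathbb{E}[\tau]=\binom{n}{2}^{-1}\sum_p(2\Pr[X_p=1]-1)$. Pinned pairs contribute exactly $1$, and together they make up a share $\lambda$ of the sum. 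The remaining pairs contribute $2\Pr[X_p=1]-1$ each; if $\alpha$ is read as the mean of $\Pr[X_p=1]$ over unpinned pairs (equivalently, the expected accuracy on them), their average contribution is $2\alpha-1$. This yields $\lambda+(1-\lambda)(2\alpha-1)$. The only point that needs care is the interpretation of $\alpha$: it has to be the average of the per-pair correctness probabilities, not an accuracy conditional on anything. Once that is stated, the identity is exact and needs no independence between pairs.

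For the exact-match bound, exchangeability over the set $W$ of $w$ events means the following. Let $\rho$ be any permutation of the events that acts only on $W$. Then the solver's output order, relabeled by $\rho$, has the same distribution as the original output. The plan is to look at the relative order that the output induces on $W$, which is one of $w!$ possible orders. By exchangeability, all $w!$ relative orders are equally likely; more precisely, the solver's output composed with $\rho$ is equal in law to the output itself, so the induced order on $W$ is uniform. An exact match requires the induced order on $W$ to equal the target's single order on $W$, and this event has probability $1/w!$. Since exact match is contained in this event, $\Pr[\text{exact}]\le 1/w!$.

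The role of the hypothesis that no two events of $W$ are pinned to each other is to make the setup consistent. If a pair inside $W$ were pinned, exchangeability would conflict with it, because swapping the pair would have to preserve a deterministic order. So I would note that the hypothesis is what allows the exchangeability assumption to hold at all, not a step in the bound. The main obstacle is stating exchangeability precisely enough that uniformity of the induced order on $W$ follows. I would define it as invariance in law of the output under relabeling by any permutation supported on $W$, with the solver's input treated as symmetric in those events. The uniformity then follows by a one-line orbit argument: $\rho$ ranges over the group of permutations of $W$, which acts simply transitively on the $w!$ relative orders.
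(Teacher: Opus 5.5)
Your proposal is correct and follows the paper's proof essentially step for step: it uses the same $\pm1$ pair indicators and linearity of expectation for the $\tau$ identity, and the same uniformity of the induced order on the $w$ exchangeable events for the $1/w!$ bound. Your extra remarks are accurate and make explicit what the paper leaves implicit: $\alpha$ must be the mean of the per-pair correctness probabilities, and the no-pinned-pair hypothesis follows from exchangeability rather than being used in the bound.
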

		\emph{Takeaway. A partial-credit score starts at the pinned share, which is
			$\lambda$ for a solver that knows nothing else ($\alpha=\tfrac12$), and every
			further point has to be earned on the unpinned pairs. An all-or-nothing score
			is at most $1/w!$ however many pairs are pinned.}

		\begin{corollary}[The floor, the ceiling and the leak]\label{cor:pinned}
			Let $\lambda_E$ be the share of event pairs whose order the event sentences
			themselves entail, and $\lambda_G$ the share of told-order pairs fixed by the
			transitive closure of the graph's
			\textnormal{\texttt{REVEALS}} edges and question pairs. Assume throughout that the solver obeys every entailed pair, so that
			entailment and pinning coincide. Entailment alone does not make a pair
			pinned in the sense of Definition~\ref{def:pinned}.
			\begin{enumerate}\itemsep1pt\parskip0pt
				\item[(a)] A solver that reads no telling and is otherwise uninformed has
				$\mathbb{E}[\tau_{\mathrm{rec}}]=\lambda_E$, and no telling can add more
				than $1-\lambda_E$ to it.
				\item[(b)] For any solver that keeps the entailed pairs and is no worse than
				chance on the rest, $\mathbb{E}[\mathrm{RCG}]\le 1-\lambda_E$ with
				$\lambda_E$ read off the prefix, and so is any difference of two
				reconstruction scores.
				\item[(c)] A generator with pair accuracy $\alpha$ against the gold telling
				gains exactly $2\lambda_G(1-\alpha)\le 2\lambda_G$ in expected agreement
				when the graph's pinned pairs are handed to it and nothing else changes. Supplying the told
				order directly sets $\lambda=1$.
			\end{enumerate}
		\end{corollary}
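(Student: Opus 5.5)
Each part of Corollary~\ref{cor:pinned} is an instance of Lemma~\ref{lem:pinned}. For each one I have to say what the target order is, which pairs are pinned, and what the solver's accuracy $\alpha$ is on the unpinned pairs. Then I read off $\mathbb{E}[\tau]=\lambda+(1-\lambda)(2\alpha-1)$. The lemma itself comes first, and it is a linearity-of-expectation argument. By \eqref{eq:tau}, $\tau=1-2D/\binom{n}{2}$, where $D$ counts discordant pairs. So $\mathbb{E}[\tau]=1-2\,\mathbb{E}[D]/\binom{n}{2}$. Pinned pairs are never discordant. Unpinned pairs are discordant with mean probability $1-\alpha$. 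This gives $\mathbb{E}[D]/\binom{n}{2}=(1-\lambda)(1-\alpha)$, which rearranges to the stated formula.

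\textbf{Part (a).} The target is chronology and the solver gets a shuffled list. By the standing assumption, the entailed pairs are exactly the pinned ones, so $\lambda=\lambda_E$. "Otherwise uninformed" means that on each unpinned pair the solver is symmetric between the two orders, so $\alpha=\tfrac12$. The lemma then gives $\mathbb{E}[\tau_{\mathrm{rec}}]=\lambda_E$. For the second claim, any solver, with or without a telling, still obeys the entailed pairs, and its accuracy on the rest is at most $1$. So its expected $\tau$ is at most $\lambda_E+(1-\lambda_E)=1$, and the gain over the floor is at most $1-\lambda_E$.

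\textbf{Part (b).} This is the same computation applied to a difference of scores. Both scores keep the entailed pairs, and both have $\alpha\in[\tfrac12,1]$. Their difference in expectation is therefore $(1-\lambda_E)\cdot 2(\alpha_1-\alpha_2)$. Since each $\alpha$ lies in $[\tfrac12,1]$, we have $2(\alpha_1-\alpha_2)\le 1$, so the difference is at most $1-\lambda_E$. RCG is such a difference, with $\lambda_E$ computed on the prefix it scores. I expect the main obstacle to be definitional rather than mathematical. The RCG definition lives in \S\ref{sec:rcg}, and I would need to check that it really is a difference of two reconstruction $\tau$'s over the same event set. If it is, the bound is immediate.

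\textbf{Part (c).} Here the target is the gold telling. Before the graph is supplied, suppose the generator has accuracy $\alpha$ on every pair, including the $\lambda_G$ share that the graph will pin. "Nothing else changes" means that afterwards those pairs become correct with probability one, while the other pairs keep their accuracy. The per-pair contribution to $\tau$ is $2\Pr[\text{concordant}]-1$. On each newly pinned pair this rises from $2\alpha-1$ to $1$, a gain of $2(1-\alpha)$. Summing over the $\lambda_G$ share gives an expected gain of exactly $2\lambda_G(1-\alpha)$, which is at most $2\lambda_G$ because $\alpha\ge0$. Supplying the told order directly pins every pair, so $\lambda=1$ and the lemma gives $\mathbb{E}[\tau]=1$.

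The one subtlety is the word "exactly". It requires that $\alpha$ be the generator's accuracy on the pinned pairs specifically. I would state that reading explicitly, for example by assuming uniform pair accuracy.
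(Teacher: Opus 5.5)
Your proof of the lemma and your parts (a) and (c) match the paper's. For (a) the paper also instantiates Lemma~\ref{lem:pinned} with $\lambda=\lambda_E$ and $\alpha=\tfrac12$, and gets the headroom from $\tau_{\mathrm{rec}}\le 1$. For (c) it also assumes accuracy $\alpha$ on every pair before the graph is supplied, which is your uniform-accuracy reading, and subtracts $2\alpha-1$ from $\lambda_G+(1-\lambda_G)(2\alpha-1)$.

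Part (b) has a gap, and it is the one you flag but leave open. Your identity $(1-\lambda_E)\cdot 2(\alpha_1-\alpha_2)$ needs both reconstruction scores to be taken over the same pairs with the same pinned share. RCG does not satisfy this. In \S\ref{sec:rcg}, $\tau_{\mathrm{rec}}(T_{<m})$ ranges only over the events the prefix narrates, while $\tau_{\mathrm{rec}}(T)$ ranges over all $n$ events. The two terms therefore have different entailed shares and different pair counts, and their difference does not factor through a single $\lambda_E$. Your ``if it is, the bound is immediate'' therefore leaves (b) unproved.

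The missing step is to stop combining the two terms:
\begin{itemize}
\item Bound the first term trivially, $\tau_{\mathrm{rec}}(T)\le 1$.
\item Apply the lemma only to the subtracted term. Let $\lambda_E^{<m}$ be the entailed share among the pairs the prefix is scored on. A prefix solver that keeps those pairs and has $\alpha\ge\tfrac12$ on the rest satisfies $\mathbb{E}[\tau_{\mathrm{rec}}(T_{<m})]\ge\lambda_E^{<m}$.
\item Hence $\mathbb{E}[\mathrm{RCG}]\le 1-\lambda_E^{<m}$.
\end{itemize}
This is what ``$\lambda_E$ read off the prefix'' means in the statement. It also explains why the extension to other differences needs only that the \emph{subtracted} term keep the entailed pairs: no shared event set and no assumption on the first term's solver. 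Your own algebra used only $\alpha_1\le 1$ and $\alpha_2\ge\tfrac12$. So the repair is to use those two facts separately, one per term, rather than through a common $\lambda_E$.
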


		\begin{remark}
			(a) reads the floor of \S\ref{sec:floor} as the entailed share. Here $\lambda_E$ covers anything that pins a pair without the telling,
			whether entailment between the sentences or, for a recognizable work,
			prior knowledge of it, and the corollary cannot tell the two apart. A no-story
			solver reaches \DedTauFloor{}, so the telling can add at most
			\TellingWorthMax{}, and it adds \TellingWorth{}. Removing every date moved the
			floor by \DedDelta{} (Table~\ref{tab:dedate}), so dates carry almost no pins,
			and the exact-match floor of \DedExactFloorPct\% is the $1/w!$ term. A solver
			above chance on the unpinned pairs makes the entailed share smaller than the
			floor and the headroom larger, so the floor bounds $\lambda_E$ from above. (b)
			is why \S\ref{sec:worth} declines to interpret RCG, whose ceiling of
			\RcgCeiling{} is computed exactly as $1-\tau_{\mathrm{rec}}(T_{<m})$. (c) with
			$\lambda_G=\PinnedPct\%$ caps the graph's free agreement with the gold telling
			at \GraphLeakMax{} $\tau$. That cap is on $\tau$ alone. The pinned pairs are
			the ones a reveal turns on, so their weight in a reader's preference need not
			be their weight in $\tau$, which is why \S\ref{sec:graph} controls with a
			condition that supplies the order rather than with the share.
			Two limits apply when reading the corollary empirically. First, as noted
			above, a measured no-telling score bounds $\lambda_E$ from above but does
			not identify it, because a solver above chance on the unpinned pairs
			reaches the same score with a smaller entailed share. Second, (c) holds
			only while accuracy on the unpinned pairs is unchanged, so a graph that
			also changes how a model orders everything else can move agreement by more
			or less than $2\lambda_G(1-\alpha)$.
		\end{remark}
	}
	\section{Supporting tables and figures}
	\label{sec:tables}

	\begin{table*}[t]
		\centering\footnotesize
		\setlength{\tabcolsep}{5pt}
		\newcommand{\rqcell}[2]{\parbox[t]{#1\textwidth}{\raggedright #2\strut}}
		\fitfull{%
			\begin{tabular}{l l l}
				\toprule
				& \textbf{What we found} & \textbf{What to be careful about} \\
				\midrule
				RQ1 & \rqcell{0.40}{Every model tells a story more chronologically than its
				published telling; cued for a shape, band hit is \PoolBandHitPct\%, and
				the two misses are the literal solutions of rules stated on events.}
				& \rqcell{0.43}{The reference works were chosen for nonlinearity, both free tasks list
				events chronologically, and no prompt marks scenes. Band hit is negatively
				associated with reader preference, partly through condition
				(\S\ref{sec:bandvalid}).} \\
				RQ2 & \rqcell{0.40}{Reconstruction reaches
				$\tau_{\mathrm{rec}} = \PoolTauRec$; naming the shape reaches
				\PoolClassifyPct\%, with \textsc{analepsis} a sink.}
				& \rqcell{0.43}{\DedTauShare\% of that $\tau$ is reachable with the story
				deleted (by entailment or prior knowledge of the work), and readers
				reach only \HChronTau{}: report the floor, prefer exact match.} \\
				\keyrow
				RQ3 & \rqcell{0.40}{The graph raises compliance from \GZeroBandPct\% to
				\GOneBandPct\%, almost all in one model, and readers prefer it to a
				supplied told order \HGOrderBetterPct\% of the time.}
				& \rqcell{0.43}{The judge saw no effect, and the two prompts differ in more than the
				graph. The edge over an event list is a length effect, and readers prefer
				every one-layer ablation to the full graph.} \\
				RQ4 & \rqcell{0.40}{Cued tellings lose to a model's own free telling
				under the judge (\ProseFreeWins--\ProseCuedWins) and readers
				(\HCuedBetterInvPct\%), even for \textsc{chronicle}. As scored, the
				reveal comes after fewer than three events in \EarlyGZeroPct\% of stories.}
				& \rqcell{0.43}{Two scoring cases inflate the early rate (\EarlyCleanGZeroPct\% without
				them). Retrospective coherence gain is capped
				near \RcgCeiling{} by the same floor.} \\
				\keyrow
				RQ5 & \rqcell{0.40}{The judge matches \HNReaders{} readers on the one
				large effect, and readers overturn both of its ties
				($\kappa = \HJHGAblKappa$ on layer removal).}
				& \rqcell{0.43}{Item-level agreement does not track effect size;
				fourteen stories, two generators in the layer-removal study, no comparison
				against human-written prose.} \\
				\bottomrule
		\end{tabular}}
		\caption{The paper at a glance. The reader study revised the
			\cswatch{blav!50}{shaded} rows.}
		\label{tab:rqsummary}
	\end{table*}

	\begin{figure}[h]
		\centering
		\includegraphics[width=\columnwidth]{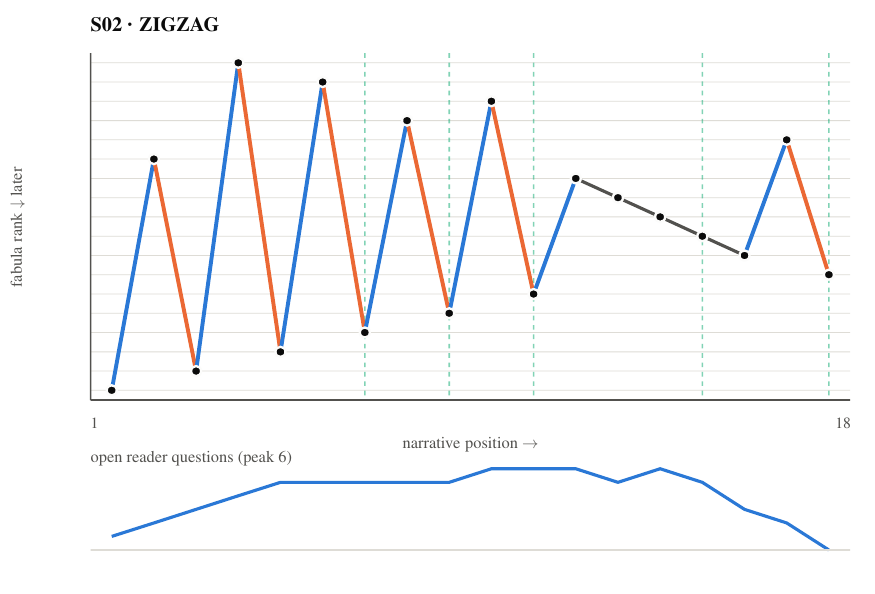}
		\caption{Story curve of one annotated work. Top: told position
			across, chronological rank up. Color and slope both encode jump direction,
			and dashed rules mark where a withheld explanation lands. Bottom: reader
			questions left open after each told position. This work opens six and
			closes all of them.}
		\label{fig:curve}
	\end{figure}

	\begin{table}[h]
		\centering\footnotesize
		\setlength{\tabcolsep}{3pt}
		\fitcol{\begin{tabular}{@{}lccccc@{}}
\toprule
\textbf{Requested}     &     \textbf{band}     &     \textbf{model}     &     \textbf{human}     &     \textbf{agree}     &     \textbf{\textit{exact}} \\
\textbf{structure}     &     \textbf{hit}     &     $\tau$     &     $\tau$     &     \textbf{gold}     &     \textbf{\textit{form.}} \\
\midrule
Zigzag & 0.826 & 0.407 & 0.423 & 0.396 & --- \\
Prolepsis & 0.721 & 0.671 & 0.755 & 0.635 & --- \\
Analepsis & 0.684 & 0.637 & 0.696 & 0.552 & --- \\
Syllepsis & 0.507 & 0.516 & 0.526 & 0.355 & --- \\
Achrony & 0.300 & 0.171 & 0.363 & 0.220 & --- \\
Chronicle & 0.053 & 0.957 & 0.912 & 0.878 & \textit{0.889} \\
Retrograde & 0.011 & $-0.973$ & $-0.781$ & 0.764 & \textit{0.936} \\
\bottomrule
\end{tabular}
}
		\caption{\Task{1} by requested shape, pooled over models
			(\S\ref{sec:cued}). The last column is the share of exact identities
			(\textsc{chronicle}) or reversals (\textsc{retrograde}), which make up the
			two near-zero band-hit rows.}
		\label{tab:category}
	\end{table}

	\FigPlane\FigGraphArm\FigHumanJudge
	\subsection{Where a reversal happens}
\label{sec:scenes}

\S\ref{sec:cued} says that published reverse chronology runs backwards between
scenes while playing forwards inside them, and that a model told to run
backwards reverses everything. That is a claim about pairs of events, so it can
be counted. For every pair in a work we ask whether the telling presents the
two in reverse chronological order, and we split the pairs by whether they sit
inside one annotated scene or across two.

The two populations separate on the within-scene axis alone
(Figure~\ref{fig:scenes}). Across the \SceneNWorks{} \textsc{retrograde} works,
published tellings reverse \SceneHumanBetween\% of between-scene pairs and
\SceneHumanWithin\% of within-scene pairs: the scene is the unit that gets
reversed, and nothing inside it is disturbed. The \SceneNCued{} cued model
outputs for the same works reverse \SceneCuedBetween\% of between-scene pairs,
which matches the human rate, and \SceneCuedWithin\% of within-scene pairs,
which does not. \SceneCuedAllRev\% of them reverse every pair in the work.

This is the mechanism behind the band-hit figure rather than a second result.
A model asked for reverse chronology does the between-scene half correctly and
then keeps going, and the extra work it does is precisely the work a published
author declines to do. The distinction is invisible to $\tau$, which counts all
pairs alike, and it is why \S\ref{sec:cued} reports the exact-permutation rate
beside the band.

\begin{figure}[t]
\centering
\includegraphics[width=\columnwidth]{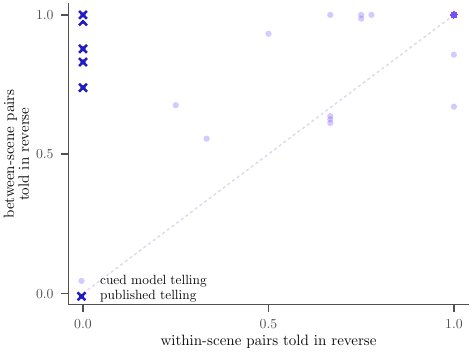}
\caption{Where a reversal happens, for the \SceneNWorks{}
\textsc{retrograde} works. Each point is one telling: the share of its
within-scene event pairs told in reverse against the share of its
between-scene pairs. Published tellings sit on the left wall, reversing scenes
and leaving their contents alone. Cued model outputs pile into the top right
corner, reversing both. The two populations differ on the horizontal axis
only.}
\label{fig:scenes}
\end{figure}

	\begin{table}
		[h]
		\centering\footnotesize
		\setlength{\tabcolsep}{2.5pt}
		\fitcol{\begin{tabular}{@{}l rr rr rr rr@{}}
\toprule
 &  \multicolumn{2}{c}{\textbf{instructed}}  &  \multicolumn{2}{c}{\textbf{graph \textit{vs.}}}  & 
\multicolumn{2}{c}{\textbf{graph \textit{vs.}}}  &  \multicolumn{2}{c}{\textbf{graph \textit{vs.}}} \\
 &  \multicolumn{2}{c}{\textbf{\textit{vs.} free}}  &  \multicolumn{2}{c}{\textbf{told order}}  & 
\multicolumn{2}{c}{\textbf{event list}}  &  \multicolumn{2}{c}{\textbf{layer removed}} \\
\cmidrule(lr){2-3}\cmidrule(lr){4-5}\cmidrule(lr){6-7}\cmidrule(l){8-9}
\textbf{Requested structure}  &  \textbf{rate}  &  $n$  &  \textbf{rate}  &  $n$  &  \textbf{rate}  &  $n$  &  \textbf{rate}  &  $n$ \\
\midrule
\textsc{zigzag} & \negcell{0.12} & 33 & \poscell{0.96} & 23 & 0.65 & 17 & 0.43 & 37 \\
\textsc{achrony} & \negcell{0.05} & 40 & 0.73 & 37 & 0.49 & 39 & 0.28 & 36 \\
\textsc{prolepsis} & \negcell{0.15} & 41 & 0.63 & 35 & 0.72 & 29 & 0.26 & 39 \\
\textsc{syllepsis} & 0.21 & 33 & \poscell{0.88} & 34 & 0.90 & 40 & 0.26 & 38 \\
\textsc{analepsis} & 0.34 & 38 & 0.74 & 47 & 0.48 & 42 & 0.16 & 32 \\
\textsc{chronicle} & 0.32 & 37 & 0.65 & 31 & 0.54 & 41 & 0.32 & 31 \\
\textsc{retrograde} & 0.48 & 42 & 0.57 & 44 & 0.49 & 51 & 0.39 & 46 \\
\bottomrule
\end{tabular}
}
		\caption{Reader preference by requested structure. \emph{rate} is
			the share of decisive judgments favoring the first-named condition. The
			instructed telling loses most on \textsc{zigzag}, \textsc{achrony} and
			\textsc{prolepsis}, loses on \textsc{chronicle} too, and draws level on
			\textsc{retrograde}. The graph's advantage over the told order is largest
			where the shape is least mechanical, and holds on \textsc{chronicle}.}
		\label{tab:humanbycat}
	\end{table}

	Table~\ref{tab:humanbycat} breaks the four reader comparisons down by requested
	structure. Two patterns stand out, and both match
	\S\ref{sec:cued}. The instructed telling loses most heavily on
	\textsc{achrony} and \textsc{zigzag} and not at all on \textsc{retrograde}:
	where the instruction reduces to a mechanical rule, a model that applies
	it literally still produces something readers find no worse than
	its own chronological telling, because a clean reversal is at least
	navigable. And the graph's advantage over a supplied told order is largest on
	\textsc{zigzag} and \textsc{syllepsis}, the two shapes whose definition turns on
	something other than a position in time. Those are exactly the shapes where a
	permutation alone underspecifies the telling and the graph's typed content has
	something left to say. Per-category counts are small, so we read the ordering
	and not the individual rates. Two rows qualify \S\ref{sec:prosepref} and \S\ref{sec:humanpref}. The
	instructed telling also loses on \textsc{chronicle} (\HByCatCuedChron{},
	$n = \HByCatCuedChronN$, exact binomial $p = \HByCatCuedChronP$ before
	correction for the 28 cells). There the instruction asks for chronological
	order, so the cued story should differ little in order from the free one,
	and part of the cost comes from the instruction rather than from
	nonlinearity. The graph also still beats the told order on
	\textsc{chronicle} (\HByCatOrderChron{}, $n = \HByCatOrderChronN$), where
	both conditions are essentially chronological and there is no order to
	justify. This points to the graph's content rather than to a reason for
	the order.

	\begin{table}
		[h]
		\centering\footnotesize
		\setlength{\tabcolsep}{3pt}
		\fitcol{\begin{tabular}{llcc}
\toprule
\textbf{Event list}     &     \textbf{Telling}     &     $\tau$     &     \textbf{exact} \\
\midrule
as written & shuffled, none & 0.849 & 0.085 \\
as written & gold told order & 0.884 & 0.202 \\
de-dated & shuffled, none & 0.844 & 0.096 \\
de-dated & gold told order & 0.878 & 0.181 \\
\bottomrule
\end{tabular}
}
		\caption{The de-dating $2\times2$ (\S\ref{sec:floor}), pooled over three strong
		models. Removing every absolute time anchor moves the no-story floor by
		only \DedDelta{} $\tau$, so the floor does not come from explicit dates.
		Prior knowledge of the works is not ruled out.}
		\label{tab:dedate}
	\end{table}

	\section{The reader study in full}
	\label{sec:human-extra}

	\S\ref{sec:human} reports the four preference comparisons and the two
	instrument checks the paper's conclusions turn on. This appendix holds the
	rest of the reader study: how far readers agree with each other and with
	themselves, how their chronology reconstruction compares with the models',
	three checks on the design, and what the single close readings add. Every
	number here is computed from the recorded responses.

	\begin{table}[t]
		\centering\footnotesize
		\setlength{\tabcolsep}{4pt}
		\fitcol{\begin{tabular}{@{}l rr cc@{}}
\toprule
 &  \multicolumn{2}{c}{\textbf{full graph}}  &   &  \\
 &  \textbf{pref.}  &  \textbf{of}  &  \textbf{rate}  &  $p$ \\
\midrule
\multicolumn{5}{@{}l}{\textit{by layer removed}} \\
\quad $-$\,\texttt{CAUSES}/\texttt{FORESHADOWS} & 15 & 54 & 0.278 & $0.0015$ \\
\quad $-$\,\texttt{pre}/\texttt{post} & 16 & 49 & 0.327 & $0.0213$ \\
\quad $-$\,\texttt{learns} & 13 & 48 & 0.271 & $0.0021$ \\
\quad $-$\,questions/\texttt{REVEALS} & 17 & 51 & 0.333 & $0.0241$ \\
\quad $-$\,participants/location & 18 & 57 & 0.316 & $0.0075$ \\
\midrule
\multicolumn{5}{@{}l}{\textit{by generator, pooled over the five layers}} \\
\quad Claude Sonnet 5 & 26 & 147 & \negcell{0.177} & $<0.0001$ \\
\quad Gemini 3.6 Flash & 53 & 112 & \warncell{0.473} & $0.64$ \\
\midrule
pooled & 79 & 259 & \negcell{0.305} & $<0.0001$ \\
\bottomrule
\end{tabular}
}
		\caption{Reader verdicts on removing one graph layer. \hd{pref.} counts decisive
		judgments for the full graph, so a rate below 0.5 favors the reduced
		graph. Every layer points the same way. The split by generator (\cNeg{}
		large effect, \cWarn{} none) limits the result.}
		\label{tab:humanabl}
	\end{table}
	\subsection{Do readers agree with each other?}
	\label{sec:humanrel}

	\begin{table}
		[t]
		\centering\footnotesize
		\setlength{\tabcolsep}{4pt}
		\fitcol{\begin{tabular}{@{}l ccc@{}}
\toprule
 &  \textbf{better}  &  \textbf{harder to}  &  \textbf{better} \\
 &  \textbf{story}  &  \textbf{follow}  &  \textbf{ending} \\
\midrule
\multicolumn{4}{@{}l}{\textit{Between readers}: Krippendorff's $\alpha$, 311 items} \\
\quad all three levels & \keycell{0.415} & 0.112 & 0.073 \\
\quad 95\% CI & {\tiny [0.359, 0.471]} & {\tiny [0.065, 0.157]} & {\tiny [0.031, 0.113]} \\
\quad decisive only & 0.595 & 0.213 & 0.131 \\
\midrule
\multicolumn{4}{@{}l}{\textit{Within a reader}: hidden repeat, $n = 294$ each} \\
\quad agrees with own answer & \keycell{0.759} & 0.619 & 0.571 \\
\quad chance at that mix & 0.395 & 0.376 & 0.361 \\
\quad Cohen's $\kappa$ & 0.601 & 0.390 & 0.331 \\
\midrule
\quad tie rate & 0.12 & 0.22 & 0.18 \\
\bottomrule
\end{tabular}
}
		\caption{Rater reliability. Between readers: nominal Krippendorff's
			$\alpha$ with unequal coders per unit \citep{krippendorff2018content}, on a
			\{favored side, other side, tie\} recoding. Within a reader: agreement on a
			hidden repeat with sides swapped. Low agreement on the two right-hand
			questions is why the paper leads with \emph{better story}.}
		\label{tab:humanrel}
	\end{table}

	Which is the better story is a question readers answer consistently enough to
	build on. Which has the better ending is not.

	\paragraph{Between readers.}
	On \emph{better story}, $\alpha = \HAlphaBetter{}$
	(\HAlphaBetterCI{} over \HAlphaBetterN{} items), rising to
	\HAlphaBetterDec{} once ties are set aside. On \emph{harder to follow} it is
	\HAlphaConf{} and on \emph{better ending} \HAlphaEnd{}
	(Table~\ref{tab:humanrel}). These are low numbers by the standards of a
	labeling task and ordinary ones for aesthetic judgment; what matters is the
	ordering, which is stable across all eight comparison kinds
	(\HAlphaKindLo--\HAlphaKindHi{} on \emph{better story}).

	\paragraph{Within a reader.}
	Once per sitting a comparison the reader had already judged came back with the
	sides swapped and no announcement. Someone answering by position rather than by
	reading scores near zero here. Readers reproduced their own earlier verdict
	\HRepeatBetterPct\% of the time on \emph{better story}
	($\kappa = \HRepeatBetterKappa$), \HRepeatConfPct\% on \emph{harder to follow}
	and \HRepeatEndPct\% on \emph{better ending}, against
	\HRepeatBetterChancePct\%, \HRepeatConfChancePct\% and
	\HRepeatEndChancePct\% expected from answering each question at the same mix
	at random (Table~\ref{tab:humanrel}). Restricted to repeats where both answers were decisive, agreement on
	\emph{better story} is \HRepeatBetterDecPct\%.

	\paragraph{Two more instrument checks.}
	Side choice leans very slightly right (\HSideA{} left, \HSideB{} right,
	\HSideTie{} ties of \HSideN{}; $p \HSidePrel$), which is small and, because sides
	are randomized per reader, favors no condition. On the shape-comprehension
	items readers scored \HLitOverall\% (\HLitCorrect{} of \HLitN{}) against a
	chance baseline of \HLitChance\%: \HLitOpen\% on whether the telling opens at
	the earliest event, \HLitEnd\% on whether it ends at the latest, and
	\HLitJumps\% on how many times it jumps backwards. That last figure is a useful
	calibration point. Counting backward jumps is the shape judgment to trust least,
	in a reader and, we would guess, in a model.

	\paragraph{What reliability buys.}
	Within-reader consistency is a reference point, and it is not a ceiling
	for a judge scored against a majority. It is \HRepeatBetterPct\% over all
	repeats and \HRepeatBetterDecPct\% where both answers were decisive, and
	\S\ref{sec:humanjudge} reports the judge's agreement on decisive items
	next to the latter figure. The right yardstick for agreement with a
	majority is how often one reader agrees with the majority of the others on
	the same items. We have not yet computed this leave-one-reader-out
	baseline (\S\ref{sec:humanproto}).

	\subsection{Reconstruction, done by a person}
	\label{sec:humanfloor}

	\S\ref{sec:floor} showed that \Task{3} keeps \DedTauShare\% of its score with the
	story deleted, and concluded that events imply their own order. The close
	readings let us put a human number beside it. After reading one story properly,
	readers dragged its shuffled events into the order they believed things
	happened, scored against the same gold chronology the models are scored on.

	They reach $\tau = \HChronTau$ (\HChronCI{} clustered on readers,
	\HChronSCI{} on stories) and recover the chronology exactly
	\HChronExact\% of the time, over \HChronN{} readings. The same
	\HNStories{} stories give the models $\tau_{\mathrm{rec}} = \HModelTauFourteen$,
	and \HModelsAboveHuman{} of \HModelsN{} models sit above the reader mean. The
	no-story floor on those stories is \HFloorFourteen{}, which is to say that a
	strong solver handed the events with the telling deleted beats a person who read
	the telling.

	This does not show that models read nonlinear narrative better than
	people. Readers worked from prose and event cards and models from the
	segmented telling, so the inputs are not comparable, and for the same
	reason it does not license a human ceiling. It does sharpen the reading of
	\S\ref{sec:floor}. Readers had strictly more information than the no-story
	solver, since they also had the telling, yet ordered the same events less
	well ($\tau = \HChronTau$ against a floor of \HFloorFourteen{}). Either
	the drag-and-drop task is noisy (12--20 items, \HChronExact\% exact), or
	the model floor draws on knowledge readers lack, such as familiarity with
	a recognizable work. Entailment alone, which careful readers
	could also use, does not explain the gap.

	One further asymmetry is worth naming. What is hard for readers is not what is
	hard for models. Readers do worst on \HChronHardest{}, the two shapes
	whose ordering rule is hardest to infer from the telling, one because it
	alternates and one because it has no rule, where they reach
	$\tau = \HChronHardestTau$ and below,
	and best on \HChronEasiest{} at $\HChronEasiestTau$. Across the seven categories
	the reader and model difficulty profiles show no rank correlation
	($\rho = \HCatRho$, $p \HCatRhoPrel$). With seven points that is
	weak evidence, but it is the right shape of check to run before treating a
	model's category-level difficulty as a proxy for anybody's.

	\subsection{Three checks on the design itself}
	\label{sec:humanchecks}

	\begin{table}
		[t]
		\centering\footnotesize
		\setlength{\tabcolsep}{3.5pt}
		\fitcol{\begin{tabular}{@{}l@{\hskip 6pt}c@{\hskip 5pt}cccc@{}}
\toprule
 &   &  \textbf{cued}  &  \multicolumn{3}{c}{\textbf{full graph against}} \\
\cmidrule(lr){3-3}\cmidrule(l){4-6}
\textbf{Reader subset}  &  $n$  &  \textbf{\textit{vs.}\ free}  &  \textbf{order}  &  \textbf{list}  &  \textbf{ablation} \\
\midrule
all readers & 71 & 0.242 & 0.721 & 0.595 & 0.305 \\
shape score $\geq$ 60\% & 68 & 0.242 & 0.722 & 0.602 & 0.307 \\
shape score $\geq$ 75\% & 54 & 0.268 & 0.725 & 0.611 & 0.304 \\
median reading time $\geq$ 120\,s & 70 & 0.245 & 0.722 & 0.590 & 0.304 \\
reads $\geq$ 1 book/month & 43 & 0.299 & 0.731 & 0.628 & 0.255 \\
has writing training & 40 & 0.309 & 0.728 & 0.642 & 0.274 \\
\bottomrule
\end{tabular}
}
		\caption{Headline rates under stricter reader subsets. \emph{order},
			\emph{list} and \emph{ablation} compare the full graph against the told
			order, the event list and the graph with one layer removed. Cells are the
			share of decisive judgments favoring the first-named condition, and $n$
			counts readers. Nothing moves by more than about \HSubsetMaxMovePts{} points,
			including the comparison that length adjustment removes.}
		\label{tab:humansubsets}
	\end{table}

	\paragraph{Did familiarity with a story help?}
	Because only \HNStories{} works have prose, a reader doing several sittings met
	the same underlying story again under different conditions, up to
	\HExpoMax{} times, and \HRepeatPairs{} of \HRepeatPairsN{} reader--story
	combinations occur more than once. Items were never repeated, so no single
	judgment is contaminated, but familiarity could plausibly make a later telling
	easier to follow. It does not. Regressing the verdict on exposure count, with
	readers as clusters, gives $\beta = \HExpoBetterBeta$
	($p \HExpoBetterPrel$) on \emph{better story} and $\beta = \HExpoConfBeta$
	($p \HExpoConfPrel$) on \emph{harder to follow}. The ``harder to follow'' rows
	are the ones this would have threatened, and they are flat.

	\paragraph{Was confidence informative?}
	Mean confidence was \HConfMean{} of 5 (sd \HConfSd{}), high throughout,
	including on items where the cohort split near evenly. It is not pure
	noise: across items, distance from an even split tracks mean confidence at
	$r = \HConfConsensusR$ ($p \HConfConsensusPrel$, \HConfConsensusN{} items). So
	confidence carries information about how hard an item was, and none we can
	detect about how good a reader is. Individual confidence should not be read as
	evidence that a difference is real.

	\paragraph{Does any reader subset change the picture?}
	No. Table~\ref{tab:humansubsets} re-runs all four comparisons on readers who
	scored at least 60\% and at least 75\% on the shape items, readers whose median
	reading time was at least two minutes, readers who finish at least a book a
	month, and readers with writing training. The largest movement is about
	\HSubsetMaxMovePts{} points, on cued against free among readers with writing
	training.
	Median reading time was \HRtMedian{}\,s per close-read text
	(interquartile \HRtIqr{}\,s), and no reader had a median under
	\HRtMinReader{}\,s, so nobody was clicking through.

	\subsection{What the close readings add}
	\label{sec:humanclose}

	\begin{table}
		[t]
		\centering\footnotesize
		\setlength{\tabcolsep}{4pt}
		\fitcol{\begin{tabular}{@{}l ccc}
\toprule
 &  {\boldmath\textbf{\GraphCondition{0} events}}  &  {\boldmath\textbf{\GraphCondition{1} full}}  &  \textbf{instructed} \\
 &  \,$+$\,shape  &  \textbf{graph}  &  {\boldmath\textbf{prose (\ProseTask{a})}} \\
\midrule
close readings ($n$) & 69 & 77 & 148 \\
\midrule
\multicolumn{4}{@{}l}{\textit{Psychological Depth Scale}, 1--5} \\
\quad authenticity & 3.61 & 3.99 & 3.22 \\
\quad empathy & 3.33 & 3.74 & 3.11 \\
\quad engagement & 3.68 & 3.91 & 3.42 \\
\quad emotional effect & 3.38 & 3.74 & 3.16 \\
\quad narrative complexity & 3.87 & 4.13 & 3.66 \\
\quad \textbf{composite} & 3.57 & \keycell{3.90} & \negcell{3.31} \\
\midrule
\multicolumn{4}{@{}l}{\textit{Chronology recovered from the prose}} \\
\quad mean $\tau$ against gold & 0.572 & 0.519 & 0.531 \\
\quad exact (\%) & 1.4 & 1.3 & 4.1 \\
\midrule
\multicolumn{4}{@{}l}{\textit{What readers reported}} \\
\quad gold questions answered (\%) & 72 & 76 & 77 \\
\quad left open (\%) & 14 & 10 & 12 \\
\quad consistency flags per reader & 0.70 & 0.73 & \negcell{0.99} \\
\quad guessed machine-written (\%) & 38 & 36 & \negcell{55} \\
\quad median reading time (s) & 179 & 188 & 217 \\
\bottomrule
\end{tabular}
}
		\caption{The \HNSittings{} close readings (instrument in
			\S\ref{sec:instrument}). \GraphCondition{0} and instructed prose share one
			prompt (\ProseTask{a}) and differ in generators: the \GraphNModels{} graph-study
			models against the \ProseNModels{} prose models, with only GLM-5.2 in
			both. Instructed prose is rated lowest, draws the most consistency
			complaints, takes longest to read, and is most often guessed
			machine-written.}
		\label{tab:humanclose}
	\end{table}

	\S\ref{sec:bandvalid} makes two claims that rest on these readings. The first
	is that band hit does not predict rated depth: regressed on condition and
	length together, it contributes $\beta = \HBandDepthBeta$
	($p \HBandDepthPrel$), which is nothing. The second is that reader-rated depth
	and nonlinearity are unrelated across these outputs, composite PDS
	correlating with $\tau$ at $r = \HDepthTauR$ and \HTopTwentyNonChrono{} of the
	twenty highest-rated readings not being effectively chronological. A reader
	who rates a text deeply does not thereby rate it linear, which is why
	\S\ref{sec:humanpref} reads as a verdict on instructed nonlinearity
	rather than on nonlinearity.

	The pairwise blocks say which of two texts a reader preferred. The close
	readings say what reading one was like (Table~\ref{tab:humanclose}), and they
	converge on the same ordering: full graph \HPdsGOne{}, events and shape
	\HPdsGZero{}, instructed prose \HPdsCued{} on composite PDS
	($\HPdsGOne$ \textit{vs.} $\HPdsCued$ at $p \HPdsGOneVsCuedPrel$,
	$d = \HPdsGOneVsCuedD$), though the last two share a prompt and differ
	in generators.
	The judge's PDS on the same texts runs above readers by $\HPdsGap$
	overall, $\HPdsGapComplex$ on narrative complexity and $\HPdsGapEngage$ on
	engagement.

	The instructed texts separate on three measures that have nothing to do with
	preference. Readers flagged \HViolCued{} consistency problems per reading
	against \HViolGZero{} and \HViolGOne{} elsewhere
	($p \HViolCuedVsGZeroPrel$ \textit{vs.} \GraphCondition{0}), guessed them
	machine-written \HMachineCuedPct\% of the time against
	\HMachineGZeroPct\% and \HMachineGOnePct\% ($p \HMachinePrel$), and spent
	\HRtCued{}\,s on them against \HRtGZero{} and \HRtGOne{}\,s. Because \GraphCondition{0} uses the same prompt as the instructed prose, these
	gaps cannot be credited to the instruction. They separate the
	\ProseNModels{} prose models from the \GraphNModels{} graph-study models,
	or two samples from one prompt. They are consistent with the preference
	split of \S\ref{sec:humanpref} but do not explain it.

	One measure does not separate the conditions. The checklist item for a
	prematurely disclosed reveal sits between \HSpoilerGZeroPct\% and
	\HSpoilerCuedPct\% across the three conditions with no detectable difference,
	where \S\ref{sec:reveal} found a suggestive automated gap. The two are not in
	conflict: the close readings cover \GraphCondition{0}, \GraphCondition{1} and the instructed
	condition, and the automated measure finds no difference between
	\GraphCondition{0} and \GraphCondition{1} either. The contrast it does find is against the supplied told order, which
	no reader
	close-read. Readers neither confirm nor contradict \S\ref{sec:reveal}.

	\begin{table*}[t]
		\centering\small
		\setlength{\tabcolsep}{5pt}
		\fitfull{\begin{tabular}{@{}l rc rc c rcc l}
\toprule
 &  \multicolumn{2}{c}{\textbf{judge}}  &  \multicolumn{2}{c}{\textbf{readers}}  &   & 
\multicolumn{3}{c}{\textbf{item-level concordance}}  &  \\
\cmidrule(lr){2-3}\cmidrule(lr){4-5}\cmidrule(lr){7-9}
\textbf{Comparison}  &  \textbf{split}  &  \textbf{rate}  &  \textbf{split}  &  \textbf{rate}  &  $\Delta$  &  $n$  &  $\kappa$  &  \textbf{95\% CI}  &  \textbf{verdict} \\
\midrule
instructed \textit{vs.}\ left alone & 19--68 & 0.218 & 64--200 & 0.242 & $+0.024$ & 67 & $+0.278$ & $[+0.01, +0.54]$ & \warncell{weak} \\
full graph \textit{vs.}\ told order & 23--22 & 0.511 & 181--70 & 0.721 & $+0.210$ & 35 & $+0.242$ & $[-0.11, +0.56]$ & \warncell{weak} \\
full graph \textit{vs.}\ event list & 31--14 & 0.689 & 154--105 & 0.595 & $-0.094$ & 31 & $+0.625$ & $[+0.28, +0.91]$ & \poscell{agrees} \\
full graph \textit{vs.}\ one layer removed & 55--50 & 0.524 & 79--180 & 0.305 & $-0.219$ & 77 & $-0.030$ & $[-0.23, +0.18]$ & \negcell{unrelated} \\
\bottomrule
\end{tabular}
}
		\caption{The judge against \HNReaders{} readers on the same texts and question.
		\hd{split} and \hd{rate} count decisive verdicts for the first-named
		condition, and $\Delta$ is readers minus judge. $\kappa$ compares the
		judge with the readers' majority on items where both were decisive, and
		the last column grades it: \cPos{} for graph against event list (the
		smallest reader effect), \cWarn{} for instructed against free (aggregate
		rates within \HGapCuedPts{} points) and for graph against told order, and
		\cNeg{} for one layer removed. Item-level agreement does not follow the
		size of the reader effect.}
		\label{tab:judgehuman}
	\end{table*}

	\section{Reporting recommendations}
	\label{sec:recommendations}

	Three of our own headline metrics needed a caveat attached once we had
	checked them. The six practices below are what we would have wanted to know
	before running this study; none costs new annotation. We follow them only in part ourselves. Our floor beside
	Table~\ref{tab:main} comes from three strong solvers rather than from each
	solver we score (1), simultaneous events are not yet an equivalence class
	(2), and we adjusted for length rather than matching on it (5).
	Limitations discusses each.

	\begin{enumerate}\itemsep1pt\parskip0pt
		\item \textbf{Publish the no-story floor beside any reconstruction score,} at
		one extra call per item. Run the solver once on the shuffled event list
		with the text deleted and print that number next to the one with the text:
		\DedTauShare\% of ours is reachable with no story at all
		(\S\ref{sec:floor}). Print a human number where one exists; ours is
		$\tau = \HChronTau$, below the floor.
		\item \textbf{Make exact match the primary reconstruction score.} Ranking by
		$\tau$ can invert the order: our weakest model gains the most $\tau$ from
		the telling and almost no exact recovery. Treat simultaneous events as an
		equivalence class, or no model will ever score on a \textsc{chronicle}.
		\item \textbf{Add a condition that supplies the target order directly.} A
		scaffold encoding discourse relations leaks the ordering it is meant to
		help produce, and ours fixes \PinnedPct\% of told-order pairs, so a gain
		over a bare baseline cannot be separated from the model having been handed
		part of the answer. That condition is also where our judge and our readers
		disagreed most, so run it even when it looks flat (\S\ref{sec:humanjudge}).
		Pair it with a condition that adds the scaffold to the supplied order,
		so that ordering information is held fixed (\S\ref{sec:humanpref}).
		\item \textbf{Score structural compliance for magnitude, not direction,} and
		print agreement with a reference telling beside any resemblance metric.
		Two of our seven shapes miss the band almost entirely because the outputs
		are the literal solution of the rule as stated (\S\ref{sec:cued}), and a
		check on direction alone scores them as successes. State a shape at the
		level where practice applies it (scenes, simultaneous events) before
		scoring its magnitude. Do not treat resemblance to published practice as a
		quality proxy, since on our texts it is negatively associated with reader
		preference (\S\ref{sec:bandvalid}).
		Before reporting either, run the scripted tellings that each shape names
		(the identity, the reversal and the scene-level reversal) through the same
		scorer. Ours show that the step models miss on \textsc{retrograde} needs
		no model, and that the answer the \textsc{chronicle} prompt asks for
		scores zero (\S\ref{sec:grid-notes}). A structural metric that no
		script can satisfy measures the specification rather than the model.
		\item \textbf{Adjust any preference comparison for length, or match on it.}
		Conditions that differ in what the model is given also differ in how much
		it writes. One regression with the word-count difference as a covariate
		\citep{dubois2024length} removed one of our three reader results and left
		the other two untouched (\S\ref{sec:humanlength}). Report the rate at which
		the longer text was chosen whichever way the adjustment comes out.
		\item \textbf{Do not read ``no difference'' from a judge as no difference.}
		Ours matched readers within \HGapCuedPts{} points on the one comparison
		readers split three to one, and returned ties on two comparisons that
		readers split about 70 to 30 ($\kappa = \HJHGAblKappa$ on one of them).
		Nothing in its output marked the difference, and item-level agreement did
		not follow the size of the reader effect. A few hundred reader judgments
		on the same items is the cheapest way to check a judge's tie.
	\end{enumerate}

	\section{What the corpus build checks}
	\label{sec:validation}

	As delineated in \S\ref{sec:corpus}, the annotation is machine-checkable and that any work
	failing a check is refused rather than emitted. Each node carries a
	subject--verb--object core, elaborating content, participants and location,
	world-state pre- and post-conditions, what the event teaches each character and
	the reader, the questions it raises or resolves, and for characters an
	archetype role from Vogler's eight \citeyearpar{vogler2007writers}, inspired by Jungian archetypes \cite{jung1969archetypesessay}. These are
	the checks.

	The anonymized twin must match its source in event count and in permutation
	exactly. It must contain no proper noun from the source anywhere, including
	inside every authored string in the graph, which is where most of the authored
	prose sits, and it must share no event sentence verbatim with the source.
	Replaying the post-conditions in chronological order must satisfy every
	precondition, so an event that would be impossible at its own story time fails
	the build. Scenes must stay contiguous in both orders. Each edge type carries
	its own constraint: \texttt{CAUSES} and \texttt{FORESHADOWS} must run forward
	in chronology, \texttt{REVEALS} must run forward in chronology and backward in
	the telling and be backed by a matching question pair, and \texttt{SHARES}
	must name what its endpoints share. Every reader question must be raised
	exactly once and resolved exactly once. Finally, a separate script re-derives
	the permutations, the metrics and the graph anchors from the prose columns
	alone, so a transcription error in a stored metric cannot survive.

	What none of this reaches is misreading. The checks cannot say whether a
	second annotator would have segmented a work into the same events or assigned
	the same structural label, and the three labels that turn on what scenes are
	about, \textsc{syllepsis}, \textsc{achrony} and the boundary between
	\textsc{analepsis} and \textsc{prolepsis}, are the most open to disagreement.

	\section{Notes on the benchmark grid}
	\label{sec:grid-notes}

	The notes below answer no research question but qualify every number
	in Table~\ref{tab:main}.

	\paragraph{The effort ablation is too small to rule out an effect.}
	Re-running \EffortNModels{} frontier models at high effort on the
	\ProseNStories{} prose works (\Task{1} and \Task{3} only) moves band hit by at most
	\EffortMaxBand{} and $\tau_{\mathrm{rec}}$ by at most \EffortMaxTauRec{},
	inconsistently in sign. A shift of \EffortMaxBand{} is larger than the gap
	between the first and the eleventh model of Table~\ref{tab:main}, and on
	\ProseNStories{} works per model we can put no useful interval on it, so the
	ablation rules out neither a gain nor a loss from test-time compute. Every
	headline figure is reported at low effort, for cost and comparability.

	\paragraph{Adjacent ranks are not separable.}
	Resampling the \NWorks{} works (4{,}000 draws) gives 95\% intervals of
	about $\pm\BandHitCIHalf$ on each model's band hit. Every pair of adjacent
	ranks in Table~\ref{tab:main} overlaps, and \BandHitCIOverlap{} of the
	\NModelsOrdering{} intervals overlap the top model's, so the table ranks
	tiers at most. Bolding the model whose mean $\tau$ is closest to the human
	mean is descriptive only, since a mixture of reversals and identities can
	match a mean without any telling resembling the human one.

	\paragraph{How much one work moves a band.}
	Each band is spanned by the extreme gold tellings of its category, so we
	recomputed every band with each work left out of its own category. Pooled
	band hit then ranges from \LowoBandLo{} to \LowoBandHi{}, and
	\LowoMovers{} works move it by more than 0.01. The largest mover is S02,
	whose $\tau$ sets the lower edge of the \textsc{zigzag} band. A held-out
	published work falls inside the band spanned by its peers \HeldOutInBand{}
	of \NWorks{} times (\HeldOutInBandPct\%), which is the cross-validated
	counterpart of the human band hit of 1 that holds by definition.

	\paragraph{What band hit sees.}
	The $(\tau,\mathrm{LIS})$ rectangle cannot see alternation on a beat
	or grouping by a non-temporal principle. \textsc{zigzag} has the highest
	band hit and, of the three shapes that need judgment, the lowest agreement
	with the gold telling (\ZigzagAgree{}), and its band spans the widest range
	of $\tau$, so a high hit there says as much about the band as about zigzags.
	Bands also depend on $n$, since $\mathrm{LIS} \geq 1/n$ with 12--20 events,
	and on category size. Agreement with the gold telling is highest for the two
	rule-shaped categories (\RetroAgree{} and \ChronAgree{}), because a perfect
	reversal of a mostly reversed telling agrees with it closely, so the
	exact-permutation rate carries the over-compliance claim of \S\ref{sec:cued}.

	\paragraph{Structural baselines, with no model.}
	We score four tellings that any script can produce exactly as we score a
	model's. The identity hits \BaseIdChronHit{} of the ten \textsc{chronicle}
	bands and the exact reversal \BaseRevRetroHit{} of the ten
	\textsc{retrograde} bands, because those bands stop at $\tau =
	\ChronBandTauHi$ and $\tau = \RetroBandTauLo$, although the two prompts
	explicitly accept chronological and fully reversed order
	(\S\ref{sec:prompts}). Both answers are therefore legal responses
	that band hit scores as misses. The scene reversal of
	Prop.~\ref{prop:rule}(ii), applied to each work's own gold scene
	partition, hits the \textsc{retrograde} band \BaseSceneRetroHit{} times in
	ten (\BaseScenePooledPct\% over all \NWorks{} works, since it fits no
	other shape's band), against \RetroBandPct\% for the \NModelsOrdering{}
	models. The step the models miss on this shape is therefore available
	without a language model, given the scene boundaries that no prompt
	supplies. Shuffling the same scenes at random, which aims at no shape,
	hits \BaseBlockPct\% pooled. That is a caution about pooled band hit
	rather than a result about scripts.

	\paragraph{How the random baseline was drawn.}
	The published \RandomBandHitPct\% is one uniform permutation per work at a
	fixed seed, so it counts 7 of \NWorks{} works and carries the sampling
	noise of a single draw. Repeating the draw \RandomDraws{} times per work
	gives \RandomMCPct\%, and most of that comes from two shapes,
	\RandomMCHi{} for \textsc{achrony} and 0.125 for \textsc{syllepsis},
	against \RandomMCLo{} for \textsc{chronicle} and \textsc{prolepsis}. A
	single pooled chance figure hides a range that wide, so per-shape band hit
	should be read against the per-shape chance rate rather than against
	\RandomBandHitPct\%.

	\paragraph{Exact outputs by shape.}
	Of the \CuedNValid{} valid cued tellings, \CuedRevN{} are exact
	reversals and \CuedIdN{} exact identities: \RetroRevN{} of \RetroValidN{}
	\textsc{retrograde} requests and \ChronIdN{} of \ChronValidN{}
	\textsc{chronicle} requests, against \OtherRevN{} reversals and \OtherIdN{}
	identities across the \OtherValidN{} requests for the other five shapes.
	Without the two rule-shaped categories, pooled band hit is
	\BandHitNoRulePct\%.

	\paragraph{The floor is a strong-solver floor.}
	The no-story floor of \S\ref{sec:floor} comes from three strong
	solvers. Recommendation~1 asks for the floor of the solver being scored,
	which we have not run for all \NModelsOrdering{} models, so ``only
	\NBeatFloor{} of \NModels{} clear the floor'' compares weaker models with a
	strong-model floor.
	\paragraph{What the indices carry, and what de-dating rules out.} Neither
	reconstruction condition numbers its events by chronological rank. \Task{3}
	numbers them $1 \ldots n$ by told position, and the no-story condition
	shuffles them and numbers them by presentation position, so in both the
	index a model is asked to order is independent of the answer. De-dating
	removes explicit time anchors and moves the floor by \DedDelta{} $\tau$.
	Each rewrite is checked for an unchanged event count, nonempty and
	distinct descriptions, and retention of detected recurring names.
	It leaves untouched causal phrasing, the choice of which beats became
	events, stereotyped event wording and prior knowledge of a recognizable
	work. The floor is therefore not explained by dates, which is a weaker
	claim than saying it is free of annotation artifacts.

	\paragraph{Refusals are real and asymmetric.}
	The three GPT-5.6 models (Sol, Terra and Luna) refuse
	\RefusalLo--\RefusalHi\% of attempts, while the other \NNonRefusers{},
	OpenAI's open-weights \textit{gpt-oss-120b} among them, refuse nothing, and
	\NRefusedShared{} of the \NWorks{} works are refused by all three even when
	the task is sorting events into chronological order. Those are the
	dark-literary end of the corpus, so this is a benchmark-design problem rather
	than a capability result. It makes the affected means non-comparable unless a
	common subset is used; repeating every comparison on the works all models
	answered leaves the ordering of Table~\ref{tab:main} essentially unchanged.
	Any literary benchmark relying on one vendor will under-sample exactly the
	works whose structure is most interesting. Open-weight models trail on all
	three axes, but roster coverage, missingness and scale all differ, so we draw
	no conclusion from that.

	\section{Retrospective coherence gain}
	\label{sec:rcg}

	\FigReveal

	\begin{table}
		[h]
		\centering\footnotesize
		\setlength{\tabcolsep}{3pt}
		\fitcol{\begin{tabular}{lcccccc}
\toprule
\textbf{Condition}     &     $n$     &     $\tau_{\mathrm{pre}}$     &     $\tau_{\mathrm{post}}$     &     \textbf{RCG}     &     \textbf{guess}     &     \textbf{early} \\
\midrule
\GraphCondition{0} no graph & 44 & 0.903 & 0.967 & 0.064 & 0.773 & \textbf{0.214} \\
\GraphCondition{1} full graph & 43 & 0.884 & 0.918 & 0.034 & 0.674 & \textbf{0.218} \\
\GraphCondition{2} told order & 52 & 0.861 & 0.886 & 0.025 & 0.865 & \textbf{0.071} \\
\textbf{H} human telling & 13 & 0.832 & 0.887 & 0.054 & 0.846 & \textbf{0.071} \\
\bottomrule
\end{tabular}
}
		\caption{Retrospective coherence gain, its paired guess task, and premature
		disclosure as scored (\textbf{early}: share of stories with fewer than
		three events before the reveal). $n$ counts the stories RCG could be
		scored on, while \textbf{early} uses every story (\EarlyGZeroN{},
		\EarlyGOneN{}, \EarlyGTwoN{} and \EarlyHumanN{}), since a story that
		discloses early cannot be scored for RCG. RCG does not separate the
		conditions, and the text qualifies the \textbf{early} rates.}
		\label{tab:rcg}
	\end{table}

	Retrospective coherence gain, which is defined as follows:
	$$\mathrm{RCG} = \tau_{\mathrm{rec}}(T) - \tau_{\mathrm{rec}}(T_{<m})$$ for a
	telling and its prefix ending just before the designated reveal $m$, asks
	whether later material makes earlier material more recoverable. The floor of
	\S\ref{sec:floor} caps it near \RcgCeiling{} by construction, as it caps any
	difference of reconstruction scores (Corollary~\ref{cor:pinned}(b)).

	Table~\ref{tab:rcg} gives every condition. The reconstruction solver throughout
	is Grok~4.5, held out from every family that generated stories here, and
	prefix segmentation uses the aligner validated in \S\ref{sec:alignvalidation}.
	The two terms are not scored on the same events.
	$\tau_{\mathrm{rec}}(T_{<m})$ ranges over the events the prefix narrates
	and $\tau_{\mathrm{rec}}(T)$ over all $n$, so a positive RCG mixes two
	things, a better ordering of the events already seen and the arrival of
	events the prefix did not contain. Restricting both terms to the prefix's
	events would separate them. We have not done so, which gives a second
	reason, besides the ceiling, to read Table~\ref{tab:rcg} as exploratory.
	RCG is paired with a five-option guess task at the cut point, with distractors
	drawn only from events the prefix has not yet narrated, so none can be
	eliminated by memory. Neither number means anything alone. Only the pair
	separates a deliberate, successful withholding from a merely confusing one.

	The definition is sound where it has room to move. \textsc{retrograde}, which
	inverts rather than withholds, has $\mathrm{RCG} = \RetroRcg$ exactly as theory
	predicts while placing its reveal latest of any category (\RetroRevealPos{} of
	the way through the telling). It is the ceiling, not the definition, that makes
	the measure inert on the other six.

	Two secondary observations survive the flat result in \S\ref{sec:worth}.
	The graph is
	the only condition that makes the withheld cause harder to anticipate (guess
	accuracy \GuessGZeroPct\% to \GuessGOnePct\%) without converting that concealment
	into payoff. And supplying the told order gives the most predictable
	condition of all
	(\GuessGTwoPct\%). A model handed the human told order narrates to that order
	rather than withholding along it.

	Our planned fix for the ceiling, de-dating the solver's event list, was
	canceled by \S\ref{sec:floor}, since the ceiling comes from semantic
	entailment rather than from dating. The replacement is to score only the event
	pairs a no-story solver gets wrong across replicates. That is an
	entailment-free pair set, identifiable automatically from runs we already have
	and needing no new annotation.

	\paragraph{Premature disclosure, checked.}
	Paired by generator, story and replicate, \GraphCondition{0} and \GraphCondition{2}
	disagree on \EarlyPairedDisc{} stories, all early under \GraphCondition{0} (exact
	McNemar $p = \EarlyPairedP$). \GraphCondition{1} against \GraphCondition{2} gives the
	same result, and \GraphCondition{0} against \GraphCondition{1} splits
	\EarlyGZeroGOneDisc{} to \EarlyGZeroGOneDisc{}. Two cases inflate the
	rates. S45, a chronicle, has no designated reveal, and the scorer records
	it as early in all four conditions (\EarlyNoRevealN{} records, including
	all of \GraphCondition{2}'s early stories and the human reference's one). In S27,
	the other chronicle, the designated explanation is the third event in
	chronology, so any chronological telling, which is what the
	\textsc{chronicle} instruction asks for, places it after two events. Every
	\GraphCondition{0} and \GraphCondition{1} story of S27 is therefore scored early, and
	neither \GraphCondition{2} nor the human telling is. Without the two works the
	rates are \EarlyCleanGZeroK{} of \EarlyCleanGZeroN{} (\GraphCondition{0}),
	\EarlyCleanGOneK{} of \EarlyCleanGOneN{} (\GraphCondition{1}), 0 of
	\EarlyCleanGTwoN{} (\GraphCondition{2}) and 0 of \EarlyCleanHumanN{} (human), and
	the paired test gives $p = \EarlyCleanP$. On the stories with a designated
	reveal, the gap between \GraphCondition{0} and \GraphCondition{2} holds at every
	threshold from two to six events before the reveal (\GraphCondition{0} from
	\EarlyCurveGZeroLo\% to \EarlyCurveGZeroHi\%, \GraphCondition{2} at most
	\EarlyCurveGTwoHi\%), and the remaining early disclosures spread over five
	shapes, too few for a per-shape reading. The \GraphCondition{2} repair is partly
	by construction, since the designated reveal is the one the human telling
	places latest and \GraphCondition{2} is handed that telling.

	\section{Structural classification}
	\label{sec:confusion}

	\begin{figure}[h]
		\centering
		\includegraphics[width=\columnwidth]{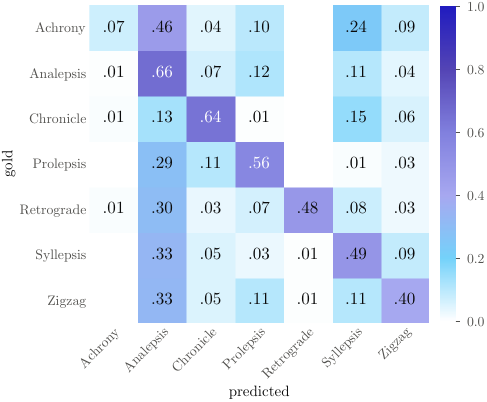}
		\caption{\Task{4} confusion, row-normalized and pooled over
			\NModelsOrdering{} models. \SinkN{} of the six other classes send
			\SinkLo--\SinkHi\% of their mass to \textsc{analepsis}, and
			\textsc{achrony} is recognized \ConfAchronySelfPct\% of the time.}
		\label{fig:confusion}
	\end{figure}

	The classification errors reported in \S\ref{sec:negatives} are given in full
	in Figure~\ref{fig:confusion}. Across models, naming a shape correlates with \Task{1} band hit at $\rho =
	\RhoClassBand$ and with \Task{3} $\tau_{\mathrm{rec}}$ at $\rho = \RhoClassRec$,
	and \Task{1} band hit and \Task{3} $\tau_{\mathrm{rec}}$ correlate at $\rho =
	\RhoBandRec$, so the tasks rank models alike. Because
	$\tau_{\mathrm{rec}}$ is dominated by the floor (\S\ref{sec:floor}),
	however, it is a weak stand-in for reading a telling. \textsc{achrony} is
	recognized \ConfAchronySelfPct\% of the time, but the definition the
	models saw describes unrecoverable chronology
	(\S\ref{sec:prompts}), which the corpus excludes. A model that
	follows that definition should rarely answer \textsc{achrony} on these
	event lists, and part of the \textsc{analepsis} sink may have the same
	cause. Two of the seven labels are recovered at better
	than 0.6. They are \textsc{analepsis}, which is also the sink, and
	\textsc{chronicle}, which is the one class identifiable without reading for
	structure at all.

	\section{Task and judge prompts}
\label{sec:prompts}

This appendix presents the task templates and operational category
definitions used by the evaluation harness. Line breaks and typography
are adjusted for presentation. An editorial expansion of the
\textsc{prolepsis} definition is identified below.

The per-work substitutions are the numbered event list, the category
name and, where noted, the graph or order payload. In the templates,
\texttt{<events>} stands for the work's events, numbered $1 \ldots N$
in chronological order, one per line; \Task{3} and \Task{4} instead receive these
events in telling order. \texttt{N} denotes the event count.

Where a template shows \texttt{1 2 ... N}, the harness writes the whole
index list out, so the model is shown $1$ through $N$ in full.
Ordering tasks decode at temperature 0 and low reasoning effort under
an 8{,}000-token cap. Prose tasks decode at temperature 0.7 under a
4{,}000-token cap.

\paragraph{The seven structural definitions.}
The definitions below reproduce the experimental wording, except for
the expanded opening of \textsc{prolepsis}. \Task{1} and \ProseTask{a} receive only the
requested category's definition, whereas \Task{4} receives all seven.
In the completed runs, the \textsc{prolepsis} definition began with a
cross-reference to \textsc{analepsis}, leaving that reference unresolved
in \Task{1} and \ProseTask{a}. We expand it here to make the intended definition
self-contained. This expansion was not supplied in the completed runs;
the experimental code retains the original wording.

\begin{description}\itemsep2pt
    \item[\textsc{chronicle}] ``The chronological ranks run
    1, 2, 3 \ldots{} n in told order. Simultaneous events appearing
    in either order still qualify.''

    \item[\textsc{analepsis}] ``One timeline dominates (it holds more
    than about three quarters of the scenes) and the remaining scenes
    are insertions from earlier than that timeline.''

    \item[\textsc{prolepsis}] \emph{With the opening cross-reference
    expanded:} ``One timeline dominates (it holds more than about
    three quarters of the scenes) and the remaining scenes are
    insertions from later than the dominant timeline. The inserted
    scene must be a real scene the reader can later place, not a
    vague premonition.''

    \item[\textsc{zigzag}] ``The scenes split into two time periods
    that alternate on a regular beat; each period progresses in its
    own consistent direction; and the paired scenes are semantically
    related (echo, explanation, or contrast). One of the two strands
    may run backward.''

    \item[\textsc{syllepsis}] ``There is no dominant timeline, but the
    scenes form blocks grouped by a nameable non-temporal principle:
    place, character, theme, or object. You must be able to state
    the principle.''

    \item[\textsc{retrograde}] ``The ranks run
    n \ldots{} 3, 2, 1 throughout, and there is no present-day scene
    the narrative keeps returning to.''

    \item[\textsc{achrony}] ``You cannot confidently assign
    chronological ranks because the text gives no dates, no ageing,
    no causal signposts, and no other ordering cues. Stop here only
    if the order is genuinely unrecoverable, not merely withheld
    until a late reveal.''
\end{description}

	\paragraph{System prompts.}
	\begin{description}\itemsep2pt
		\item[\Task{1}] ``You are a narrative-ordering engine. Reorder the supplied numbered
		events so their told order realizes the requested nonlinear category. Use
		every event index exactly once. Return only one line of space-separated
		integers. Do not return prose, JSON, labels, punctuation, analysis,
		markdown, or an explanation.''
		\item[\Task{2}] ``You are a storyteller planning how to tell a story. You choose the
		order in which the events will be narrated. Use every event index exactly
		once. Return only one line of space-separated integers. Do not return prose,
		JSON, labels, punctuation, analysis, markdown, or an explanation.''
		\item[\Task{3}] ``You are a narrative analyst. You reconstruct the true
		chronological order of a story's events from a telling that may present
		them out of order. Use every event index exactly once. Return only one line
		of space-separated integers. Do not return prose, JSON, labels, punctuation,
		analysis, markdown, or an explanation.''
		\item[\Task{4}] ``You are a narratologist. You name the way a telling departs from
		chronology. Answer with exactly one label from the supplied list and
		nothing else.''
		\item[prose] ``You are an expert fiction writer. Use only the supplied events.
		Return story prose only: no title, analysis, category name, bullets, or
		markdown.'' (All prose conditions, the graph conditions
		\GraphCondition{0}--\GraphCondition{2} and ablations
		\Ablation{1}--\Ablation{5} included.)
	\end{description}

	\paragraph{\Task{1}, cued transposition.}
	{\footnotesize\begin{verbatim}
			TARGET NONLINEAR CATEGORY
			<category>

			CATEGORY DESCRIPTION
			<its definition above>

			FABULA EVENTS IN CHRONOLOGICAL ORDER
			<events>

			Choose the told order. Output exactly N
			space-separated
			indices: a permutation of
			1 2 ... N
			No index may be missing or repeated. Output
			the indices
			only.
	\end{verbatim}}

	\paragraph{\Task{2}, free transposition.}
	{\footnotesize\begin{verbatim}
			STORY EVENTS IN CHRONOLOGICAL ORDER
			<events>

			You are going to tell this story. Decide the
			order in
			which you would narrate these events to a
			reader. Output
			exactly N space-separated indices: a
			permutation of
			1 2 ... N
			No index may be missing or repeated. Output
			the indices
			only.
	\end{verbatim}}

	\paragraph{\Task{3}, reconstruction.}
	{\footnotesize\begin{verbatim}
			SCENES IN THE ORDER THE STORY TELLS THEM
			<events>

			These scenes may be told out of chronological
			order.
			Work out the order in which the events
			actually happened
			in the story world, earliest first.
			Output exactly N space-separated indices: a
			permutation
			of
			1 2 ... N
			Each index refers to a scene above. No index
			may be
			missing or repeated.
			Output the indices only.
	\end{verbatim}}

	\paragraph{\Task{4}, classification.}
	{\footnotesize\begin{verbatim}
			SCENES IN THE ORDER THE STORY TELLS THEM
			<events>

			CATEGORY DEFINITIONS
			<the seven definitions, "LABEL: text", one per
			line>

			Which single category best describes how this
			telling
			departs from chronology?
			Answer with exactly one of: ACHRONY,
			CHRONICLE,
			RETROGRADE, ZIGZAG, ANALEPSIS, PROLEPSIS,
			SYLLEPSIS
			Output the label only.
	\end{verbatim}}

	\paragraph{\ProseTask{a}, cued prose (= \GraphCondition{0}).}
	{\footnotesize\begin{verbatim}
			TARGET NONLINEAR CATEGORY
			<category>

			CATEGORY DESCRIPTION
			<its definition above>

			STORY EVENTS IN CHRONOLOGICAL ORDER
			<events>

			Write this as a short story of 550-750 words
			whose told
			order realizes the target category. Cover
			every event
			above. Do not number the scenes and do not
			mention the
			category. Output the story prose only.
	\end{verbatim}}

	\GraphCondition{0} is this prompt verbatim. \ProseTask{a} and \ProseTask{b} ran on the
	\ProseNModels{} prose models and \GraphCondition{0} on the \GraphNModels{}
	graph-study models, which share only GLM-5.2, so \ProseTask{a} and \GraphCondition{0} outputs
	differ by generator, not by prompt.

	\paragraph{Condition names.}
	\emph{Free} (\Task{2}, \ProseTask{b}) also appears as \emph{left alone} in the tables, and
	\emph{cued} (\Task{1}, \ProseTask{a}) as \emph{instructed}. \GraphCondition{0}, the \ProseTask{a} prompt,
	appears as \emph{no graph} or \emph{events + shape}, never as events only.

	\paragraph{\ProseTask{b}, free prose.}
	{\footnotesize\begin{verbatim}
			STORY EVENTS IN CHRONOLOGICAL ORDER
			<events>

			Write this as a short story of 550-750 words.
			Cover
			every event above.
			Do not number the scenes. Output the story
			prose only.
	\end{verbatim}}

	\paragraph{\GraphCondition{1}, full graph.}
	The \ProseTask{a} prompt with one block inserted between the events and the brief, and
	the brief extended by ``Use the graph to keep the world state and the
	characters' knowledge consistent, and to decide what to withhold and when to
	pay it off'' and ``do not quote the graph''. The payload is the gold graph
	with both position fields, chronological and told, withheld.
	{\footnotesize\begin{verbatim}
			NARRATIVE GRAPH
			The graph below annotates the same events.
			`core`/`content` describe each event,
			`participants`/`location` ground it,
			`pre`/`post` are
			world-state facts that must stay consistent,
			`learns`
			records what each character and the reader
			comes to
			know, `questions`/`raises`/`resolves` mark
			what the
			telling withholds and where it pays off, and
			the typed
			edges give causes, plants, withheld
			explanations and
			shared material.
			<graph JSON: characters, events, edges>
	\end{verbatim}}
	Each ablation deletes one layer from this JSON \emph{and} the clause that
	describes it from the gloss paragraph, so an ablated prompt never describes a
	layer it does not carry. In the order of Table~\ref{tab:grapharm} and
	Figure~\ref{fig:grapharm}, \Ablation{1} drops the \texttt{CAUSES} and
	\texttt{FORESHADOWS} edges, \Ablation{2} the \texttt{pre}/\texttt{post}
	world state, \Ablation{3} the \texttt{learns} layer, \Ablation{4} the
	\texttt{questions}/\texttt{raises}/\texttt{resolves} fields with the
	\texttt{REVEALS} edges, and \Ablation{5} \texttt{participants} and
	\texttt{location}. \Ablation{1} is the one case that rewrites rather than
	deletes: the typed-edge list survives with \texttt{CAUSES} and
	\texttt{FORESHADOWS} removed, so its clause becomes ``the typed edges give
	withheld explanations and shared material'' instead of disappearing.

	\paragraph{\GraphCondition{2}, told order supplied.}
	The \ProseTask{a} prompt with this block inserted instead, and a brief asking for a story
	``that narrates the events in the told order given above''.
	{\footnotesize\begin{verbatim}
			TOLD ORDER
			Narrate the events in exactly this order:
			<permutation>
	\end{verbatim}}

	\paragraph{The judge-side aligner.}
	The extractor whose gates are reported in \S\ref{sec:alignvalidation}, applied to every
	generated story.
	The instruction ``include each number exactly once'' makes every alignment
	a permutation, so a story that omits an event, returns to one, or
	discloses it in stages still yields one position per event, taken at first
	mention. The two checks in \S\ref{sec:alignvalidation} bound disagreement between
	extractors and recovery of a supplied order. Neither bounds the error this
	projection introduces, and no output has been aligned by hand.
	{\footnotesize\begin{verbatim}
			STORY EVENTS (chronological, numbered)
			<events>

			STORY
			<the generated story>

			The story narrates these events in some order,
			possibly
			not the chronological one.
			List the event numbers in the order the STORY
			NARRATES
			them. Include each number exactly once; if the
			story
			handles an event only in passing, still place
			it where
			it is first narrated.
			Output exactly N space-separated indices and
			nothing
			else.
	\end{verbatim}}

	\paragraph{The judge, preference.}
	Every automated pairwise verdict in \S\ref{sec:graph}, \S\ref{sec:worth} and
	Table~\ref{tab:judgehuman} comes from this prompt. Each pair is judged twice
	with the positions swapped, and disagreement between the two passes is recorded
	as a tie, so a win has to survive being shown in the other order. System
	prompt: ``You compare two stories built from the same events. Reply with a
	single JSON object and nothing else:
	\verb+{"winner":"A"|"B"|"tie"}+.''
	{\footnotesize\begin{verbatim}
			Which is the better story overall -- more
			engaging,
			better told, more satisfying to read?

			STORY A
			<the first story>

			STORY B
			<the second story>
	\end{verbatim}}
	The reader instrument asks the same question in the same words, so the human and
	judge rates in Table~\ref{tab:judgehuman} answer one question rather than two.
	Readers are additionally asked which was harder to follow and whose ending tied
	things together; the judge is not, and those two rows have no automated
	counterpart.

	\paragraph{The judge, Psychological Depth Scale.}
	System prompt: ``You rate stories on the Psychological Depth Scale. Reply with
	a single JSON object and nothing else:
	\verb|{"authenticity":n,"empathy":n,|\allowbreak\verb|"engagement":n,"emotion_provocation":n,|\allowbreak\verb|"narrative_complexity":n}|
	with each $n$ an integer 1--5.'' The user message is \verb|STORY:| followed by
	the story. Readers rated the same five components on the same 1--5 scale with
	the item wording of \citet{harelcanada2024measuring}, which is what makes the
	correlation in \S\ref{sec:humanjudge} a like-for-like comparison. Judge calls
	decode at temperature 0 and low reasoning effort, under a 1{,}200-token cap
	for the depth scale and an 800-token cap for a pairwise verdict.

	\section{The reader instrument}
	\label{sec:instrument}

	\paragraph{One sitting.}
	Consent; two background items (reading habit, writing experience); a tutorial on
	reading a story curve; two blocks of three shape-comprehension items; four blind
	comparisons, of which one is an unannounced repeat of an earlier pair with the
	sides swapped; one close reading. About 45 minutes, of which roughly 24
	are spent reading, at about 6{,}000 words. Reported rates include both
	presentations of the repeated pair.

	\paragraph{The three pairwise questions, as asked.}
	\begin{description}\itemsep2pt
		\item[better] ``Which is the better story overall: more engaging, better told,
		more satisfying to read?'' Answer: Story A / Story B / about the same.
		\item[confusing] ``Which was harder to follow?'' Same three options.
		\item[ending] ``Whose ending tied things together better?'' Same three options.
	\end{description}
	Each pair also takes a confidence rating (1--5) and an optional free-text
	reason. The two story curves are revealed only after all three verdicts are
	locked, so a reader cannot read the shape off a figure and then judge the prose.

	\paragraph{The close reading.}
	One story, read in full, then: the five-item Psychological Depth Scale; a
	chronology reconstruction, in which the reader drags the story's shuffled event
	sentences into the order they believe things happened, scored against the corpus
	chronology; for each reader question the gold graph raises, whether it was never
	raised, raised and answered, or raised and left open; a six-item consistency
	checklist (unanchored scene, prematurely disclosed reveal, unmotivated jump,
	impossible knowledge, world-state error, uncaused event); and an authorship
	guess (human / machine / unsure).

	\paragraph{The shape-comprehension items.}
	Three questions answerable from the drawn permutation alone, adapted from the
	readability study of \citet{kim2018storycurves}: whether the telling opens at or
	near the earliest event (two options), how many times it jumps backwards (four
	options), and whether the last scene shown is also the last thing to happen (two
	options). They are scored separately from every preference statistic, so a
	reader's skill at reading a figure is never confused with their judgment of a
	story.

	\paragraph{Blinding and assignment.}
	The browser receives a pair as a left text and a right text with the condition
	stripped; the mapping is held server-side and rejoined only at export.
	A participant receives about 120\,KB of a 9.8\,MB pool, so the payload itself
	carries no information about the design. Assignment is recorded before any
	answer arrives, so a reader who drops out leaves a visible hole rather than
	silently rebalancing the design, and reopening the page resumes the same block.
	No reader saw the same item twice apart from the deliberate repeat.

	\paragraph{What never reached a reader.}
	A pair was dropped at build time if either text was under 250 words or
	did not end in terminal punctuation. This removed 39 pairs in the three main
	comparisons and 14 in the layer-removal set. Two of those were the reply ``I'm sorry, but I cannot assist
	with that request.'' Showing a refusal or a truncation beside a complete story
	would measure which side was broken and would give the condition away. The
	refusal rate is a finding in its own right and is reported from the model side
	in \S\ref{sec:negatives} rather than from here.

	\section{Reproduction}
	\label{sec:appendix}

	\begin{table*}
		[t]
		\centering\small
		\fitfull{\begin{tabular}{@{}lllll@{}}
\toprule
\textbf{Model}     &     \textbf{Lab}     &     \textbf{OpenRouter identifier}     &     \textbf{Access}     &     \textbf{Tier} \\
\midrule
\micon{claude}Claude Opus 5 & Anthropic & \texttt{anthropic/claude-opus-5} & closed & frontier \\
\micon{claude}Claude Sonnet 5 & Anthropic & \texttt{anthropic/claude-sonnet-5} & closed & mid \\
\micon{claude}Claude Haiku 4.5 & Anthropic & \texttt{anthropic/claude-haiku-4.5} & closed & small \\
\micon{gpt}GPT-5.6 Sol & OpenAI & \texttt{openai/gpt-5.6-sol} & closed & frontier \\
\micon{gpt}GPT-5.6 Terra & OpenAI & \texttt{openai/gpt-5.6-terra} & closed & mid \\
\micon{gpt}GPT-5.6 Luna & OpenAI & \texttt{openai/gpt-5.6-luna} & closed & small \\
\micon{gemini}Gemini 3.1 Pro & Google & \texttt{google/gemini-3.1-pro-preview} & closed & frontier \\
\micon{gemini}Gemini 3.6 Flash & Google & \texttt{google/gemini-3.6-flash} & closed & mid \\
\micon{grok}Grok 4.5 & xAI & \texttt{x-ai/grok-4.5} & closed & frontier \\
\micon{deepseek}DeepSeek V4 Pro & DeepSeek & \texttt{deepseek/deepseek-v4-pro} & open-weights & frontier \\
\micon{deepseek}DeepSeek V4 Flash & DeepSeek & \texttt{deepseek/deepseek-v4-flash} & open-weights & small \\
\micon{zhipu}GLM-5.2 & Z.ai & \texttt{z-ai/glm-5.2} & open-weights & frontier \\
\micon{nvidia}Nemotron 3 Ultra & NVIDIA & \texttt{nvidia/nemotron-3-ultra-550b-a55b} & open-weights & frontier \\
\micon{kimi}Kimi K2 0905 & Moonshot & \texttt{moonshotai/kimi-k2-0905} & open-weights & mid \\
\micon{qwen}Qwen3-235B-A22B & Alibaba & \texttt{qwen/qwen3-235b-a22b-2507} & open-weights & mid \\
\micon{gpt}gpt-oss-120b & OpenAI & \texttt{openai/gpt-oss-120b} & open-weights & mid \\
\micon{meta}Llama 4 Maverick & Meta & \texttt{meta-llama/llama-4-maverick} & open-weights & mid \\
\micon{meta}Llama 3.3 70B & Meta & \texttt{meta-llama/llama-3.3-70b-instruct} & open-weights & small \\
\micon{mistral}Mistral Small 3.2 & Mistral & \texttt{mistralai/mistral-small-3.2-24b-instruct} & open-weights & small \\
\bottomrule
\end{tabular}
}
		\caption{The \NModelsOrdering{} evaluated models with the exact identifier
			each was queried under. Access and tier are the roster's own labels, and the
			grouping is by lab. All generation ran between 6 and 10 August 2026, and the
			reader study between 8 and 17 September 2026.}
		\label{tab:roster}
	\end{table*}

	Table~\ref{tab:roster} lists every evaluated model with the identifier it was
	queried under. Hosted models change under a fixed name, so the query window
	matters. Every reply log was written between 6 and 10 August 2026, with the
	ordering grid on 6--7 August and the prose and scaffolding runs on 6--8
	August.
	The stored replies record latency, token counts and cost per row but not a
	wall-clock timestamp, so those dates come from the run files themselves and
	bound the window rather than dating a single call.

	Every table, figure and inline number in this paper is generated from the
	stored results rather than typed. The model-side tables and numbers, the
	reader-side tables and numbers, and all seven data figures are each written by
	a script that reads the recorded replies and responses directly, and every
	quantity in the prose is a macro expanded at compile time from the same
	source, so a table, a figure and the sentence beside them cannot disagree. The
	exceptions are the handful of fixed quantities that are definitions rather
	than measurements, such as chance baselines, the word-count brief and the
	event-count range, and a set of macros added in revision, recomputed
	offline from the same stored replies and defined in the preamble with the
	source of each noted beside it.

	The reader-side statistics need one further note, because two of them are not
	closed-form. The cluster bootstrap intervals and the bootstrap intervals on
	Krippendorff's $\alpha$ resample with a fixed seed, so a rebuild reproduces the
	published intervals exactly; a different seed moves the third decimal place and
	nothing that any claim in \S\ref{sec:human} turns on. Preference $p$-values are
	exact binomials and do not depend on the seed. The generalized estimating
	equation fits behind the length-adjusted rates use an exchangeable working
	correlation with readers as clusters, and refitting with stories as clusters
	changes no conclusion; both intervals are recorded alongside the rate they
	adjust. A mixed-effects model with crossed random intercepts for readers
	and stories is not yet fit.

	All generation ran through a single OpenRouter harness with the fixed decoding
	of \S\ref{sec:tasks}. Providers were not pinned and the serving provider
	was not logged, so the quantization behind the open-weights models is not
	controlled, and \texttt{gemini-3.1-pro-preview} is a preview endpoint. An output that fails the strict permutation parse is a
	failed row, never repaired, and refusals are recorded separately from parse
	failures rather than pooled with them. The \NModelsOrdering{}-model ordering
	grid cost \$\GridCost{} of tracked API spend and about five hours of wall-clock
	at six-way concurrency, so the benchmark is cheap to re-run. Everything
	downstream of the stored replies, from scoring and tables to figures and this
	PDF, rebuilds offline and deterministically. Verbatim task prompts are in
	\S\ref{sec:prompts}. The project repository is
	\url{https://github.com/Starscream-11813/LLM-nonlinear-storytelling},
	and the project page is \url{https://nonlinear-narrative.github.io/}.
	The release, distributed under the MIT License, includes event lists and
	scene mappings, labels and their rationales, the graph schema and a worked
	example, anonymized twins, selection criteria and the archetype
	distribution, stored model replies with prompts and run metadata, token
	counts and per-model costs, the reasoning-effort ablation, common-subset
	comparisons, reader responses, and the scripts that produce every table,
	figure and macro in this paper.

	\section{Proofs of \S\ref{sec:formal}}
	\label{sec:proofs}

	Throughout, $\sigma$ is a permutation of $\{1,\dots,n\}$, a pair $i<j$ of told
	positions is \emph{concordant} if $\sigma(i)<\sigma(j)$ and \emph{discordant}
	otherwise, and $\tau=(C-D)/\binom{n}{2}$ with $C$ and $D$ the numbers of
	concordant and discordant pairs. A permutation has no ties, so
	$C+D=\binom{n}{2}$ and $\tau=1-2D/\binom{n}{2}$.

	\begin{proof}[Proof of Proposition~\ref{prop:hops}]
		(i) If $h(\sigma)=0$ then $\sigma(i+1)=\sigma(i)+1$ for every $i$, so
		$\sigma(i)=\sigma(1)+i-1$, and $\sigma(n)\le n$ forces $\sigma(1)=1$. The
		converse is immediate.

		(ii) A run is a set of consecutive told positions carrying consecutive
		chronological ranks, hence an increasing subsequence, so
		$\mathrm{LIS}\ge\max_j b_j/n$. The $h+1$ runs partition $n$ positions, so the
		longest holds at least $n/(h+1)$ of them.

		(iii) A pair inside one run is concordant, so
		$D\le\binom{n}{2}-\sum_j\binom{b_j}{2}$, and substituting into
		$\tau=1-2D/\binom{n}{2}$ gives the bound. Equality holds if and only if every
		pair drawn from two different runs is discordant. Each run occupies an
		interval of chronological ranks, so this says that a run told earlier lies
		entirely later in chronology than any run told after it, which is the
		decreasing order in the statement. In that case an increasing subsequence
		cannot leave the run it starts in, so $\mathrm{LIS}=\max_j b_j/n$.

		(iv) A telling with exactly $h$ hops is determined by its runs, which are
		$h+1$ chronological intervals partitioning $\{1,\dots,n\}$, together with the
		order in which they are told. There are $\binom{n-1}{h}$ ways to cut
		$\{1,\dots,n\}$ into $h+1$ intervals and $(h+1)!$ orders to tell them in, and
		$\binom{n-1}{h}(h+1)!\le(n-1)^h(h+1)$. The map from tellings to such pairs is
		injective, and not every pair yields exactly $h$ hops, since two intervals told
		consecutively in chronological order merge into one run. The count is
		therefore an upper bound.
	\end{proof}

	\begin{proof}[Proof of Proposition~\ref{prop:rule}]
		(i) $\tau=-1$ means $D=\binom{n}{2}$, so every pair is discordant, which is
		the reversal. Its runs all have length one, so $h=n-1$, and its longest
		increasing subsequence has one element.

		(ii) The scene-level retrograde tells the scenes in decreasing chronological
		order with each scene forward, so its runs are the scenes and they meet the
		equality condition of Proposition~\ref{prop:hops}(iii). Substituting $b_j=s_j$
		gives the formula and $\mathrm{LIS}=\max_j s_j/n$, and there are $k-1$ hops
		because hops occur only at scene boundaries. The sum $\sum_j\binom{s_j}{2}$ is
		positive as soon as some $s_j\ge2$.

		(iii) $\tau=+1$ means $D=0$, which is the identity. Swapping the events at
		chronological ranks $a<b$ makes $2(b-a)-1$ pairs discordant, at least one for
		any swap, so every other telling has $\tau<1$.
	\end{proof}

	\begin{proof}[Proof of Lemma~\ref{lem:pinned}]
		For a pair $p$ let $X_p=+1$ if the solver places it as the target does and
		$X_p=-1$ otherwise, so that $\tau=\binom{n}{2}^{-1}\sum_p X_p$ and
		$\mathbb{E}[X_p]=2\Pr[p\text{ correct}]-1$. Each pinned pair contributes $+1$.
		Averaging $\mathbb{E}[X_p]$ over the unpinned pairs gives $2\alpha-1$ by the
		definition of $\alpha$. Summing,
		$\mathbb{E}[\tau]=\lambda+(1-\lambda)(2\alpha-1)$.

		For the second claim let $A$ be the set of $w$ events. Exact match requires
		the solver's order restricted to $A$ to be the target's. If the solver's output
		is exchangeable over $A$, each of the $w!$ orders of $A$ has the same
		probability, so the correct one has probability $1/w!$.
	\end{proof}

	\begin{proof}[Proof of Corollary~\ref{cor:pinned}]
		(a) Take the target to be chronology and the pinned pairs to be those the
		event sentences entail, so $\lambda=\lambda_E$. Uninformed on the rest means
		$\alpha=\tfrac12$, and the lemma gives
		$\mathbb{E}[\tau_{\mathrm{rec}}]=\lambda_E$. Every solver has
		$\tau_{\mathrm{rec}}\le1$, so reading a telling adds at most $1-\lambda_E$.

		(b) $\mathrm{RCG}=\tau_{\mathrm{rec}}(T)-\tau_{\mathrm{rec}}(T_{<m})\le
		1-\tau_{\mathrm{rec}}(T_{<m})$. Write $\lambda_E^{<m}$ for the entailed share
		among the pairs the prefix is scored on, which is the quantity the lemma
		applies to here. A prefix solver that keeps those pairs and has
		$\alpha\ge\tfrac12$ on the rest has
		$\mathbb{E}[\tau_{\mathrm{rec}}(T_{<m})]\ge\lambda_E^{<m}$, so
		$\mathbb{E}[\mathrm{RCG}]\le1-\lambda_E^{<m}$; the statement reads
		$\lambda_E^{<m}$ off the prefix rather than the whole work, and the two
		coincide only when entailment is spread evenly through the events. The same
		two lines apply to any difference of two reconstruction scores whose
		subtracted term keeps the entailed pairs.

		(c) Take the target to be the gold telling. Before the graph is supplied the
		generator has $\lambda=0$ and accuracy $\alpha$ on every pair, so
		$\mathbb{E}[\tau]=2\alpha-1$. Afterwards the graph's pairs are pinned,
		$\lambda=\lambda_G$, and the accuracy on the rest is still $\alpha$, so
		$\mathbb{E}[\tau]=\lambda_G+(1-\lambda_G)(2\alpha-1)$. The difference is
		$\lambda_G-\lambda_G(2\alpha-1)=2\lambda_G(1-\alpha)\le2\lambda_G$. Supplying
		the order
		supplies every pair, so $\lambda=1$ and $\mathbb{E}[\tau]=1$.
	\end{proof}	
\end{document}